\newcommand{\orso}{\textsc{Orso}\xspace}
\newcommand{\eureka}{\textsc{Eureka}\xspace}
\newtheorem{remark}{Remark}
\newtheorem{theorem}{Theorem}[section]
\newtheorem{lemma}[theorem]{Lemma}
\newtheorem{definition}[theorem]{Definition}
\newtheorem{assumption}[theorem]{Assumption}
\newcommand{\cartpole}{\textsc{Cartpole}\xspace}
\newcommand{\ballbalance}{\textsc{BallBalance}\xspace}
\newcommand{\ant}{\textsc{Ant}\xspace}
\newcommand{\humanoid}{\textsc{Humnaoid}\xspace}
\newcommand{\allegro}{\textsc{AllegroHand}\xspace}
\newcommand{\shadow}{\textsc{ShadowHand}\xspace}
\def\@fnsymbol#1{\ensuremath{\ifcase#1\or *\or \dagger\or \ddagger\or
   \mathsection\or \mathparagraph\or \|\or **\or \dagger\dagger
   \or \ddagger\ddagger \else\@ctrerr\fi}}
\newcommand{\printfnsymbol}[1]{%
  \textsuperscript{\@fnsymbol{#1}}%
}
\title{\orso: Accelerating Reward Design via \underline{O}nline \underline{R}eward \underline{S}election and Policy \underline{O}ptimization}
\author{%
Chen Bo Calvin Zhang\thanks{Correspondence to \texttt{cbczhang@mit.edu}, \texttt{pulkitag@mit.edu}.}~~\printfnsymbol{2}\printfnsymbol{3}, ~Zhang-Wei Hong\printfnsymbol{2}, ~Aldo Pacchiano\printfnsymbol{4}\printfnsymbol{5}, ~Pulkit Agrawal\printfnsymbol{2}\\
Improbable AI Lab, Massachusetts Institute of Technology\printfnsymbol{2} \hspace{2em} ETH Zurich\printfnsymbol{3}\\
Boston University\printfnsymbol{4} \hspace{2em} Broad Institute of MIT and Harvard\printfnsymbol{5}
}
\begin{document}

\maketitle

\begin{abstract}
Reward shaping is critical in reinforcement learning (RL), particularly for complex tasks where sparse rewards can hinder learning. However, choosing effective shaping rewards from a set of reward functions in a computationally efficient manner remains an open challenge. We propose Online Reward Selection and Policy Optimization (\orso), a novel approach that frames the selection of shaping reward function as an online model selection problem. \orso automatically identifies performant shaping reward functions without human intervention with provable regret guarantees. We demonstrate \orso's effectiveness across various continuous control tasks. Compared to prior approaches, \orso significantly reduces the amount of data required to evaluate a shaping reward function, resulting in superior data efficiency and a significant reduction in computational time (up to $8 \times$). \orso consistently identifies high-quality reward functions outperforming prior methods by more than 50\% and on average identifies policies as performant as the ones learned using manually engineered reward functions by domain experts. Code is available at \url{https://github.com/Improbable-AI/orso}.
\end{abstract}

\section{Introduction}
Reward functions are crucial in reinforcement learning (RL; \citet{sutton2018reinforcement}) as they guide the learning of successful policies. In many real-world scenarios, the ultimate objective involves maximizing long-term rewards that are not immediately available \citep{vecerik2017leveraging,rengarajanreinforcement,vasanrevisiting}, making optimization challenging. To address this, practitioners often introduce shaping rewards \citep{margolis2023walk,liu2024visual,mahmood2018benchmarking,ng1999policy,parkposition} to provide additional guidance during training.
Instead of directly maximizing the task reward ($R$), it is therefore common for the RL algorithm to maximize an easier-to-optimize shaped reward function $F$ in the hope of obtaining high performance as measured by the task reward, $R$. While shaping rewards are designed to guide the agent to complete a task, maximizing them does not necessarily solve the task. For instance, an agent tasked with finding an exit (\ie longer-term reward in the future) may be provided with shaping rewards to avoid obstacles. However, the task's success ultimately depends on reaching the exit, not just avoiding obstacles. If poorly designed, the shaped rewards $F$ can mislead the agent because prioritizing the maximization of $F$ may not maximize $R$ \citep{chen2022redeeming,agrawal2022task,hadfield2017inverse,ng1999policy}, leading to training failure or suboptimal performance.

Designing effective shaping reward functions $F$ is therefore challenging and time-consuming. It requires multiple iterations of training agents with different shaping rewards, evaluating their performance on the task reward $R$, and refining $F$ accordingly. This process is inefficient due to the lengthy training runs and because the performance measured early in training may be misleading, making it challenging to quickly iterate over different shaping rewards.

To address this challenge, we propose treating the selection of the shaping reward function as an exploration-exploitation problem and solving it using provably efficient online decision-making algorithms similar to those in multi-armed bandits \citep{auer2002nonstochastic,auer2002using} and model selection \citep{agarwal2017corralling,pacchiano2020model,pacchiano2023data,foster2019model,lee2021online}. In this setup, each shaping reward function $F$ corresponds to an arm or model. The utility of a shaping reward function is $\cJ(\pi^f)$, the expected cumulative reward of the policy $\pi^f$ under the task reward $R$. We aim to efficiently select the shaping reward function that leads to the best policy under the task reward $R$. To achieve this, we allocate computational resources strategically, exploring each reward function sufficiently to evaluate its potential while avoiding excessive exploration so that enough computational budget is left to optimize the policy using the best reward function.

This approach presents unique challenges. Unlike standard multi-armed bandit settings with stationary reward distributions, the utility of a shaping reward function in our case is nonstationary. As the agent explores new parts of the state space during training, the task reward distribution changes. Additionally, we must balance \textit{exploration} and \textit{exploitation} of a set of shaping reward functions to efficiently allocate training time among these shaping rewards without committing too early to high-performing options or wasting time on low-performing ones.

We introduce \textit{Online Reward Selection and Policy Optimization} (\orso), an algorithm that efficiently selects the best shaping reward function from a set of candidate shaping reward functions that maximize the task performance. \orso provides regret guarantees and adaptively allocates training time to each shaping reward based on a model selection algorithm at each step. Our empirical results across various continuous control tasks using the Isaac Gym simulator \citep{makoviychuk2021isaac} demonstrate that \orso identifies performant reward functions with 56\% better performance on average compared to methods like \eureka \citep{ma2023eureka} within a fixed interaction budget. Moreover, \orso consistently selects reward functions that are comparable to, and sometimes surpass, those designed by domain experts, all while using up to $8\times$ less compute.

\section{Preliminaries} \label{sec:prelim}

\paragraph{Reinforcement Learning (RL)}
In RL, the objective is to learn a policy for an agent (\eg a robot) that maximizes the expected cumulative reward during the interaction with the environment. The interaction between the agent and the environment is formulated as a Markov decision process (MDP) \citep{puterman2014markov}, $\cM = (\cS, \cA, P, r, \gamma, \rho_0)$, where the $\cS$ and $\cA$ denote state and action spaces, respectively, $P: \cS \times \cA \to \Delta_{\cS}$\footnote{$\Delta_\cS$ denotes the set of probability distributions over $\cS$.} is the state transition dynamics, $r: \cS \times \cA \to \Delta_\R$ denotes the reward function, $\gamma \in \cointerval{0, 1}$ is the discount factor, and $\rho_0 \in \Delta_\cS $ is the initial state distribution. At each timestep $t \in \N$ of interaction, the agent selects an action $a_t \sim \pi(~\cdot \mid s_t)$ based on its policy $\pi$, receives a (possibly) stochastic reward $r_t \sim r(s_t, a_t)$, and transitions to the next state $s_{t+1} \sim P(~\cdot \mid s_t, a_t)$ according to the transition dynamics. Here, $r$ is the task reward, also referred to as extrinsic reward \citep{chen2022redeeming}. RL algorithms aim to find a policy $\pi^\star$ that maximizes the discounted cumulative reward, \ie
\begin{align}
    \pi^\star \in \argmax_{\pi} \cJ(\pi) \coloneqq \E \left[ \sum_{t=0}^\infty \gamma^t r_t 
    ~\middle\vert~ 
    \begin{aligned}
        & s_0 \sim \rho_0,~ a_t \sim \pi(~\cdot \mid s_t), \\
        & r_t \sim r(s_t, a_t),~ s_{t+1} \sim P(~\cdot \mid s_t, a_t)
    \end{aligned}
    \right].
\end{align}

\paragraph{Notation}
We denote $\mathfrak{A}$ a reinforcement learning algorithm that takes an MDP $\cM = (\cS, \cA, P, r, \gamma, \rho_0)$, a reward function $f$, a number of interaction steps with the environment $N$, and an initial policy $\pi_0$ as input and returns a policy $\pi^f = \mathfrak{A}_f(\cM, N, \pi_0)$.


\section{Method: Reward Design as Sequential Decision Making} \label{sec:orso}
As previously stated, the reward function $r$ encodes the task objective but can be difficult to optimize using RL methods directly. We formalize the reward design problem as follows.
\begin{definition}[Reward Design] \label{def:reward_design}
Given $\cM$ and $\mathfrak{A}$, the reward design problem aims to find a reward function $f: \cS \times \cA \to \Delta_\R$, with $f 
\in \cR$, the space of reward functions, such that the policy $\pi^f = \mathfrak{A}_f(\cM)$ achieves an expected return under the task reward $r$, such that $\cJ \left( \pi^f \right) \approx \max_{r' \in \cR} \cJ ( \pi^{r'} ) = \cJ \left( \pi^\star \right)$.
\end{definition}
While this could be achieved by running the algorithm $\mathfrak{A}$ on every possible reward function $r' \in \cR$, this is computationally prohibitive. The reward space $\cR$ can be extremely large, and attempting to optimize over all possible rewards is impractical, especially when the available interaction budget is constrained.

To make the problem tractable, we assume access to a finite set of candidate shaping reward functions $\cR^K = \set{f^1, \ldots, f^K} \sim G(\cR)$, where $G$ is a distribution over the set of reward functions, that contains at least one near-optimal reward function and a budget of interactions $B$. If the budget does not allow training on each $f^i \in \cR^K$, we need to allocate resources to gather useful information about the quality of each candidate, while simultaneously optimizing the most promising ones. This introduces a fundamental exploration-exploitation tradeoff. On the one hand, we must explore various rewards to identify high performers; on the other, we need to exploit promising candidates to train performant policies.

\begin{figure*}[t!]
    \centering
    \includegraphics[width=\linewidth]{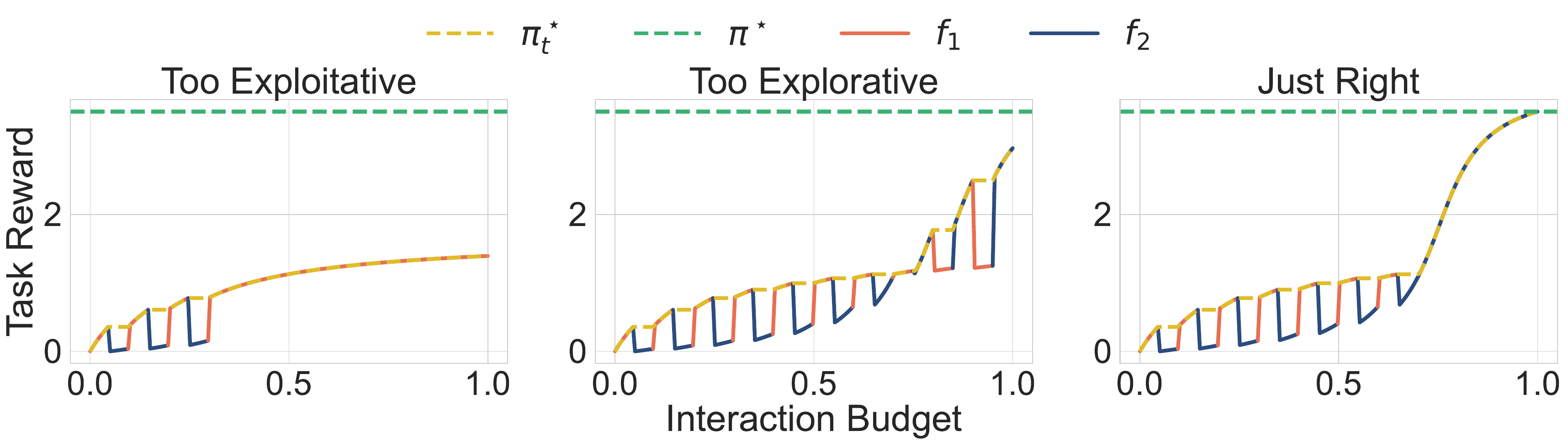}
    \caption{Comparison of three reward selection strategies given a fixed interaction budget.
    The \textcolor{OliveGreen}{green} dashed line represents the task reward of the optimal policy, \textcolor{OliveGreen}{$\pi^\star$}. The \textcolor{RedOrange}{red} and \textcolor{NavyBlue}{blue} curves show the task rewards for policies trained with reward functions \textcolor{RedOrange}{$f^1$} and \textcolor{NavyBlue}{$f^2$}, respectively. The \textcolor{YellowOrange}{yellow} curve, \textcolor{YellowOrange}{$ \pi_t^\star$}, tracks the maximum of the \textcolor{RedOrange}{red} and \textcolor{NavyBlue}{blue} curves. \textbf{Left:} This selection strategy is overly exploitative, greedily selecting the reward function that seems to perform best early on but plateaus later in training. \textbf{Center:} On the other hand, this strategy continuously switched between \textcolor{RedOrange}{$f^1$} and \textcolor{NavyBlue}{$f^2$}, exploring the suboptimal reward function too much. \textbf{Right:} The ideal strategy initially explore \textcolor{RedOrange}{$f^1$} and \textcolor{NavyBlue}{$f^2$}, but quickly latches onto the better reward function.}
    \label{fig:reward_selection_comparison}
\end{figure*}

\cref{fig:reward_selection_comparison} shows a simple example with two reward functions, $f^1$ and $f^2$, and three selection strategies. The selection strategies on the left and center are too exploitative and too explorative respectively. The exploitative selection strategy greedily spends the interaction budget on the reward function that performs best early in training, neglecting the potential of $f^2$. On the other hand, the selection strategy in the center equally explores both reward functions, even when one is clearly preferred. The exploration strategy on the right initially explores both reward functions but quickly latches onto the best one and spends the remaining interaction budget on the better reward function. The suboptimality gap at iteration $t$ of training can be measured via the \textit{regret} of the best policy so far defined as $\reg(t) = \cJ(\pi^\star) - \cJ(\pi^\star_t)$, where $\pi^\star_t \coloneqq \argmax_{\pi_\ell, \ell \in [t]} \cJ (\pi_\ell)$. The choice of $\pi_t^\star$ reflects a practical preference, as we are more interested in the best-performing solution available at a given point, rather than the most recent update. For instance, in deploying a robotic running policy, one would select the fastest policy observed thus far -- assuming the objective is to run as fast as possible.

A further discussion of the online model selection problem can be found in \cref{app:modsel}.

\subsection{\orso: \underline{O}nline \underline{R}eward \underline{S}election and Policy \underline{O}ptimization}

In this section, we introduce \orso (\textit{Online Reward Selection and Optimization}), a novel approach to \textit{efficiently} and \textit{effectively} select shaping reward functions for reinforcement learning. Our method operates in two phases: (1) reward generation and (2) online reward selection and policy optimization.

\paragraph{Reward Generation}
In the first phase of \orso, we generate a set of candidate reward functions $\cR^K$ for the online selection phase. Given an MDP $\cM = (\cS, \cA, P, r, \rho_0)$ and a stochastic generator $G$, we sample a set of $K$ reward function candidates, $\cR^K = \{f^1, \ldots, f^K \mid \forall i \in [K], ~f^i: \cS \times \cA \to \Delta_\R, ~f^i \sim G\}$, from $G$ during the reward design phase. The generator $G$ can be any distribution over the reward function space $\cR$. For instance, if the set of possible reward functions is given by a linear combination of two reward components $c_1, c_2$, which are functions of the current state and action, such that $r(s, a) = w_1 c_1(s, a) + w_2 c_2(s, a)$, then the generator $G$ can be represented by the means and variances of two normal distributions, one for each weight $w_1, w_2$.

\paragraph{Online Reward Selection and Policy Optimization}
Our algorithm for online reward selection and policy optimization is described in \cref{alg:orso}. On a high level, the algorithm proceeds as follows. Given an MDP $\cM = (\cS, \cA, P, r, \gamma, \rho_0)$, an RL algorithm $\mathfrak{A}$ and a reward generator $G$, we sample set of $K$ reward functions $\cR^K \sim G$ and initialize $K$ distinct policies $\pi^1, \ldots, \pi^K$. 
At step $t$ of the reward selection process, the algorithm selects a learner $i_t \in [K]$ according to a selection strategy. We then run algorithm $\mathfrak{A}$, updating the policy corresponding to reward function $i_t$ to obtain $\pi^{i_t}$. Policy $\pi^{i_t}$ is simultaneously evaluated under the task reward function $r$ and the necessary variables for the model selection algorithm are then updated (\eg reward estimates, reward function visitation counts, and confidence intervals). The algorithm returns the reward function $f_T^\star$ and the corresponding policy $\pi_T^\star$ that performs the best under the task reward function $r$. We discuss the implementation details in \cref{sec:experiments}.

\begin{algorithm}
\caption{\orso: Online Reward Selection and Policy Optimization}
\label{alg:orso}
\begin{algorithmic}[1]
\Require{MDP $\cM = (\cS, \cA, P, r, \gamma, \rho_0)$, algorithm $\mathfrak{A}$, generator $G$}
\State Sample $K$ reward functions $\cR^K = \set{f^1, \ldots, f^K} \sim G$
\State Initialize $K$ policies $\set{\pi^1, \ldots, \pi^K}$
\For{$t = 1, 2, \ldots, T$}
    \State Select an model $i_t \in [K]$ according to a selection strategy
    \State Update $\pi^{i_t} \gets \mathfrak{A}_{f^{i_t}} (\cM, \pi^{i_t})$
    \State Evaluate $\cJ(\pi^{i_t}) \gets \texttt{Eval}(\pi^{i_t})$
    \State Update variables (\eg reward estimates and confidence intervals)
\EndFor
\State \Return $\pi_T^\star, f_T^\star  = \argmax_{i \in [K]} \cJ(\pi^i)$
\end{algorithmic}
\end{algorithm}

\paragraph{Choice of Selection Algorithm}
While \orso is a general algorithm that can employ any selection method to pick the reward function to train on, the performance depends on the choice of algorithm.

For instance, using a simple selection method like $\epsilon$-greedy introduces an element of exploration by occasionally selecting a random reward function (with probability $\epsilon$), but it risks overcommitting to a seemingly promising reward function early on. This can lead to suboptimal performance if the chosen reward function causes the task performance to plateau in the long run. However, greedier methods, such as $\epsilon$-greedy, can achieve lower regret if they commit to the optimal reward function early in the process. These methods are particularly effective when early performance signals are strong indicators of long-term success.

However, if initial performance is not a reliable predictor of future outcomes, these greedy approaches may struggle, as they risk prematurely locking onto suboptimal rewards. In contrast, more exploratory algorithms like the exponential-weight algorithm for exploration and exploitation (Exp3) \citep{auer2002nonstochastic} maintain a broader search, potentially discovering better rewards in the long run, especially in environments where early signals are less informative. We empirically validate different choices of selection algorithms in \cref{sec:experiments}.


\section{Theoretical Guarantees} \label{sec:theory}

In this section, we provide regret guarantees for \orso with the Doubling Data-Driven Regret Balancing (D$^3$RB) algorithm by \citet{pacchiano2023data}. A discussion of the intuition behind the D$^3$RB algorithm and the full pseudo-code for \orso with D$^3$RB is provided in \cref{app:orso_d3rb}.

We first introduce some useful definitions for our analysis.
\begin{definition}[Definition 2.1 from \citet{pacchiano2023data}]
The regret scale of learner $i$ after being played $t$ times is $\frac{\sum_{\ell=1}^t \reg (\pi_{(\ell)}^i)}{\sqrt{t}}$ where $\reg (\pi_{(\ell)}^i) = \cJ (\pi^\star) - \cJ (\pi_{(\ell)}^i)$ and $\pi^i_{(\ell)} = \mathfrak{A}_{f^i}(\cM, \ell)$ in the reward design problem.

For a positive constant $d_{\min} > 0$, the regret coefficient of learner $i$ after being played for $t$ rounds is $d_{(t)}^i = \max\{d_{\min}, \sum_{\ell=1}^t \reg (\pi_{(\ell)}^i )/\sqrt{t}\}$. That is, $d^i_{(t)} \geq d_{\min}$ is the smallest number such that the incurred regret is bounded as $\sum_{\ell=1}^t \reg( \pi_{(\ell)}^i ) \leq d^i_{(t)} \sqrt{t}$.
\end{definition}
\citet{pacchiano2023data} use $\sqrt{t}$ as this is the most commonly targeted regret rate in stochastic settings. The main idea underlying our regret guarantees is that the internal state of all suboptimal reward functions is only updated up to a point where the regret equals that of the best policy so far.

We assume there exists a learner that monotonically dominates every other learner.
\begin{assumption} \label{ass:monotonic_best_learner}
There is a learner $i_\star$ such that at all time steps, its expected sum of rewards dominates any other learner, \ie $u_{(t)}^{i_\star} \geq u_{(t)}^{i}$, for all $i \in [K], t \in \N$ and such that its average expected rewards are increasing, \ie $\frac{u_{(t)}^{i_\star}}{t} \leq \frac{u_{(t+1)}^{i_\star}}{t+1}, \quad \forall t \in \N$. This is equivalent to saying that $d_{(t)}^{i_\star} \geq d_{(t+1)}^{i_\star}$, for all $t \in \N$.
\end{assumption}
\cref{ass:monotonic_best_learner} guarantees that the cumulative expected reward of the optimal learner $i_\star$ is always at least as large as the cumulative expected reward of any other learner and that its average performance increases monotonically.

Following the notation of \citet{pacchiano2023data}, we refer to the event that the confidence intervals for the reward estimator are valid as $\cE$.
\begin{definition}[Definition 8.1 from \citet{pacchiano2023data}]
    We define the event $\cE$ as the event in which for all rounds $t \in \N$ and learners $i \in [K]$ the following inequalities hold
    \begin{align}
        -c \sqrt{n_t^i \ln \frac{K \ln n_t^i}{\delta}} \leq \widehat{u}_t^i - u_t^i \leq c \sqrt{n_t^i \ln \frac{K \ln n_t^i}{\delta}}
    \end{align}
    for the algorithm parameter $\delta \in \ointerval{0, 1}$ and a universal constant $c > 0$, where $n_t^i = \sum_{\ell = 1}^t \mathbbm{1}(i_\ell = i)$.
\end{definition}

Then we can refine Lemma 9.3 from \citet{pacchiano2023data} in the case where \cref{ass:monotonic_best_learner} holds.
\begin{restatable}{lemma}{baselearnerregret} \label{lem:base_learner_regret}
Under event $\cE$ and \cref{ass:monotonic_best_learner}, with probability $1 - \delta$, the regret of all learners $i$ is bounded in all rounds $T$ as
\begin{align}
    \sum_{t=1}^{n_T^i} \reg(\pi_{(t)}^i) \leq 6 d_T^{i_\star} \sqrt{n_T^{i_\star} + 1} + 5c \sqrt{(n_T^{i_\star} + 1) \ln \frac{K \ln T}{\delta}},
\end{align}
where $d_T^{i_\star} = d_{\left(n_T^{i_\star}\right)}^{i_\star}$.
\end{restatable}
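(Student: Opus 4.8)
The plan is to follow the data-driven regret balancing analysis of \citet{pacchiano2023data}, specializing Lemma 9.3 to the case where \cref{ass:monotonic_best_learner} holds, so that the role played abstractly by the ``best regret coefficient'' can be carried out concretely by $i_\star$ evaluated at its final play count. Throughout I work on the event $\cE$, so that all confidence widths $c\sqrt{n_t^i\ln(K\ln n_t^i/\delta)}$ are simultaneously valid; this is where the probability $1-\delta$ and the $\ln(K\ln T/\delta)$ factor in the statement originate. The backbone of the argument is the identity $\sum_{t=1}^{n_T^i}\reg(\pi_{(t)}^i)=n_T^i\,\cJ(\pi^\star)-u_{n_T^i}^i$, which reduces the claim to lower bounding the true cumulative return $u_{n_T^i}^i$ of learner $i$.

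First I would pass from true to estimated returns using $\cE$, writing $u_{n_T^i}^i \ge \widehat u_T^i - c\sqrt{n_T^i\ln(\ldots)}$, so that it suffices to lower bound the estimate $\widehat u_T^i$. The D$^3$RB selection rule plays, at each round, the learner of smallest putative regret $d_t^i\sqrt{n_t^i}$, and its doubling test guarantees on $\cE$ that every learner's guessed coefficient never exceeds twice its true coefficient, since once a guess reaches the true value the test stops firing and the doubling halts. Combining the balancing selection with the state of the test then yields, for learner $i$, an inequality relating its per-round upper-confidence value (inflated by the putative regret $d_t^i\sqrt{n_t^i}$) to the per-round lower-confidence value of any other learner, which I specialize to $j=i_\star$.

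The third step converts $i_\star$'s estimate into a statement about $\cJ(\pi^\star)$ and its regret coefficient. Using the dominance $u_{(t)}^{i_\star}\ge u_{(t)}^i$ from \cref{ass:monotonic_best_learner}, the bound $\sum_{\ell=1}^{n}\reg(\pi_{(\ell)}^{i_\star}) \le d_{(n)}^{i_\star}\sqrt{n}$, and $\cE$, I would show that the per-round lower-confidence value of $i_\star$ is at least $\cJ(\pi^\star)-(d_{(n_T^{i_\star})}^{i_\star}+2c\sqrt{\ln(\ldots)})/\sqrt{n_T^{i_\star}}$. Substituting back cancels the $n_T^i\,\cJ(\pi^\star)$ terms and leaves the regret of $i$ controlled by three pieces: its own putative regret $d_t^i\sqrt{n_t^i}$, the confidence widths, and $n_T^i$ times the per-round correction of $i_\star$. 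The balancing rule then gives $d_t^i\sqrt{n_t^i}\lesssim d_T^{i_\star}\sqrt{n_T^{i_\star}+1}$ and $n_T^i/\sqrt{n_T^{i_\star}}\lesssim \sqrt{n_T^{i_\star}+1}$, after which the monotonicity half of \cref{ass:monotonic_best_learner} ($d_{(t)}^{i_\star}$ non-increasing in $t$) lets me replace every intermediate coefficient $d_{(\cdot)}^{i_\star}$ by the single final value $d_T^{i_\star}$. Collecting constants yields the claimed $6 d_T^{i_\star}\sqrt{n_T^{i_\star}+1} + 5c\sqrt{(n_T^{i_\star}+1)\ln(K\ln T/\delta)}$.

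I expect the main obstacle to be the second and fourth steps taken together: verifying on $\cE$ that the doubling keeps each guessed coefficient within a factor two of the true one (so the test never fires spuriously for $i_\star$), and then chaining the balancing inequalities across the distinct rounds at which different learners were last selected while carrying the $+1$ off-by-one terms cleanly. The monotonicity assumption is precisely what makes the final substitution legitimate: without it one would be left with a time-varying $\max_t d_{(t)}^{i_\star}$ rather than the clean $d_T^{i_\star}$, and the constants $6$ and $5c$ would not close. The remainder is routine manipulation of the confidence widths and of $\sqrt{n+1}$ against $\sqrt{n}$.
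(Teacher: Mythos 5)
There is a genuine gap, and it sits exactly where you flagged your "main obstacle." Your step two invokes the generic D$^3$RB property that each guessed coefficient stays within a constant factor of its \emph{true} coefficient. That property is what underlies Lemma 9.3 of \citet{pacchiano2023data} and it only yields the weaker bound scaling with $\bigl(\bar d^{i_\star}_T\bigr)^2\sqrt{T}$, where $\bar d^{i_\star}_T=\max_{\ell\le T}d^{i_\star}_\ell$ -- precisely the bound this lemma is meant to improve. The paper's proof instead establishes a strictly stronger structural fact as a separate lemma: under event $\cE$ and \cref{ass:monotonic_best_learner}, the misspecification test \emph{never} fires for $i_\star$, so $\widehat d^{i_\star}_t=d_{\min}$ for all $t$ and consequently $n_T^i\le n_T^{i_\star}+1$ for every $i$. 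This is proved by induction: the inductive hypothesis $\widehat d^{i_\star}_{t-1}=d_{\min}$ lower-bounds the left-hand side of the test by $u_t^{i_\star}/n_t^{i_\star}+d_{\min}/\sqrt{n_t^{i_\star}}$, while \emph{both} halves of the assumption (dominance $u^{i_\star}_{(t)}\ge u^i_{(t)}$ and increasing averages $u^{i_\star}_{(t)}/t\le u^{i_\star}_{(t+1)}/(t+1)$) upper-bound the right-hand side by $u_t^{i_\star}/n_t^{i_\star}$. Without this non-doubling lemma, the balancing condition $\phi^i_{t+1}\le 3\phi^{i_\star}_{t+1}$ only gives $\sqrt{n^i_t/n^{i_\star}_t}\le 3\,\widehat d^{i_\star}_t/\widehat d^i_t\le 3\,\widehat d^{i_\star}_t/d_{\min}$, so your claimed inequalities $d^i_t\sqrt{n^i_t}\lesssim d_T^{i_\star}\sqrt{n_T^{i_\star}+1}$ and $n_T^i\lesssim n_T^{i_\star}+1$ do not follow from "the balancing rule" alone, and the constants $6$ and $5c$ cannot close. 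You also misattribute the role of monotonicity: it is consumed almost entirely inside the induction on the misspecification test, not in a final substitution of intermediate coefficients $d^{i_\star}_{(\cdot)}$ by $d_T^{i_\star}$ (indeed, since $d^{i_\star}_{(t)}$ is non-increasing, replacing an intermediate value by the final one would go in the wrong direction for an upper bound).

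A secondary omission: the displayed bound must account for the plays of learner $i$ \emph{after} the last round at which it was selected and the test did not trigger. The paper handles these via Corollary 9.1 of \citet{pacchiano2023data}, which caps them at $1+\log_2(\bar d^i_T/d_{\min})$; your outline bounds only the regret up to that last non-triggering round. The rest of your skeleton -- the identity $\sum_{t}\reg(\pi^i_{(t)})=n^i_T v^\star-u^i_T$, passing to estimates on $\cE$, specializing the test to $j=i_\star$, and using dominance plus the definition of the regret coefficient to control $i_\star$'s average value -- matches the paper's argument and would go through once the non-doubling lemma is in place.
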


We provide the proof for \cref{lem:base_learner_regret} in \cref{app:theory}. \cref{lem:base_learner_regret} implies that when \cref{ass:monotonic_best_learner} holds, the regrets are perfectly balanced. This is in stark contrast with the regret guarantees of \citet{pacchiano2023data} that prove the D$^3$RB algorithm's overall regret to scale as $\left(\bar d^{i_\star}_{T}\right)^2\sqrt{T}$ where $\bar{d}^{i_\star}_t = \max_{\ell\leq t} d_\ell^{i_\star}$. Instead, our results above depend not on the monotonic regret coefficients $\bar{d}_t^{i_\star}$ but on the true regret coefficients $d_t^{i_\star}$. Even if learner $i_\star$ has a slow start (and therefore a large $\bar{d}_T^{i_\star}$), as long as monotonicity holds and the $i_\star$-th learner recovers in the later stages of learning, our results show that D$^3$RB will achieve a regret guarantee comparable with running learner $i_\star$ in isolation.
 

\section{Practical Implementation and Experimental Results} \label{sec:experiments}

In this section, we present a practical implementation\footnote{The code for \orso is available at
\url{https://github.com/Improbable-AI/orso}
} of \orso and its experimental results on several continuous control tasks. We study the ability of \orso to design effective reward functions with varying budget constraints. We also study how different sample sizes, $K$, of the set of reward functions $\cR^K$ influence the performance of \orso and compare different selection algorithms.

This section is structured as follows. First, we present the experimental setup, including the environments and baselines, and the practical consideration of the reward generator $G$ and the algorithms used in the online reward selection phase. Then, we present the main results and ablate our design choices. Further experimental results can be found in \cref{app:exp}

\subsection{Experimental Setup}

\paragraph{Environments and RL Algorithm}
We evaluate \orso on a set of continuous control tasks using the Isaac Gym simulator \citep{makoviychuk2021isaac}. Specifically, we consider the following tasks: \cartpole and \ballbalance, which are relatively simple; two locomotion tasks, \ant and \humanoid, which have dense but unshaped task rewards -- for instance, the agent is rewarded for running fast, but the reward function lacks terms to encourage upright posture or smooth movement; and two complex manipulation tasks, \allegro and \shadow, which feature sparse task reward functions.

Our policies are trained using the proximal policy optimization (PPO) algorithm \citep{schulman2017proximal}, with our implementation built on CleanRL \citep{huang2022cleanrl}. We chose PPO because \citet{makoviychuk2021isaac} provide hyperparameters, which we use, that enable it to perform well on these tasks when using the human-engineered reward functions.

\subsubsection{Baselines}
In our experiments, we consider three baselines. We analyze the performance of policies trained using each reward function detailed below. We evaluate the reward function selection \textit{efficiency} of \orso compared to more naive selection strategies.

\paragraph{No Design} \textit{(Task Reward with No Shaping)} We train the agent with the task reward function $r$ for each MDP. These reward functions can be sparse (for manipulation) or unshaped (for locomotion). We use the same reward definitions as prior work \citep{ma2023eureka}, which we report in \cref{app:rewards}.

\paragraph{Human} We consider the human-engineered reward functions for each task provided by \citep{makoviychuk2021isaac}. We note that these are constructed such that training PPO with the given hyperparameters yields a performant policy with respect to the task reward function. The function definitions are reported in \cref{app:rewards}.

\paragraph{Naive Selection}
A naive selection strategy involves sampling a set of reward functions and training policies on each reward to convergence. While this approach is simple and widely used \citep{ma2023eureka}, it can be inefficient because it uniformly explores each reward function for a fixed number of iterations, regardless of the task performance.

\subsubsection{Implementation}
\paragraph{Reward Generation}
Similarly to recent works on reward design, which demonstrate that LLMs can generate effective reward functions for training agents \citep{parkposition,ma2023eureka,xie2023text2reward,yu2024few}, we follow this paradigm by using GPT-4 \citep{achiam2023gpt} to avoid manually designing reward function components. The language model is prompted to generate reward function code in Python based on some minimal environment code describing the observation space and useful class variables. We employ prompts similar to those used by \citet{ma2023eureka}. Since the exact prompts are not the primary focus of our work, we do not detail them here; instead, we refer readers to our codebase for further details on the prompt construction.

While the LLM produces seemingly good code, this does not guarantee that the sampled code is bug-free and runnable. In \orso, we employ a simple rejection sampling technique to construct sets of only valid reward functions with high probability. We also note that the initial set of generated reward functions in \orso might not contain an effective reward function.\footnote{An effective reward function is one that leads to high performance with respect to the task reward $r$ when used for training.} To address this limitation, we introduce a mechanism for improving the reward function set through iterative resampling and in-context evolution of new sets $\cR^K$. We provide more details on the rejection sampling mechanism and the iterative refinement process \cref{app:implementation}.

\paragraph{Online Reward Selection Algorithms}
We evaluate multiple reward selection algorithms from the multi-armed bandit and online model selection literature: explore-then-commit (ETC), $\epsilon$-greedy (EG), upper confidence bound (UCB) \citep{auer2002using}, exponential-weight algorithm for exploration and exploitation (Exp3) \citep{auer2002nonstochastic}, and doubling data-driven regret balancing (D$^3$RB) \citep{pacchiano2023data}. We provide the pseudocode and the hyperparameters used for each selection algorithm in \cref{app:mab_algos}.
For every environment, we set the number of iterations $N$ in \cref{alg:orso} used to train the policy before we select a different reward function to $N = \texttt{n\_iters} / 100$, where \texttt{n\_iters} is the number of iterations used to train the baselines, \ie we perform at least 100 iterations of online reward selection before the iterative resampling.
\subsection{Results}

In this section, we present the experimental results of \orso. We evaluate \orso's ability to efficiently select reward functions with varying budget constraints and reward function set size $K$. We consider interaction budgets $B \in \set{5, 10, 15} \times \texttt{n\_iters}$ and sample sizes $K \in \set{4, 8, 16}$.

\begin{figure}[h!]
    \centering
    \includegraphics[width=\linewidth]{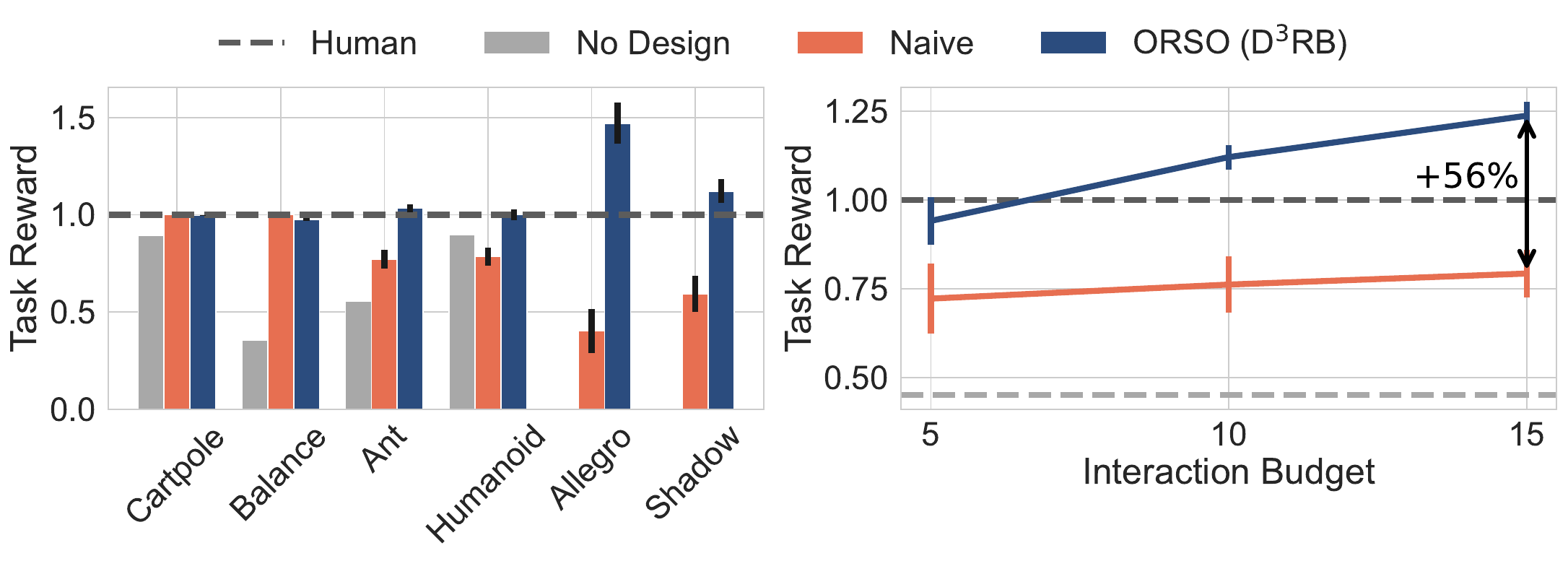}
    \caption{\textbf{Left}: Normalized task rewards averaged over interaction budgets and seeds. \orso consistently matches or surpasses human-designed reward functions.
    \textbf{Right}: Normalized task reward as a function of interaction budget, averaged across tasks. \orso scales effectively with increased budgets, achieving a 56\% higher task reward than the naive strategy at the highest budget. 
    Vertical bars in the plots indicate standard errors. 
    }
    \label{fig:main}
\end{figure}

\paragraph{\orso Surpasses Human-Designed Reward Functions}
\cref{fig:main} (left) illustrates the average performance of \orso compared to human-designed reward functions, the task reward function, and the naive selection strategy across different tasks. We observe that \orso consistently matches or exceeds human-designed rewards, particularly in more complex environments. For each task, the results are averaged over 3 interaction budgets, 3 reward function set sizes and with 3 seed per configuration, totaling 27 runs. The full breakdown is reported in \cref{app:exp}.

\paragraph{\orso Achieves 56\% Higher Task Reward}
\cref{fig:main} (right) shows how task performance scales with the interaction budget. While both \orso and naive selection benefit from larger budgets, \orso is consistently superior and surpasses human-designed rewards when $B \geq 10$. Moreover, \orso achieves 56\% higher reward than the naive strategy at the highest budget level. This demonstrates that \orso can more effectively make use of the additional interactions with the environment, allocating more compute to better reward functions. Detailed per-task and per-budget results are reported in \cref{app:exp}.

\paragraph{\orso Can Reach Human Performance with Fewer GPUs}
One advantage of the naive selection strategy is that it can be easily parallelized on many GPUs. \cref{fig:gpus} reports the estimated time required to achieve the same performance as policies trained with human-designed reward functions as a function of the number of GPUs used. Notably, \orso performs at a comparable speed to the naive selection strategy even when the latter leverages up to 8 GPUs in parallel, achieving similar performance within the same timeframe. It should be noted that the plotted time is an approximation based on the time needed to complete one iteration of PPO for each task, where one iteration consists of collecting a batch of trajectories, updating the policy, and the value function. We report the results for all interaction budgets in the \cref{app:exp}.
\begin{figure}[h!]
    \centering
    \includegraphics[width=\linewidth]{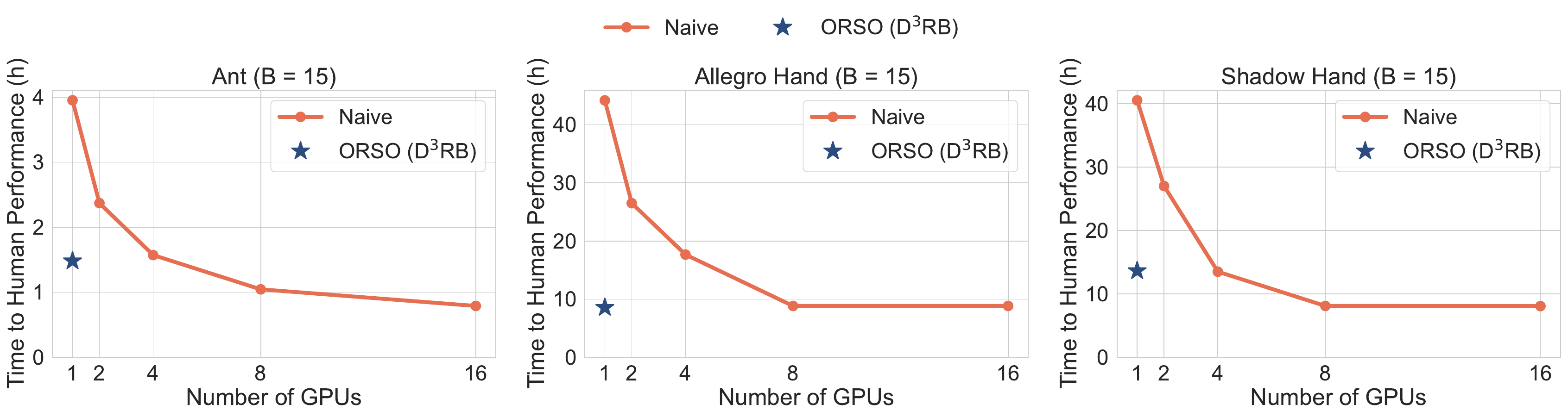}
    \caption{Median time to human-level performance as a function of number of parallel GPUs. Policies trained with \orso can achieve the same performance as policies trained with the human-engineered reward functions with up to $8 \times$ fewer GPUs.}
    \label{fig:gpus}
\end{figure}

\subsection{Ablation Study}

\paragraph{Choice of Selection Algorithm}
In \cref{fig:algos}, we compare different selection algorithms for \orso. We find that D$^3$RB performs best on average, consistently outperforming other algorithms, followed closely by Exp3. These algorithms allow \orso to balance exploration and exploitation effectively, leading to superior performance compared to more greedy approaches like UCB, ETC, and EG. Interestingly, even simpler strategies like EG and ETC substantially outperform the naive strategy, which highlights the importance of properly balancing exploration and exploitation for efficient reward selection. By framing reward design as an exploration-exploitation problem, we demonstrate that even basic strategies offer considerable gains over static, inefficient methods.
\begin{figure}[h!]
    \centering
    \includegraphics[width=0.8\linewidth]{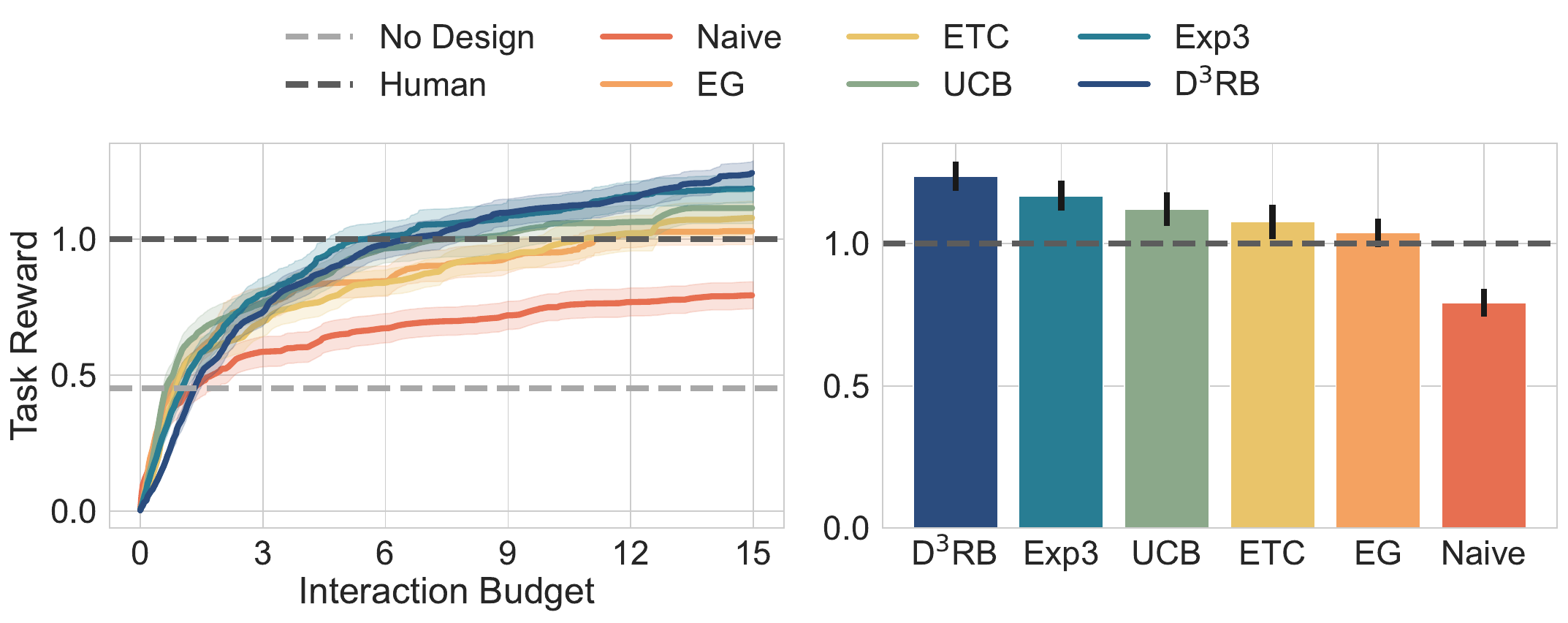}
    \caption{Comparison of different rewards selection algorithms for \orso. \textbf{Left}: Number of iterations necessary for human-level performance. \textbf{Right}: Average normalized task reward for different selection algorithms. We provide a more granular breakdown in \cref{app:exp}.}
    \label{fig:algos}
\end{figure}

\paragraph{Regret of Different Selection Algorithms}
To further quantify \orso's performance, we analyze its regret with respect to human-engineered reward functions.\footnote{The normalized regret with respect to the human-engineered reward functions is defined as $\frac{\texttt{Human} - \cJ (\pi^\star_t)}{\texttt{Human}}$.} This formulation is motivated by two key considerations. First, we lack access to the true optimal policy with respect to the task reward function $\pi^\star$. Second, the PPO hyperparameters used in our experiments were specifically tuned for the human-engineered reward function, making the policy trained with it a reasonable proxy for the optimal policy. Regret provides a useful metric for understanding how much performance is lost due to suboptimal reward selection over time. Lower regret indicates that a method quickly identifies high-quality reward functions, reducing the number of iterations wasted on poorly performing ones. \cref{fig:regret} shows the normalized regret for different selection algorithms. Notably, \orso's regret can become negative, indicating that it finds reward functions that outperform the human-designed ones.
\begin{figure}[h!]
    \centering
    \includegraphics[width=\linewidth]{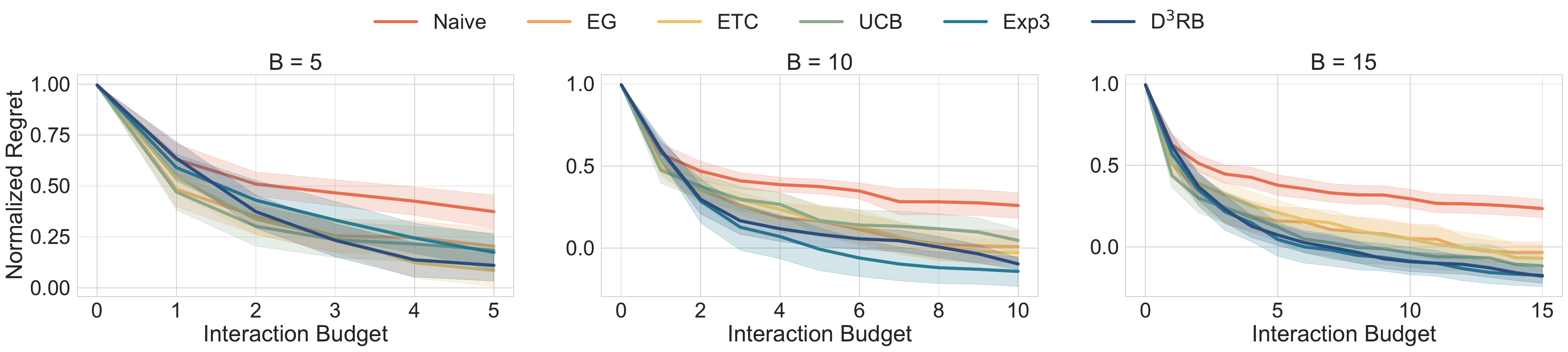}
    \caption{Regret of different selection algorithms with varying budgets. We recall that a budget $B$ indicates that the \orso has been run for $B \times \texttt{n\_iters}$ iterations.}
    \label{fig:regret}
\end{figure}

\paragraph{\orso is Effective with Large Reward Sets}
Previous experiments considered at most 16 reward functions at once, raising the question of whether \orso remains effective when the candidate set is significantly larger. A larger set could pose challenges: excessive exploration might leave insufficient time for learning, while premature commitment could lead to suboptimal performance. To investigate this, we evaluate \orso with different selection algorithms on the \ant task, using a interaction budget of $B=15$ and reward sets of sizes $K \in \set{48, 96}$.
\begin{figure}[h!]
    \centering
    \includegraphics[width=0.8\linewidth]{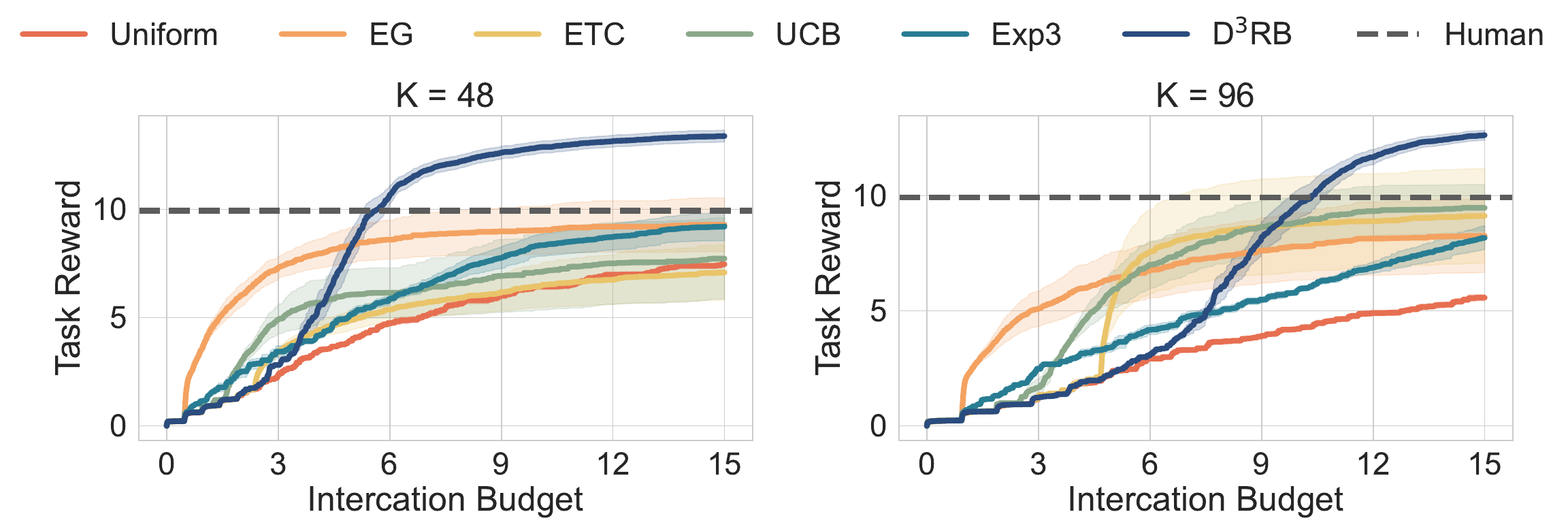}
    \caption{Comparison of multiple selection algorithms for the \ant task with a high number of reward function candidates. The shaded areas represent standard errors over 5 different seeds. The order of the reward functions is randomized for each seed.}
    \label{fig:many_rewards}
\end{figure}

In this setting -- with a fixed budget and reward function set\footnote{In this experiment, we do not perform iterative improvement. That is, the set of reward functions is fixed over the entire training.} -- algorithms that commit to a selection earlier can allocate more iterations to training on the chosen reward functions. On the other hand, exploring for longer may allow us to find the optimal reward function but potentially leave insufficient time for training.

As illustrated in \cref{fig:many_rewards}, D$^3$RB consistently identifies and selects an effective reward function from the set. In contrast, ``greedier'' methods such as $\epsilon$-greedy, explore-then-commit, and UCB can depend more on the stochasticity of training and on average do not surpass human-designed reward functions. Exp3 and uniform exploration, while more exploratory, may overemphasize exploration at the expense of exploiting promising reward functions, leading to suboptimal performance. We report the task reward of each reward function in \cref{tab:ant_rewards_96} to validate that \orso with D$^3$RB truly selects the best reward functions.

\section{Related Work}

Traditionally, researchers manually specified reward components and tuned their coefficients \citep{ng1999policy, margolis2023walk, liu2024visual}, a method that often requires significant domain expertise and involves numerous iterations of trial and error.

Recent work has increasingly explored the potential of foundation models in reward design. Approaches like L2R \citep{yu2023language} leverage large language models to generate reward functions by converting natural language descriptions into code using predefined reward API primitives, though this requires notable effort in manual template design. Other works such as \eureka \citep{ma2023eureka} and Text2Reward \citep{xie2023text2reward} use language models to generate dense reward functions based on task descriptions and environment codes.

Foundation models have also been directly employed as reward models. Researchers have used cosine similarity of CLIP embeddings \citep{rocamonde2023vision}, vision language models for trajectory preference labeling \citep{wang2024rl}, and large language models for constructing preference datasets and intrinsic reward modeling \citep{klissarov2023motif, kwon2023reward}.

Non-stationary scenarios have been extensively explored in the bandit literature, with the restless bandit model \citep{whittle1988restless,weber1990index} being a prominent example. In this model, each arm evolves according to a potentially unknown Markov decision process (MDP). Various solution approaches have been developed for this setting, including those leveraging the Whittle Index \citep{gittins2011multi}. However, the restless bandit framework does not fully capture the problem considered in this work. Unlike a setting where each base learner corresponds to a fixed MDP, the problem of reward selection involves base learners that when chosen advance their internal state never to revisit it. This forms the basis of the online model selection literature that has addressed the challenge of dynamically choosing suitable models in sequential decision-making environments \citep{agarwal2017corralling, foster2019model, pacchiano2020model, lee2021online}.

A more comprehensive review of related work is provided in \cref{app:related_work}.

\section{Conclusion}
In this paper, we introduce \orso, a novel approach for reward design in reinforcement learning that significantly accelerates the design of shaped reward functions. We find that even simple strategies like $\epsilon$-greedy and explore-then-commit yield substantial improvements over naive selection, suggesting that reward design can be effectively framed as a sequential decision problem. \orso reduces both time and computational costs by more than half compared to earlier methods, making reward design accessible to a wider range of researchers. What once required a larger amount of computational resources can not be done on a single desktop in a reasonable time. By formalizing the reward design problem and providing a theoretical analysis of \orso's regret when using the D$^3$RB algorithm, we also contribute to the theoretical understanding of reward design in RL.

Looking ahead, our work opens several promising directions for future research, including the development of more sophisticated exploration strategies tailored for reward design, and the application of our approach to more complex, real-world RL problems.

\subsection{Limitations and Future Work}

Our experiments explored up to 16 reward functions with resampling of the reward function set and up to 96 without resampling. We find that using 8 reward functions with as much interaction budget as possible yields the best results. We leave the study of even larger reward function sets for future work.

A key limitation of \orso is its reliance on a predefined task reward, which is typically straightforward to construct for simpler tasks but can be challenging for more complex ones or for tasks that include a qualitative element to them, \eg making a quadruped walk with a ``nice'' gait. Future work could explore eliminating the need for such hand-crafted task rewards by leveraging techniques that translate natural language instructions directly into evaluators, potentially using vision-language models, similarly to \citet{wang2024rl,rocamonde2023vision}. 
Another alternative is to use preference data to learn a task reward model \citep{christiano2017deep,zhang2023hip} and use the latter as a signal for the model selection algorithm.

Finally, in principle, \orso could be run on multiple GPUs in parallel, with results from different runs aggregated for improved efficiency. Investigating parallelization strategies and their impact on reward selection remains an interesting direction for future study.

\section*{Acknowledgements}
We thank members of the Improbable AI Lab for helpful discussions and feedback. We are grateful to MIT Supercloud and the Lincoln Laboratory Supercomputing Center for providing HPC resources. This research was supported in part by Hyundai Motor Company,  Quanta Computer Inc., an AWS MLRA research grant, ARO MURI under Grant Number W911NF-23-1-0277, DARPA Machine Common Sense Program, ARO MURI under Grant Number W911NF-21-1-0328, and ONR MURI under Grant Number N00014-22-1-2740. The views and conclusions contained in this document are those of the authors and should not be interpreted as representing the official policies, either expressed or implied, of the Army Research Office or the United States Air Force or the U.S. Government. The U.S. Government is authorized to reproduce and distribute reprints for Government purposes, notwithstanding any copyright notation herein.

\section*{Author Contributions}
\begin{itemize}
    \item \textbf{Chen Bo Calvin Zhang:} Led the project, wrote the manuscript, ideated the method, implemented the algorithm, and conducted the experiments.
    \item \textbf{Zhang-Wei Hong:} Provided guidance on the types of experiments to run and assisted with the implementation of saving and loading the simulator state in Isaac Gym. Drafted the introduction and advised on writing.
    \item \textbf{Aldo Pacchiano:} Contributed to the theoretical aspects of the work, specifically the model selection part, including the development and proof of key concepts.
    \item \textbf{Pulkit Agrawal:} Oversaw the project, assisted with positioning the paper, and contributed to the writing and presentation of the results.
\end{itemize}

\bibliography{main}

\begin{thebibliography}{40}
\providecommand{\natexlab}[1]{#1}
\providecommand{\url}[1]{\texttt{#1}}
\expandafter\ifx\csname urlstyle\endcsname\relax
  \providecommand{\doi}[1]{doi: #1}\else
  \providecommand{\doi}{doi: \begingroup \urlstyle{rm}\Url}\fi

\bibitem[Abbeel \& Ng(2004)Abbeel and Ng]{abbeel2004apprenticeship}
Pieter Abbeel and Andrew~Y Ng.
\newblock Apprenticeship learning via inverse reinforcement learning.
\newblock In \emph{Proceedings of the twenty-first international conference on Machine learning}, pp.\ ~1, 2004.

\bibitem[Achiam et~al.(2023)Achiam, Adler, Agarwal, Ahmad, Akkaya, Aleman, Almeida, Altenschmidt, Altman, Anadkat, et~al.]{achiam2023gpt}
Josh Achiam, Steven Adler, Sandhini Agarwal, Lama Ahmad, Ilge Akkaya, Florencia~Leoni Aleman, Diogo Almeida, Janko Altenschmidt, Sam Altman, Shyamal Anadkat, et~al.
\newblock {GPT}-4 technical report.
\newblock \emph{arXiv preprint arXiv:2303.08774}, 2023.

\bibitem[Agarwal et~al.(2017)Agarwal, Luo, Neyshabur, and Schapire]{agarwal2017corralling}
Alekh Agarwal, Haipeng Luo, Behnam Neyshabur, and Robert~E Schapire.
\newblock Corralling a band of bandit algorithms.
\newblock In \emph{Conference on Learning Theory}, pp.\  12--38. PMLR, 2017.

\bibitem[Agrawal(2021)]{agrawal2022task}
Pulkit Agrawal.
\newblock The task specification problem.
\newblock In \emph{5th Annual Conference on Robot Learning, Blue Sky Submission Track}, 2021.
\newblock URL \url{https://openreview.net/forum?id=cBdnThrYkV7}.

\bibitem[Auer(2002)]{auer2002using}
Peter Auer.
\newblock Using confidence bounds for exploitation-exploration trade-offs.
\newblock \emph{Journal of Machine Learning Research}, 3\penalty0 (Nov):\penalty0 397--422, 2002.

\bibitem[Auer et~al.(2002)Auer, Cesa-Bianchi, Freund, and Schapire]{auer2002nonstochastic}
Peter Auer, Nicolo Cesa-Bianchi, Yoav Freund, and Robert~E Schapire.
\newblock The nonstochastic multiarmed bandit problem.
\newblock \emph{SIAM journal on computing}, 32\penalty0 (1):\penalty0 48--77, 2002.

\bibitem[Bradley \& Terry(1952)Bradley and Terry]{bradley1952rank}
Ralph~Allan Bradley and Milton~E Terry.
\newblock Rank analysis of incomplete block designs: I. the method of paired comparisons.
\newblock \emph{Biometrika}, 39\penalty0 (3/4):\penalty0 324--345, 1952.

\bibitem[Chen et~al.(2022)Chen, Hong, Pajarinen, and Agrawal]{chen2022redeeming}
Eric~R Chen, Zhang-Wei Hong, Joni Pajarinen, and Pulkit Agrawal.
\newblock Redeeming intrinsic rewards via constrained policy optimization.
\newblock In Alice~H. Oh, Alekh Agarwal, Danielle Belgrave, and Kyunghyun Cho (eds.), \emph{Advances in Neural Information Processing Systems}, 2022.
\newblock URL \url{https://openreview.net/forum?id=36Yz37cEN_Q}.

\bibitem[Christiano et~al.(2017)Christiano, Leike, Brown, Martic, Legg, and Amodei]{christiano2017deep}
Paul~F Christiano, Jan Leike, Tom Brown, Miljan Martic, Shane Legg, and Dario Amodei.
\newblock Deep reinforcement learning from human preferences.
\newblock \emph{Advances in neural information processing systems}, 30, 2017.

\bibitem[Dann et~al.(2024)Dann, Gentile, and Pacchiano]{pacchiano2023data}
Chris Dann, Claudio Gentile, and Aldo Pacchiano.
\newblock Data-driven online model selection with regret guarantees.
\newblock In \emph{International Conference on Artificial Intelligence and Statistics}, pp.\  1531--1539. PMLR, 2024.

\bibitem[Foster et~al.(2019)Foster, Krishnamurthy, and Luo]{foster2019model}
Dylan~J Foster, Akshay Krishnamurthy, and Haipeng Luo.
\newblock Model selection for contextual bandits.
\newblock \emph{Advances in Neural Information Processing Systems}, 32, 2019.

\bibitem[Gittins et~al.(2011)Gittins, Glazebrook, and Weber]{gittins2011multi}
John Gittins, Kevin Glazebrook, and Richard Weber.
\newblock \emph{Multi-armed bandit allocation indices}.
\newblock John Wiley \& Sons, 2011.

\bibitem[Hadfield-Menell et~al.(2017)Hadfield-Menell, Milli, Abbeel, Russell, and Dragan]{hadfield2017inverse}
Dylan Hadfield-Menell, Smitha Milli, Pieter Abbeel, Stuart~J Russell, and Anca Dragan.
\newblock Inverse reward design.
\newblock \emph{Advances in neural information processing systems}, 30, 2017.

\bibitem[Huang et~al.(2022)Huang, Dossa, Ye, Braga, Chakraborty, Mehta, and Ara{\~A}{\v{s}}jo]{huang2022cleanrl}
Shengyi Huang, Rousslan Fernand~Julien Dossa, Chang Ye, Jeff Braga, Dipam Chakraborty, Kinal Mehta, and Jo{\~A}{\c{G}}o~GM Ara{\~A}{\v{s}}jo.
\newblock Cleanrl: High-quality single-file implementations of deep reinforcement learning algorithms.
\newblock \emph{Journal of Machine Learning Research}, 23\penalty0 (274):\penalty0 1--18, 2022.

\bibitem[Klissarov et~al.(2024)Klissarov, D'Oro, Sodhani, Raileanu, Bacon, Vincent, Zhang, and Henaff]{klissarov2023motif}
Martin Klissarov, Pierluca D'Oro, Shagun Sodhani, Roberta Raileanu, Pierre-Luc Bacon, Pascal Vincent, Amy Zhang, and Mikael Henaff.
\newblock Motif: Intrinsic motivation from artificial intelligence feedback.
\newblock In \emph{The Twelfth International Conference on Learning Representations}, 2024.
\newblock URL \url{https://openreview.net/forum?id=tmBKIecDE9}.

\bibitem[Kwon et~al.(2023)Kwon, Xie, Bullard, and Sadigh]{kwon2023reward}
Minae Kwon, Sang~Michael Xie, Kalesha Bullard, and Dorsa Sadigh.
\newblock Reward design with language models.
\newblock In \emph{The Eleventh International Conference on Learning Representations}, 2023.
\newblock URL \url{https://openreview.net/forum?id=10uNUgI5Kl}.

\bibitem[Lee et~al.(2021)Lee, Pacchiano, Muthukumar, Kong, and Brunskill]{lee2021online}
Jonathan Lee, Aldo Pacchiano, Vidya Muthukumar, Weihao Kong, and Emma Brunskill.
\newblock Online model selection for reinforcement learning with function approximation.
\newblock In \emph{International Conference on Artificial Intelligence and Statistics}, pp.\  3340--3348. PMLR, 2021.

\bibitem[Liu et~al.(2024)Liu, Chen, Cheng, Ji, Qiu, Yang, and Wang]{liu2024visual}
Minghuan Liu, Zixuan Chen, Xuxin Cheng, Yandong Ji, Ri-Zhao Qiu, Ruihan Yang, and Xiaolong Wang.
\newblock Visual whole-body control for legged loco-manipulation.
\newblock In \emph{8th Annual Conference on Robot Learning}, 2024.
\newblock URL \url{https://openreview.net/forum?id=cT2N3p1AcE}.

\bibitem[Ma et~al.(2024)Ma, Liang, Wang, Huang, Bastani, Jayaraman, Zhu, Fan, and Anandkumar]{ma2023eureka}
Yecheng~Jason Ma, William Liang, Guanzhi Wang, De-An Huang, Osbert Bastani, Dinesh Jayaraman, Yuke Zhu, Linxi Fan, and Anima Anandkumar.
\newblock Eureka: Human-level reward design via coding large language models.
\newblock In \emph{The Twelfth International Conference on Learning Representations}, 2024.
\newblock URL \url{https://openreview.net/forum?id=IEduRUO55F}.

\bibitem[Mahmood et~al.(2018)Mahmood, Korenkevych, Vasan, Ma, and Bergstra]{mahmood2018benchmarking}
A~Rupam Mahmood, Dmytro Korenkevych, Gautham Vasan, William Ma, and James Bergstra.
\newblock Benchmarking reinforcement learning algorithms on real-world robots.
\newblock In \emph{Conference on robot learning}, pp.\  561--591. PMLR, 2018.

\bibitem[Makoviychuk et~al.(2021)Makoviychuk, Wawrzyniak, Guo, Lu, Storey, Macklin, Hoeller, Rudin, Allshire, Handa, and State]{makoviychuk2021isaac}
Viktor Makoviychuk, Lukasz Wawrzyniak, Yunrong Guo, Michelle Lu, Kier Storey, Miles Macklin, David Hoeller, Nikita Rudin, Arthur Allshire, Ankur Handa, and Gavriel State.
\newblock Isaac gym: High performance {GPU} based physics simulation for robot learning.
\newblock In \emph{Thirty-fifth Conference on Neural Information Processing Systems Datasets and Benchmarks Track (Round 2)}, 2021.
\newblock URL \url{https://openreview.net/forum?id=fgFBtYgJQX_}.

\bibitem[Margolis \& Agrawal(2022)Margolis and Agrawal]{margolis2023walk}
Gabriel~B. Margolis and Pulkit Agrawal.
\newblock Walk these ways: Tuning robot control for generalization with multiplicity of behavior.
\newblock In \emph{6th Annual Conference on Robot Learning}, 2022.
\newblock URL \url{https://openreview.net/forum?id=52c5e73SlS2}.

\bibitem[Ng et~al.(1999)Ng, Harada, and Russell]{ng1999policy}
Andrew~Y Ng, Daishi Harada, and Stuart Russell.
\newblock Policy invariance under reward transformations: Theory and application to reward shaping.
\newblock In \emph{Icml}, volume~99, pp.\  278--287, 1999.

\bibitem[Pacchiano et~al.(2020)Pacchiano, Phan, Abbasi~Yadkori, Rao, Zimmert, Lattimore, and Szepesvari]{pacchiano2020model}
Aldo Pacchiano, My~Phan, Yasin Abbasi~Yadkori, Anup Rao, Julian Zimmert, Tor Lattimore, and Csaba Szepesvari.
\newblock Model selection in contextual stochastic bandit problems.
\newblock \emph{Advances in Neural Information Processing Systems}, 33:\penalty0 10328--10337, 2020.

\bibitem[Park et~al.(2024)Park, Margolis, and Agrawal]{parkposition}
Younghyo Park, Gabriel~B. Margolis, and Pulkit Agrawal.
\newblock Position: Automatic environment shaping is the next frontier in {RL}.
\newblock In \emph{Forty-first International Conference on Machine Learning}, 2024.
\newblock URL \url{https://openreview.net/forum?id=dslUyy1rN4}.

\bibitem[Puterman(2014)]{puterman2014markov}
Martin~L Puterman.
\newblock \emph{Markov decision processes: discrete stochastic dynamic programming}.
\newblock John Wiley \& Sons, 2014.

\bibitem[Rengarajan et~al.()Rengarajan, Vaidya, Sarvesh, Kalathil, and Shakkottai]{rengarajanreinforcement}
Desik Rengarajan, Gargi Vaidya, Akshay Sarvesh, Dileep Kalathil, and Srinivas Shakkottai.
\newblock Reinforcement learning with sparse rewards using guidance from offline demonstration.
\newblock In \emph{International Conference on Learning Representations}.

\bibitem[Rocamonde et~al.(2024)Rocamonde, Montesinos, Nava, Perez, and Lindner]{rocamonde2023vision}
Juan Rocamonde, Victoriano Montesinos, Elvis Nava, Ethan Perez, and David Lindner.
\newblock Vision-language models are zero-shot reward models for reinforcement learning.
\newblock In \emph{The Twelfth International Conference on Learning Representations}, 2024.
\newblock URL \url{https://openreview.net/forum?id=N0I2RtD8je}.

\bibitem[Schulman et~al.(2017)Schulman, Wolski, Dhariwal, Radford, and Klimov]{schulman2017proximal}
John Schulman, Filip Wolski, Prafulla Dhariwal, Alec Radford, and Oleg Klimov.
\newblock Proximal policy optimization algorithms.
\newblock \emph{arXiv preprint arXiv:1707.06347}, 2017.

\bibitem[Sutton \& Barto(2018)Sutton and Barto]{sutton2018reinforcement}
Richard~S Sutton and Andrew~G Barto.
\newblock \emph{Reinforcement learning: An introduction}.
\newblock MIT press, 2018.

\bibitem[Vasan et~al.()Vasan, Wang, Shahriar, Bergstra, J{\"a}gersand, and Mahmood]{vasanrevisiting}
Gautham Vasan, Yan Wang, Fahim Shahriar, James Bergstra, Martin J{\"a}gersand, and A~Rupam Mahmood.
\newblock Revisiting sparse rewards for goal-reaching reinforcement learning.
\newblock In \emph{Reinforcement Learning Conference}.

\bibitem[Vecerik et~al.(2017)Vecerik, Hester, Scholz, Wang, Pietquin, Piot, Heess, Roth{\"o}rl, Lampe, and Riedmiller]{vecerik2017leveraging}
Mel Vecerik, Todd Hester, Jonathan Scholz, Fumin Wang, Olivier Pietquin, Bilal Piot, Nicolas Heess, Thomas Roth{\"o}rl, Thomas Lampe, and Martin Riedmiller.
\newblock Leveraging demonstrations for deep reinforcement learning on robotics problems with sparse rewards.
\newblock \emph{arXiv preprint arXiv:1707.08817}, 2017.

\bibitem[Wang et~al.(2024)Wang, Sun, Zhang, Xian, Biyik, Held, and Erickson]{wang2024rl}
Yufei Wang, Zhanyi Sun, Jesse Zhang, Zhou Xian, Erdem Biyik, David Held, and Zackory Erickson.
\newblock {RL}-{VLM}-f: Reinforcement learning from vision language foundation model feedback.
\newblock In \emph{Forty-first International Conference on Machine Learning}, 2024.
\newblock URL \url{https://openreview.net/forum?id=YSoMmNWZZx}.

\bibitem[Weber \& Weiss(1990)Weber and Weiss]{weber1990index}
Richard~R Weber and Gideon Weiss.
\newblock On an index policy for restless bandits.
\newblock \emph{Journal of applied probability}, 27\penalty0 (3):\penalty0 637--648, 1990.

\bibitem[Whittle(1988)]{whittle1988restless}
Peter Whittle.
\newblock Restless bandits: Activity allocation in a changing world.
\newblock \emph{Journal of applied probability}, 25\penalty0 (A):\penalty0 287--298, 1988.

\bibitem[Xie et~al.(2024)Xie, Zhao, Wu, Liu, Luo, Zhong, Yang, and Yu]{xie2023text2reward}
Tianbao Xie, Siheng Zhao, Chen~Henry Wu, Yitao Liu, Qian Luo, Victor Zhong, Yanchao Yang, and Tao Yu.
\newblock Text2reward: Reward shaping with language models for reinforcement learning.
\newblock In \emph{The Twelfth International Conference on Learning Representations}, 2024.
\newblock URL \url{https://openreview.net/forum?id=tUM39YTRxH}.

\bibitem[Yu et~al.(2024)Yu, Lu, Gao, Tan, Yang, Wang, Wu, and Vinitsky]{yu2024few}
Chao Yu, Hong Lu, Jiaxuan Gao, Qixin Tan, Xinting Yang, Yu~Wang, Yi~Wu, and Eugene Vinitsky.
\newblock Few-shot in-context preference learning using large language models.
\newblock \emph{arXiv preprint arXiv:2410.17233}, 2024.

\bibitem[Yu et~al.(2023)Yu, Gileadi, Fu, Kirmani, Lee, Arenas, Chiang, Erez, Hasenclever, Humplik, Ichter, Xiao, Xu, Zeng, Zhang, Heess, Sadigh, Tan, Tassa, and Xia]{yu2023language}
Wenhao Yu, Nimrod Gileadi, Chuyuan Fu, Sean Kirmani, Kuang-Huei Lee, Montserrat~Gonzalez Arenas, Hao-Tien~Lewis Chiang, Tom Erez, Leonard Hasenclever, Jan Humplik, Brian Ichter, Ted Xiao, Peng Xu, Andy Zeng, Tingnan Zhang, Nicolas Heess, Dorsa Sadigh, Jie Tan, Yuval Tassa, and Fei Xia.
\newblock Language to rewards for robotic skill synthesis.
\newblock In \emph{7th Annual Conference on Robot Learning}, 2023.
\newblock URL \url{https://openreview.net/forum?id=SgTPdyehXMA}.

\bibitem[Zhang \& Ramponi(2023)Zhang and Ramponi]{zhang2023hip}
Chen Bo~Calvin Zhang and Giorgia Ramponi.
\newblock {HIP}-{RL}: Hallucinated inputs for preference-based reinforcement learning in continuous domains.
\newblock In \emph{ICML 2023 Workshop The Many Facets of Preference-Based Learning}, 2023.
\newblock URL \url{https://openreview.net/forum?id=PRm1KxRrWI}.

\bibitem[Ziebart et~al.(2008)Ziebart, Maas, Bagnell, Dey, et~al.]{ziebart2008maximum}
Brian~D Ziebart, Andrew~L Maas, J~Andrew Bagnell, Anind~K Dey, et~al.
\newblock Maximum entropy inverse reinforcement learning.
\newblock In \emph{Aaai}, volume~8, pp.\  1433--1438. Chicago, IL, USA, 2008.

\end{thebibliography}
\bibliographystyle{iclr2025}

\clearpage

\appendix

\section{Related Work} \label{app:related_work}

\paragraph{Reward Design for RL}
Designing effective reward functions for reinforcement learning has been a long-standing challenge. Several approaches have been proposed to tackle it.

Traditionally, researchers manually specify reward components and tune their coefficients \citep{ng1999policy, margolis2023walk, liu2024visual}. This method often demands significant domain expertise and can be highly resource-intensive, involving numerous iterations of trial and error in designing reward functions, training policies, and adjusting reward parameters.

Another approach is to learn reward functions from expert demonstrations via methods like apprenticeship learning \citep{abbeel2004apprenticeship} and maximum entropy inverse RL \citep{ziebart2008maximum}. While these methods can capture complex behaviors, they often rely on high-quality demonstrations and may struggle in environments where such data is scarce or noisy.

Preferences can also be used to learn reward functions \citep{zhang2023hip, christiano2017deep}. This approach involves collecting feedback in the form of preferences between different trajectories, which are then used to infer a reward function that aligns with the desired behavior. This method is particularly useful in scenarios where it is difficult to explicitly define a reward function or obtain expert demonstrations, as it allows for more intuitive and accessible feedback from users.

\paragraph{Foundation Models and Reward Functions}
Recent work has explored the use of large language/vision models (LL/VMs) to aid in the reward design process. L2R \citep{yu2023language} leverages large language models to generate reward functions for RL tasks by first creating a natural language ``motion description'' and then converting it into code using predefined reward API primitives. While innovative, L2R has notable limitations: it requires significant manual effort in designing templates and primitives and is constrained by the latter. \eureka \citep{ma2023eureka} and Text2Reward \citep{xie2023text2reward} use LLMs to generate dense reward functions for RL given the task description in natural language and the environment code.

Foundation models have also been directly used as reward models. \citet{rocamonde2023vision} uses the cosine similarity of CLIP embeddings of language instructions and renderings of the state as a state-only reward model. Similarly, \citet{wang2024rl} automatically generates reward functions for RL using a vision language model to label pairs of trajectories with preference, given a task description. Motif \citep{klissarov2023motif} first constructs a pair-wise preferences dataset using a large language model (LLM), learns a preference-based intrinsic reward model with the Bradley-Terry \citep{bradley1952rank} model, and then uses this reward model to train a reinforcement learning agent. \citet{kwon2023reward} uses a similar approach, where an LLM is used during training to evaluate an RL policy, given a few examples of successful behavior or a description of the desired behavior.

\paragraph{Online Model Selection}
The problem of model selection in sequential decision-making environments has gained significant attention in recent years \citep{agarwal2017corralling, foster2019model, pacchiano2020model, lee2021online}. This area of research addresses the challenge of dynamically choosing the most suitable model or algorithm from a set of candidates while learning.

\citet{agarwal2017corralling} introduced CORRAL, a method to combine multiple bandit algorithms in a master algorithm. \citet{foster2019model} proposed model selection guarantees for linear contextual bandits. \citet{pacchiano2020model} extend the CORRAL algorithm and propose Stochastic CORRAL. Lastly, \citet{lee2021online} propose Explore-Commit-Eliminate (ECE), an algorithm for model selection in RL with function approximation. A common requirement across all these approaches is the need to know the regret guarantees of the base algorithms.

Our work is closely related to \citet{pacchiano2023data}, which removes the need for known regret guarantees and instead uses \textit{realized} regret bounds for the base learners. In our setting, the set of models comprises the reward functions set and their corresponding policies.

\clearpage

\section{Online Model Selection} \label{app:modsel}
In this section, we introduce the model selection problem and some necessary notation modified from \citet{pacchiano2023data} for our analysis.

We consider a general sequential decision-making process consisting of a \textit{meta learner} interacting with an environment over $T \in \N$ rounds via a set of \textit{base learners}.
At each round of interaction $t = 1, 2, \ldots, T$, the meta learner selects a base learner $b_t$ and after executing $b_t$, the environment returns a model selection reward $R_t \in \R$.
The objective of the meta learner is to sequentially choose base learners $b_1, \ldots, b_T$ to maximize the expected cumulative sum of model selection rewards, \ie $\max \E \left[ \sum_{t=1}^T R_t \right]$. We denote by $v^b = \E[R \mid b]$ the expected model selection reward, given that the learner chooses base learner $b$, \ie the value of base learner $b$. The total model selection reward accumulated by the algorithm over $T$ rounds is denoted by $u_T = \sum_{t=1}^T v^{b_t}$. The objective is to minimize the cumulative regret after $T$ rounds of interaction,
\begin{align}
    \Reg(T) \coloneqq \sum_{t=1}^T \reg(b_t) = \sum_{t=1}^T v^\star - v^{b_t},
\end{align}
where $v^\star$ is the value of the optimal base learner.

In our setting, each base learner corresponds to a reward function $f$ and its associated policy $\pi$, \ie $b = (f, \pi)$. In this case, choosing to execute base learner $b$ means training with algorithm $\mathfrak{A}$ starting from checkpoint $\pi$ and using RL reward function $f$. The model selection reward $R$ is then the evaluation of the trained policy under the task reward $r$, \ie $\cJ(\pi)$. The regret of base learner $b$ can therefore be written as $\reg(b) = v^\star - v^b = \cJ (\pi^\star) - \cJ (\pi)$, where $\pi^\star$ is the optimal policy. Therefore the objective becomes minimizing
\begin{align}
    \Reg(T) = \sum_{t=1}^T \cJ \left( \pi^\star \right) - \cJ \left(\pi_t \right). \label{eq:reg_modsel}
\end{align}

\paragraph{Notation} The policy associated with base learner $i$ at round $t$ is denoted by $\pi_t^i$, so that $\pi_t = \pi_t^{i_t}$. We denote the number base learner $i$ has been played up to round $t$ as $n_t^i = \sum_{\ell = 1}^t \mathbbm{1}\set{i_\ell = i}$ and the total cumulative reward for learner $i$ as $u_t^i = \sum_{\ell = 1}^t \mathbbm{1}\set{i_\ell = i} v^{\pi_\ell^i}$, where we use $v^{\pi_\ell^i} = v^{b_\ell^i}$ to highlight that the policy associated with base learner $i$ changes over time, but the reward function used for RL does not. We denote the internal clock for each base learner with a subscript $(k)$ such that $\pi_{(k)}^i$ is the policy of learner $i$ when chosen for the $k$-th time, \ie $\pi_t^i = \pi^i_{\left(n_t^i\right)}$.

We note an important difference between the online model selection problem and the multi-armed bandit (MAB) problem. In model selection, the meta learner interacts with an environment over $T$ rounds, selecting from $K$ base learners. In each round $t$, the meta learner picks a base learner $i_t \in [K]$ (index of base learner chosen at step $t$) and follows its policy, updating the base learner's state with new data. Unlike MAB problems, where mean rewards are stationary, the mean rewards here are non-stationary due to the stateful nature of base learners (the base learners are learning as they see more data), making the design of effective model selection algorithms challenging. 

\begin{remark}
    The cumulative regret in \cref{eq:reg_modsel} is an upper bound for the cumulative regret with respect to the best so far.
\end{remark}
\begin{proof}
    This is straightforward to see. Let us first note that, by definition, for all $t \in [T]$, we have
    \begin{align}
        \cJ \left(\pi_t^\star\right) \geq \cJ \left(\pi_t\right).
    \end{align}
    Therefore,
    \begin{align}
        \sum_{t=1}^T \cJ \left( \pi^\star \right) - \cJ \left( \pi_t^\star \right) \leq \sum_{t=1}^T \cJ \left( \pi^\star \right) - \cJ \left(\pi_t\right),
    \end{align}
    concluding the proof.
\end{proof}

\section{\orso with Doubling Data-Driven Regret Balancing} \label{app:orso_d3rb}
Here, we present the complete \orso algorithm with Doubling Data-Driven Regret Balancing (D$^3$RB) as the model selection algorithm.

D$^3$RB is built upon the idea of \textit{regret balancing}, which aims to optimize the performance of multiple models by balancing their respective regrets. Imagine weighing two models on a balance scale where the ``weight'' corresponds to their regret; the goal is to keep the regret of both models balanced. This approach ensures that models with higher regret rates are selected less frequently, while those with lower regret rates are favored.

Concretely, regret balancing involves associating each learner with a candidate regret bound. The model selection algorithm then competes against the regret bound of the best-performing learner among those that are well-specified -- meaning their realized regret stays within their candidate bounds. Traditional approaches often rely on known \textit{expected} regret bounds. In contrast, D$^3$RB focuses on \textit{realized} regret, allowing the model selection algorithm to compete based on the actual regret outcomes of each base learner. The algorithm dynamically adjusts the regret bounds in a data-driven manner, adapting to the realized regret of the best-performing learner over time. This approach overcomes the limitation of needing known regret bounds, which are often unavailable for complex problems.

D$^3$RB maintains three estimators for each base learner: regret coefficients $\widehat{d}_t^i$, average rewards $\widehat{u}_t^i / n_t^i$ and balancing potentials $\phi_t^i$. At each step $t$, D$^3$RB selects the base learner with the lower balancing potential and executes it. Then it performs the misspecification test in \cref{eq:miss_test} to check if the estimated regret coefficient for base learner $i_t$ is consistent with the observed data. If the test triggers, \ie the $\widehat{d}^{i_t}$ is too small, then the algorithm doubles it. Lastly, D$^3$RB sets the balancing potential $\phi_t^i$ to $\widehat{d}^{i_t}_t \sqrt{n_t^{i_t}}$.
\begin{align}
    \frac{\widehat u^{i_t}_t}{n_t^{i_t}} + \frac{\widehat d^{i_{t}}_{t-1} \sqrt{n^{i_t}_t}}{n_t^{i_t}} + c \sqrt{\frac{\ln \frac{K \ln n^{i_t}_t}{\delta}}{n_t^{i_t}}} < \max_{j \in [K]} \frac{\widehat u^{j}_t}{n_t^{j}} - c\sqrt{\frac{\ln \frac{K\ln n^{j}_t}{\delta}}{n_t^{j}}} \label{eq:miss_test}
\end{align}

\begin{algorithm}[ht!]
\caption{\orso with D$^3$RB}
\label{alg:orso_d3rb}
\begin{algorithmic}[1]
\Require{MDP $\cM = (\cS, \cA, P, r, \gamma, \rho_0)$, algorithm $\mathfrak{A}$, generator $G$, minimum regret coefficients $d_{\min}$, failure probability $\delta$}
\State Sample $K$ reward functions $\cR^K = \set{f^1, \ldots, f^K} \sim G$
\State Initialize $K$ policies $\set{\pi^1, \ldots, \pi^K}$
\State Initialize balancing potentials $\phi_1^i = d_{\min}$ for all $i \in [K]$
\State Initialize regret coefficients $\widehat d_0^i = d_{\min}$ for add $i \in [K]$
\State Initialize counts $n_0^i = 0$ and total values $\widehat u_0^i = 0$ for all $i \in [K]$
\For{$t = 1, 2, \ldots, T$}
    \State Select a base learner $i_t \in [K] \in \argmin_{i \in [K]} \phi_t^i$
    \State Update $\pi^{i_t} \gets \mathfrak{A}_{f^{i_t}}(\cM_{i_t}, \pi^{i_t})$
    \State Evaluate $R_t = \cJ \left(\pi^{i_t}\right) \gets \texttt{Eval}(\pi^{i_t})$
    \State \textcolor{blue}{\texttt{// Update necessary variables}}
    \State Set $n_t^i = n_{t-1}^i, \widehat u_t^i = \widehat u_{t-1}^i, \widehat d_t^i = \widehat d_{t-1}^i$, and $\phi_{t+1}^i = \phi_t^i$ for all $i \in [K] \setminus \set{i_t}$
    \State Update statistics for current learner $n_t^{i_t} = n_{t-1}^{i_t} + 1$ and $\widehat u_t^{i_t} = \widehat u_{t-1}^{i_t} + R_t$
    \State Perform misspecification test
    \begin{align}
        \frac{\widehat u^{i_t}_t}{n_t^{i_t}} + \frac{\widehat d^{i_{t}}_{t-1} \sqrt{n^{i_t}_t}}{n_t^{i_t}} + c \sqrt{\frac{\ln \frac{K \ln n^{i_t}_t}{\delta}}{n_t^{i_t}}} < \max_{j \in [K]} \frac{\widehat u^{j}_t}{n_t^{j}} - c\sqrt{\frac{\ln \frac{K\ln n^{j}_t}{\delta}}{n_t^{j}}}
    \end{align}
    \If{test is triggered}
        \State Double the regret coefficient $\widehat d_t^{i_1} \gets 2 \widehat d_{t-1}^{i_t}$
    \Else
        \State Keep the regret coefficient unchanged $\widehat d_t^{i_1} \gets \widehat d_{t-1}^{i_t}$
    \EndIf
    \State Update the balancing potential $\phi_{t+1}^{i_t} \gets \widehat d_t^{i_t} \sqrt{n_t^{i_t}}$
\EndFor
\State \textcolor{blue}{\texttt{// Best policy and reward function under the task reward}}
\State \Return $\pi_T^\star, f_T^\star  = \argmax_{i \in [K]} \cJ(\pi^i)$
\end{algorithmic}
\end{algorithm}

\clearpage

\section{Proof of \cref{lem:base_learner_regret}} \label{app:theory}

In this section, we present the complete proof of \cref{lem:base_learner_regret}. We will start by showing that when \cref{ass:monotonic_best_learner} holds, then with probability at least $1-\delta$, the estimated regret coefficient of learner $i_\star$ will never double provided that $d_{\min} \geq c$, where $c$ is the confidence multiplier in D$^3$RB.

\begin{lemma}[Non-doubling regret coefficient]\label{lem:non_doubling_regret_coeff}
When $\cE$ holds, and algorithm D$^3$RB is in use 
\begin{align}
    \widehat d^{i_\star}_t = d_{\min} \quad \text{ and } \quad n_T^{i} \leq n_T^{i_\star} + 1  \quad \text{ for all } i \in [K]
\end{align}
for all $t \in \N$. 
\end{lemma}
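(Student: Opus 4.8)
The plan is to establish both conclusions simultaneously by induction on the round $t$, since they are genuinely interdependent: the non-doubling of $\widehat d^{i_\star}$ is what forces learner $i_\star$'s balancing potential to grow only like $d_{\min}\sqrt{n^{i_\star}}$ (which yields the count balance), while the count balance is exactly what lets me control the misspecification test for $i_\star$ (which yields non-doubling). Concretely, I take as the joint (strong) inductive hypothesis, for all $s < t$: (i) $\widehat d_s^{i_\star} = d_{\min}$, and (ii) $n_s^i \le n_s^{i_\star}+1$ for every $i \in [K]$, and I show both statements continue to hold at round $t$. The only rounds that matter for (i) are those with $i_t = i_\star$ (otherwise $\widehat d^{i_\star}$ is untouched), and the only nontrivial case for (ii) is $i_t = i \ne i_\star$.

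For the non-doubling step, it suffices to verify that when $i_t = i_\star$ the misspecification test \eqref{eq:miss_test} fails. Substituting $\widehat d_{t-1}^{i_\star}=d_{\min}$ into the left-hand side and invoking event $\cE$ to replace each estimated average $\widehat u_t^i/n_t^i$ by the true average $u_t^i/n_t^i$ up to its confidence width $w_t^i \coloneqq c\sqrt{\ln(K\ln n_t^i/\delta)/n_t^i}$, the two widths attached to $i_\star$ on the left cancel and the widths subtracted on the right only help, so the test can fire only if $\frac{u_t^{i_\star}}{n_t^{i_\star}} + \frac{d_{\min}}{\sqrt{n_t^{i_\star}}} < \max_{j}\frac{u_t^{j}}{n_t^{j}}$. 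I rule this out. Since $i_t=i_\star$ was just played, part (ii) of the hypothesis gives $n_t^{j}\le n_t^{i_\star}$ for all $j$; writing $m=n_t^j\le m_\star=n_t^{i_\star}$ and using \cref{ass:monotonic_best_learner} (the average reward of $i_\star$ is nondecreasing in its own clock, and $u_{(k)}^{i_\star}\ge u_{(k)}^{j}$ at equal clocks) yields the chain $\frac{u_{(m_\star)}^{i_\star}}{m_\star}\ge \frac{u_{(m)}^{i_\star}}{m}\ge \frac{u_{(m)}^{j}}{m}$, i.e. $\frac{u_t^{i_\star}}{n_t^{i_\star}}\ge \max_j \frac{u_t^j}{n_t^j}$. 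As $d_{\min}>0$, this contradicts the triggering inequality, so the test fails and $\widehat d_t^{i_\star}=d_{\min}$.

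For the count-balance step with $i_t=i\ne i_\star$, note $n_t^i=n_{t-1}^i+1$ while $n_t^{i_\star}=n_{t-1}^{i_\star}$. The minimum-potential selection rule gives $\phi_t^i\le \phi_t^{i_\star}$. Using the non-doubling part of the hypothesis, $\phi_t^{i_\star}=d_{\min}\sqrt{n_{t-1}^{i_\star}}$, whereas $\phi_t^i=\widehat d_{t-1}^i\sqrt{n_{t-1}^i}\ge d_{\min}\sqrt{n_{t-1}^i}$ since $\widehat d\ge d_{\min}$ always. Hence $n_{t-1}^i\le n_{t-1}^{i_\star}$, so $n_t^i\le n_{t-1}^{i_\star}+1=n_t^{i_\star}+1$; when $i_t=i_\star$ the bound is immediate from the hypothesis. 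This closes the induction.

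The main obstacle is not any single inequality but the lockstep coupling of the two statements, which is what forces the joint induction above rather than proving either claim alone. The remaining care is with boundary rounds at small counts: the quantities $\widehat u_t^j/n_t^j$, $w_t^j$, and the inner term $\ln(K\ln n_t^j/\delta)$ are only well defined once a learner has been played (and the inner logarithm only for $n_t^j\ge 2$), and the potential of an as-yet-unplayed learner is its initial value $d_{\min}$ rather than $d_{\min}\sqrt{0}$. I would handle these by assuming an initial pass that plays each learner once (equivalently, a tie-breaking rule favoring the least-played learner) and by invoking the standing condition $d_{\min}\ge c$ used by \citet{pacchiano2023data} to control the confidence terms in these boundary comparisons, so that the clean inequalities above persist throughout.
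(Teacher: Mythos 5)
Your proposal is correct and follows essentially the same route as the paper's proof: a joint induction on $t$ coupling the non-doubling of $\widehat d^{i_\star}$ with the count balance $n_t^i \le n_t^{i_\star}+1$, using event $\cE$ to reduce the misspecification test to a comparison of true average rewards, \cref{ass:monotonic_best_learner} to show the left-hand side dominates the right-hand side, and the minimum-potential selection rule with $\phi^{i_\star} = d_{\min}\sqrt{n^{i_\star}}$ to propagate the count bound. Your derivation of $n_t^j \le n_t^{i_\star}$ directly from the inductive count hypothesis (rather than re-deriving it from the potentials at the last round each $j$ was played, as the paper does) and your explicit treatment of the boundary rounds and the condition $d_{\min}\ge c$ are minor presentational differences, not a different argument.
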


\begin{proof}
In order to show this result it is sufficient to show that when $\cE$ holds, algorithm $i_\star$ does not undergo any doubling event. Doubling of the regret coefficients only happens when the misspecification test triggers for algorithm $i_\star$. 

We will show this by induction.

\paragraph{Base Case ($t = 1$)}
At $t=1$, for all algorithms $i \in [K]$:
\begin{itemize}
    \item $\widehat{d}^i_1 = d_{\min}$ (by initialization)
    \item $n_1^i = 1$ if $i$ is the first algorithm chosen, 0 otherwise
\end{itemize}
Therefore $n_1^i \leq n_1^{i_\star} + 1$ holds

\paragraph{Inductive Step}
Inductive hypothesis: assume that for some $t \geq 1$:
\begin{itemize}
    \item $\widehat{d}^{i_\star}_{t-1} = d_{\min}$
    \item $n_{t-1}^i \leq n_{t-1}^{i_\star} + 1$ for all $i \in [K]$
\end{itemize}
We need to show these properties hold for $t$. Let $i_t = i_\star$. When $\mathcal{E}$ holds, the left-hand side (LHS) of D$^3$RB's misspecification test satisfies
\begin{align}
    \frac{\widehat u^{i_t}_t}{n_t^{i_t}} + \frac{\widehat d^{i_{t}}_{t-1} \sqrt{n^{i_t}_t}}{n_t^{i_t}} + c \sqrt{\frac{\ln \frac{K \ln n^{i_t}_t}{\delta}}{n_t^{i_t}}} &=  \frac{\widehat u^{i_\star}_t}{n_t^{i_\star}} + \frac{\widehat d^{i_\star}_{t-1} \sqrt{n^{i_\star}_t}}{n_t^{i_\star}} + c\sqrt{\frac{\ln \frac{K\ln n^{i_\star}_t}{\delta}}{n_t^{i_\star}}} \tag{$i_t = i_\star$}\\
    &\geq \frac{u_t^{i_\star}}{n_t^{i_\star}} + \frac{\widehat d^{i_\star}_{t-1} \sqrt{n^{i_\star}_t}}{n_t^{i_\star}} \tag{event $\cE$}\\
    &\stackrel{(i)}{=} \frac{u_t^{i_\star}}{n_t^{i_\star}} + \frac{d_{\min} \sqrt{n^{i_\star}_t}}{n_t^{i_\star}}\label{equation::lower_bound_LHS_misspecification}
\end{align}
where $(i)$ holds because by the induction hypothesis $\widehat d^{i_\star}_{t-1} = d_{\min}$. We will now show that $n_t^{i_\star} \geq n_t^{j}$ for all $j \in [K]$. Since by the inductive hypothesis $\widehat d^{i_\star}_{\ell} = d_{\min}$ for all $\ell \leq t-1$, the potential $\phi^{i_\star}_\ell = d_{\min}\sqrt{ n_{\ell-1}^{i_\star}  }$ for all $\ell \leq t$. 

For $i \in [K]$ let $t(i)$, be the last time -- before time $t$ -- algorithm $i$ was played. For $i \neq i_\star$ we have $t(i) < t$. Since $i$ was selected at time $t(i)$, by definition of the potentials,
\begin{align*}
\widehat d^{i_{\star}}_{t(i)-1} \sqrt{ n_{t(i)-1}^{i_\star}} = d_{\min} \sqrt{ n_{t(i)-1}^{i_\star}} \geq \widehat d^{i}_{t(i)-1} \sqrt{ n_{t(i)-1}^{i}} \geq d_{\min} \sqrt{ n_{t(i)-1}^{i}} 
\end{align*}
so that $n_{t(i)-1}^{i_\star} \geq n_{t(i)-1}^{i}$. Since both $n_{t}^{i_\star}  = n_{t(i)-1}^{i_\star} +1$ and $n_{t}^{i}  = n_{t(i)-1}^{i} +1$ we conclude that $n_t^{i_\star} \geq n_{t}^i$.

We now turn our attention to the right-hand side (RHS) of D$^3$RB's misspecification test. When $\cE$ holds, the RHS of D$^3$RB's misspecification test satisfies, 
\begin{align}
\max_{j \in [K]} \frac{\widehat u^{j}_t}{n_t^{j}} - c\sqrt{\frac{\ln \frac{K\ln n^{j}_t}{\delta}}{n_t^{j}}} &\leq \max_{j \in [K]} \frac{u_t^{j}}{n_{t}^j} \notag \\
&\stackrel{(i)}{\leq} \max_{j \in [K]} \frac{u_{\left(n_t^j\right)}^{i_\star} }{n_t^j} \notag \\
&\stackrel{(ii)}{\leq} \frac{u_t^{i_\star}}{n_t^{i_\star}} \label{equation::upper_bound_LHS_misspecification}
\end{align}
where inequalities $(i)$ and $(ii)$ hold because of \cref{ass:monotonic_best_learner}. Combining inequalities~\ref{equation::lower_bound_LHS_misspecification} and~\ref{equation::upper_bound_LHS_misspecification} we conclude the misspecification test of algorithm D$^3$RB will not trigger. Thus, $\widehat{d}^{i_\star}_t$ remains at $d_{\min}$ and for all $i \in [K]$, $n_t^i \leq n_t^{i_\star} + 1$ continues to hold. This finalizes the proof.
\end{proof}

We are now ready to prove the regret bound on the base learners given in \cref{lem:base_learner_regret}.

\baselearnerregret*
\begin{proof}
Consider a fixed base learner $i$ and time horizon $T$, and let $t \leq T$ be the last round where $i$ was played but the misspecification test did not trigger. If no such round exists, then set $t = 0$. By Corollary 9.1 in \citet{pacchiano2023data}, $i$ can be played at most $1 + \log_2 \frac{\bar d^i_{T}}{d_{\min}}$ times between $t$ and $T$, where $\bar d^i_T = \max_{\ell \leq T} d^i_\ell$. Thus,
\begin{align*}
   \sum_{k = 1}^{n^i_T} \reg \left(\pi^i_{(k)}\right) \leq \sum_{k = 1}^{n^i_t} \reg \left(\pi^i_{(k)}\right) + 1 + \log_2 \frac{\bar  d^i_{T}}{d_{\min}}.
\end{align*}

If $t = 0$, then the desired statement holds. Thus, it remains to bound the first term in the RHS above when $t > 0$. Since $i = i_t$ and the test did not trigger we have, for any base learner $j$ with $n^j_t > 0$,
\begin{align*}
    \sum_{k = 1}^{n^i_t} \reg \left(\pi^i_{(k)}\right) &= n^i_t v^\star - u^i_t \tag{definition of regret}\\
    &= n^i_t v^\star - \frac{n^i_t}{n^j_t}u^j_t + \frac{n^i_t}{n^j_t}u^j_t - u^i_t\\
    &=  \frac{n^i_t}{n^j_t}\left( n^j_t v^\star - u^j_t\right) + \frac{n^i_t}{n^j_t}u^j_t - u^i_t\\
    &=  \frac{n^i_t}{n^j_t}\left( \sum_{k = 1}^{n^j_t} \reg \left(\pi^j_{(k)}\right)\right) + \frac{n^i_t}{n^j_t}u^j_t - u^i_t \tag{definition of regret}\\
    &\leq   \frac{n^i_t}{n^j_t}\left(d^j_t \sqrt{n^j_t}\right) + \frac{n^i_t}{n^j_t}u^j_t - u^i_t \tag{definition of regret rate}\\
    &= \sqrt{\frac{n^i_t}{n^j_t}} d^j_t \sqrt{n^i_t} + \frac{n^i_t}{n^j_t}u^j_t - u^i_t.
\end{align*}

We now focus on $j = i_\star$ and use the balancing condition in Lemma 9.2 in \citet{pacchiano2023data} to bound the first factor $\sqrt{n^i_t / n^{i_\star}_t}$. This condition gives that $\phi^i_{t+1} \leq 3\phi^{i_\star}_{t+1}$. Since both $n^{i_\star}_t > 0$ and $n^i_t > 0$, we have $\phi^i_{t+1} = \widehat d^i_t \sqrt{n^i_t}$ and $\phi^{i_\star}_{t+1} = \widehat d^{i_\star}_t \sqrt{n^{i_\star}_t}$. Thus, we get
\begin{align}\label{eqn:dn_connection}
    \sqrt{\frac{n^i_t}{n^{i_\star}_t}} = \sqrt{\frac{n^i_t}{n^{i_\star}_t}} \cdot \frac{\widehat d^i_t}{\widehat d^{i_\star}_t} \cdot \frac{\widehat d^{i_\star}_t}{\widehat d^i_t} = \frac{\phi^i_{t+1}}{\phi^{i_\star}_{t+1} } \cdot  \frac{\widehat d^{i_\star}_t}{\widehat d^i_t} \leq 3  \frac{\widehat d^{i_\star}_t}{\widehat d^i_t} 
    \leq 3,
\end{align}
where the last inequality holds because of \cref{lem:non_doubling_regret_coeff} and because $\widehat d^i_t \geq d_{\min}$. 

Plugging this back into the expression above and setting $j = i_\star$, we have
\begin{align*}
     \sum_{k = 1}^{n^i_t} \reg \left(\pi^i_{(k)}\right) \leq 3 d^{i_\star}_t \sqrt{n^i_t} + \frac{n^i_t}{n^{i_\star}_t}u^{i_\star}_t - u^i_t.
\end{align*}

To bound the last two terms, we use the fact that the misspecification test did not trigger in round $t$. Therefore,
\begin{align*}
    u^i_t &\geq \widehat u^i_t - c\sqrt{n_t^i \ln \frac{K\ln n^{i}_t}{\delta}} \tag{event $\cE$}\\
    &=n^i_t \left(  \frac{\widehat u^i_t}{n^i_t} + c\sqrt{\frac{\ln \frac{K\ln n^{i}_t}{\delta}}{n^i_t}} + \frac{\widehat d^i_t \sqrt{n_t^i}}{n^i_t}\right) - 2c\sqrt{n_t^i \ln \frac{K\ln n^{i}_t}{\delta}} - \widehat d_t^i \sqrt{n_t^i}\\
    & \geq \frac{n^i_t}{n^{i_\star}_t}\widehat u^{i_\star}_t  - \sqrt{\frac{n^i_t}{n^{i_\star}_t}} c\sqrt{n^i_t \ln \frac{K\ln n^{i_\star}_t}{\delta}} - 2c\sqrt{n_t^i \ln \frac{K\ln n^{i}_t}{\delta}} - \widehat d_t^i \sqrt{n_t^i}. \tag{test not triggered}
\end{align*}

Rearranging terms and plugging this expression in the bound above gives
\begin{align*}
     \sum_{k = 1}^{n^i_t} \reg(\pi^i_{(k)}) &\leq 3 d^{i_\star}_t  \sqrt{n^i_t} +  \sqrt{\frac{n^i_t}{n^{i_\star}_t}} c\sqrt{n^i_t \ln \frac{K\ln n^{i_\star}_t}{\delta}} + 2c\sqrt{n_t^i \ln \frac{K\ln n^{i}_t}{\delta}} + \widehat d_t^i \sqrt{n_t^i}\\
     &\leq 3 d^{i_\star}_t  \sqrt{n^i_t} + 3 c\sqrt{n^i_t \ln \frac{K\ln n^{i_\star}_t}{\delta}} + 2c\sqrt{n_t^i \ln \frac{K\ln n^{i}_t}{\delta}} + \widehat d_t^i \sqrt{n_t^i} \tag{\cref{eqn:dn_connection}}\\
     &\leq 3 d^{i_\star}_t  \sqrt{n^i_t} +  3 c\sqrt{n^i_t \ln \frac{K\ln n^{i_\star}_t}{\delta}} + 2c\sqrt{n_t^i \ln \frac{K\ln n^{i}_t}{\delta}} + 3\widehat d_t^{i_\star} \sqrt{n_t^{i_\star}} \tag{\cref{eqn:dn_connection}}\\
     &\leq 3 d^{i_\star}_t  \sqrt{n^i_t} + 3\widehat d_t^{i_\star} \sqrt{n_t^{i_\star}} + 5c\sqrt{n_t^i \ln \frac{K\ln t}{\delta}} \tag{$\max(n^i_t, n^{i_\star}_t) \leq t$}\\
     &\leq 3 d^{i_\star}_t\sqrt{n^i_t} + 3 d_{\min} \sqrt{n_t^{i_\star}} + 5c\sqrt{n_t^i \ln \frac{K\ln t}{\delta}}\tag{\cref{lem:non_doubling_regret_coeff}}
\end{align*}

Finally, \cref{lem:non_doubling_regret_coeff} also implies $n_t^i \leq n_t^{i_\star}+1$ and since $d_{\min} \leq d_t^{i_\star}$,
\begin{equation*}
    \sum_{k = 1}^{n^i_t} \reg(\pi^i_{(k)}) \leq 6 d^{i_\star}_t\sqrt{n^{i_\star}_t+1} + 5c\sqrt{(n_t^{i_\star} +1) \ln \frac{K\ln t}{\delta}}.
\end{equation*}

The statement follows by setting $t = T$.
\end{proof}

\clearpage

\section{Reward Functions Definitions} \label{app:rewards}
In this section, we present the definition of the human-engineered reward functions and the task reward functions used to evaluate the generated reward in \cref{tab:success_metrics}. The task reward functions are the same as the ones used in \citet{ma2023eureka}.
\begin{table}[h!]
    \caption{Task reward functions definitions.}
    \label{tab:success_metrics}
    \begin{center}
    \begin{tabular}{lc}
        \toprule
        \textsc{Environment} & \textsc{Task Reward} \\
        \midrule
        \cartpole & $\sum \mathbbm{1}\set{\texttt{agent is alive}}$ \\
        \ballbalance & $\sum \mathbbm{1}\set{\texttt{agent is alive}}$ \\
        \ant & \texttt{current\_distance - previous\_distance} \\
        \humanoid & \texttt{current\_distance - previous\_distance} \\
        \allegro & $\sum \mathbbm{1}\set{\texttt{rotation\_distance < 0.1}}$ \\
        \shadow & $\sum \mathbbm{1}\set{\texttt{rotation\_distance < 0.1}}$ \\
        \bottomrule
    \end{tabular}
    \end{center}
\end{table}

The human-designed reward functions from \citep{makoviychuk2021isaac} are
\begin{itemize}
    \item \cartpole
    \begin{align*}
        r = \left(1.0 - \texttt{pole\_angle}^2 - 0.01 \cdot \abs{\texttt{cart\_vel}} - 0.005 \cdot \abs{\texttt{pole\_vel}} \right).
    \end{align*}
    The reward is additionally multiplied by $-2.0$ if $\abs{\texttt{cart\_pos}} > \texttt{reset\_dist}$ and multiplied by $-2.0$ once again if $\texttt{pole\_angle} > \frac{\pi}{2}$.
    \item \ballbalance
    \begin{align*}
        r = \texttt{pos\_reward} \times \texttt{speed\_reward} = \frac{1}{1 + \texttt{ball\_dist}} \times \frac{1}{1 + \texttt{ball\_speed}},
    \end{align*}
    where
    \begin{align*}
        \texttt{ball\_dist} = \sqrt{\texttt{ball\_pos\_x}^2 + \texttt{ball\_pos\_y}^2 + (\texttt{ball\_pos\_z} - 0.7)^2},
    \end{align*}
    where $0.7$ is the desired height above the ground, and
    \begin{align*}
        \texttt{ball\_speed} = \norm{\texttt{ball\_velocity}}_2.
    \end{align*}
    \item \ant and \humanoid
    \begin{align*}
        r &= r_\texttt{progress} + r_\texttt{alive} \times \mathbbm{1} \set{\texttt{torso\_height} \geq \texttt{termination\_height}} + r_\texttt{upright} \\
        &~~~~~+ r_\texttt{heading} + r_\texttt{effort} + r_\texttt{act} + r_\texttt{dof} \\
        &~~~~~+ r_\texttt{death} \times \mathbbm{1} \set{\texttt{torso\_height} \leq \texttt{termination\_height}},
    \end{align*}
    where
    \begin{align*}
        r_\texttt{progress} &= \texttt{current\_potential - previous\_potential} \\
        r_\texttt{upright} &= \inner{\texttt{torso\_up\_vector}}{\texttt{up\_vector}} > 0.93 \\
        r_\texttt{heading} &= \texttt{heading\_vector} \times
        \begin{cases}
            1.0 , & \text{if } \texttt{norm\_angle\_to\_target} \geq 0.8 \\
            \frac{\texttt{norm\_angle\_to\_target}}{0.8}, & \text{otherwise}
        \end{cases} \\
        r_\texttt{act} &= - \sum \norm{\texttt{actions}}^2 \\
        r_\texttt{effort} &= \sum_{i=1}^N \texttt{actions}_i \times \texttt{normalized\_motor\_strength}_i \times \texttt{dof\_velocity}_i \\
        \texttt{potential} &= - \frac{\norm{p_\texttt{target} - p_\texttt{torso}}_2}{dt}
    \end{align*}
    \item \allegro and \shadow
    \begin{align*}
        r = -10 r_\texttt{dist} + r_\texttt{rot} - 2 \times 10^{-4} r_\texttt{act}
    \end{align*}
    where
    \begin{align*}
        r_\texttt{dist} &= \norm{p_\texttt{obj} - p_\texttt{target}}_2 \\
        r_\texttt{rot} &= \frac{1}{\abs{\texttt{rot\_dist}} + 0.1} \\
        r_\texttt{act} &= \sum \norm{\texttt{actions}}^2 \\
        \texttt{rot\_dist} &= 2 \times \arcsin \left( \max \left( 1, \norm{q_\texttt{obj}, \overline{q_\texttt{target}}}_2 \right) \right)
    \end{align*}
    where $q$ is the quaternion and $\overline{q}$ is its conjugate.
\end{itemize}

\clearpage

\subsection{Reward Functions Selected by \orso}
We report the best reward function selected by \orso below. The reward functions are reported as is, with only the formatting of comments and spacing changed to fit within the box.

\begin{tcolorbox}[colframe=NavyBlue!80!white, colback=NavyBlue!10, title=Reward Function for Allegro Hand]
\small
\begin{verbatim}
def compute_gpt_reward(
    object_rot: torch.Tensor,
    goal_rot: torch.Tensor,
    shadow_hand_dof_pos: torch.Tensor,
    shadow_hand_dof_vel: torch.Tensor,
    actions: torch.Tensor
) -> Tuple[torch.Tensor, Dict[str, torch.Tensor]]:

    # Configurable parameters
    dist_reward_scale = float(2.0)
    action_penalty_scale = float(0.05)
    success_tolerance = float(0.05)
    reach_goal_bonus = float(20.0)

    # Compute distance to goal rotation using Quaternion distance
    q_diff = object_rot - goal_rot
    dist_to_goal = torch.norm(q_diff, dim=-1)

    # Rotation distance reward (scaled)
    rot_reward = torch.exp(-dist_reward_scale * dist_to_goal)

    # Action penalty (scaled)
    action_penalty = torch.sum(actions**2, dim=-1)
    action_penalty_scaled = action_penalty_scale * action_penalty

    # Check if the goal has been reached within the tolerance
    success_mask = dist_to_goal < success_tolerance
    goal_bonus = torch.where(
        success_mask, 
        torch.tensor(reach_goal_bonus, device=dist_to_goal.device),
        torch.tensor(0.0, device=dist_to_goal.device)
    )

    # Total reward
    reward = rot_reward - action_penalty_scaled + goal_bonus

    # Dictionary of individual reward components
    reward_components = {
        "rot_reward": rot_reward,
        "action_penalty": action_penalty_scaled,
        "goal_bonus": goal_bonus
    }

    return reward, reward_components
\end{verbatim}
\end{tcolorbox}

\begin{tcolorbox}[colframe=NavyBlue!80!white, colback=NavyBlue!10, title=Reward Function for Ant]
\small
\begin{verbatim}
def compute_gpt_reward(
    root_states: torch.Tensor,
    actions: torch.Tensor,
    dt: float
) -> Tuple[torch.Tensor, Dict[str, torch.Tensor]]:
    # Device
    device = root_states.device
    
    # Extract necessary information from the root states
    velocity = root_states[:, 7:10]  # [vx, vy, vz]
    torso_position = root_states[:, 0:3]  # [px, py, pz]
    
    # Forward velocity along the x-axis
    forward_velocity = velocity[:, 0]
    
    # Reward component: scaled forward velocity
    # Retain existing scaling factor
    forward_reward = forward_velocity * 2.0
    
    # Penalty for large actions
    # (to avoid unnecessary or jerky movements)
    action_penalty = torch.sum(actions**2, dim=-1)
    # Increased scaling factor for more impact
    action_penalty_scaled = action_penalty * 1.0
    
    # Desired height range (e.g., 0.45 to 0.55)
    target_height = torch.tensor(0.5, device=device)
    height_diff = torch.abs(torso_position[:, 2] - target_height)
    # Adjusted temperature parameter to increase contribution
    balance_temperature = 0.1
    # Retain existing scaling
    balance_reward = torch.exp(-height_diff/balance_temperature) * 5.0 

    # Additional penalty for deviation from target angle
    # (to encourage running straight)
    target_angle = torch.tensor(0.0, device=device)
    # Assuming index 5 is yaw angle
    angle_diff = torch.abs(root_states[:, 5] - target_angle)
    angle_penalty = -torch.exp(-angle_diff / balance_temperature)
    
    # Survival bonus to encourage longer episode lengths
    # Reduced overall magnitude
    survival_bonus = torch.ones_like(forward_velocity) * 0.5
    
    # Total reward calculation
    reward = forward_reward + balance_reward +
        angle_penalty - action_penalty_scaled +
        survival_bonus
    
    # Dictionary of individual reward components for debugging
    reward_components = {
        'forward_reward': forward_reward,
        'action_penalty_scaled': -action_penalty_scaled,
        'balance_reward': balance_reward,
        'angle_penalty': angle_penalty,
        'survival_bonus': survival_bonus
    }
    
    return reward, reward_components
\end{verbatim}
\end{tcolorbox}

\begin{tcolorbox}[colframe=NavyBlue!80!white, colback=NavyBlue!10, title=Reward Function for Ball Balance]
\small
\begin{verbatim}
def compute_gpt_reward(
    ball_positions: torch.Tensor,
    ball_linvels: torch.Tensor
) -> Tuple[torch.Tensor, Dict[str, torch.Tensor]]:
    """
    Compute the reward for keeping the ball on the table top
    without falling.

    Args:
    - ball_positions: torch.Tensor of shape (N, 3) giving the
        positions of the balls.
    - ball_linvels: torch.Tensor of shape (N, 3) giving the
        linear velocities of the balls.

    Returns:
    - reward: the total reward as a torch.Tensor of shape (N,)
    - reward_components: dictionary with individual reward components.
    """

    # Assume ball_positions[:, 2] is the height z of the ball.
    target_height = torch.tensor(0.5, device=ball_positions.device)

    # Reward for staying close to the target height
    height_diff = torch.abs(ball_positions[:, 2] - target_height)
    # Decreased temperature for larger impact
    height_temp = torch.tensor(5.0, device=ball_positions.device)
    height_reward = torch.exp(-height_diff * height_temp)

    # Reward for having low linear velocity
    ball_linvels_norm = torch.linalg.norm(ball_linvels, dim=1)
    # Increased scale for more significant impact
    vel_scale = torch.tensor(10.0, device=ball_positions.device)
    vel_reward = torch.exp(-ball_linvels_norm * vel_scale)

    # Penalty for being far from the center (in xy-plane)
    center_xy = torch.tensor([0, 0], device=ball_positions.device)
    xy_diff = torch.linalg.norm(
        ball_positions[:, :2] - center_xy, 
        dim=1
    )
    # Some threshold distance
    xy_threshold = torch.tensor(0.5, device=ball_positions.device)
    xy_penalty = torch.where(
        xy_diff > xy_threshold,
        -torch.exp(xy_diff - xy_threshold),
        torch.tensor(0.0, device=ball_positions.device)
    )

    # Combine the rewards
    total_reward = height_reward + vel_reward + xy_penalty

    # Compile individual components into a dictionary
    reward_components = {
        "height_reward": height_reward,
        "vel_reward": vel_reward,
        "xy_penalty": xy_penalty
    }

    return total_reward, reward_components
\end{verbatim}
\end{tcolorbox}

\begin{tcolorbox}[colframe=NavyBlue!80!white, colback=NavyBlue!10, title=Reward Function for Cartpole]
\small
\begin{verbatim}
def compute_gpt_reward(
    dof_pos: torch.Tensor, 
    dof_vel: torch.Tensor,
) -> Tuple[torch.Tensor, Dict[str, torch.Tensor]]:

    # Extract pole angle and angular velocity
    pole_angle = dof_pos[:, 1]
    pole_ang_vel = dof_vel[:, 1]

    # Reward components
    # Reward for keeping the pole upright
    upright_bonus_t = 10.0
    upright_bonus = torch.exp(-upright_bonus_t*(pole_angle**2))

    # Penalty for pole's angular velocity (to encourage stability)
    ang_vel_penalty_t = 0.1
    ang_vel_penalty = torch.exp(-ang_vel_penalty_t*(pole_ang_vel**2))

    # Sum the rewards and penalties
    reward = upright_bonus + ang_vel_penalty

    # Create a dictionary of individual reward components for
    # debugging or further analysis
    reward_components = {
        'upright_bonus': upright_bonus,
        'ang_vel_penalty': ang_vel_penalty,
    }

    return reward, reward_components
\end{verbatim}
\end{tcolorbox}

\begin{tcolorbox}[colframe=NavyBlue!80!white, colback=NavyBlue!10, title=Reward Function for Humanoid]
\small
\begin{verbatim}
def compute_gpt_reward(
    root_states: torch.Tensor,
    targets: torch.Tensor,
    dt: float
) -> Tuple[torch.Tensor, Dict[str, torch.Tensor]]:
    # Extract relevant components
    velocity = root_states[:, 7:10]
    
    # Vector pointing to the target
    torso_position = root_states[:, 0:3]
    to_target = targets - torso_position
    to_target[:, 2] = 0

    # Normalize to_target to get direction
    direction_to_target = torch.nn.functional.normalize(
        to_target,
        p=2.0,
        dim=-1
    )
    
    # Project velocity onto direction to target to get velocity
    # component in the right direction
    velocity_towards_target = torch.sum(
        velocity * direction_to_target,
        dim=-1,
        keepdim=True
    )
    
    # Reward for moving towards the target quickly
    speed_reward = velocity_towards_target.squeeze()

    # Apply an exponential transformation to encourage higher speeds
    temp_speed = 0.1
    speed_reward_transformed = torch.exp(speed_reward/temp_speed)-1.0

    # Combine rewards (single component in this case)
    total_reward = speed_reward_transformed
    
    # Reward components in a dictionary form
    rewards = {"speed_reward": speed_reward_transformed}
    
    return total_reward, rewards
\end{verbatim}
\end{tcolorbox}

\begin{tcolorbox}[colframe=NavyBlue!80!white, colback=NavyBlue!10, title=Reward Function for Shadow Hand]
\small
\begin{verbatim}
def compute_gpt_reward(
    object_rot: torch.Tensor,
    goal_rot: torch.Tensor,
    actions: torch.Tensor,
    success_tolerance: float,
    reach_goal_bonus: float,
    rot_reward_scale: float,
    action_penalty_scale: float
) -> Tuple[torch.Tensor, Dict[str, torch.Tensor]]:
    
    # Rotation Distance Reward with adjusted scaling
    rot_dist = torch.norm(object_rot - goal_rot, dim=-1)
    # New temperature parameter for rotational reward
    rot_reward_temp = 3.0
    rot_reward = torch.exp(-rot_dist*rot_reward_scale/rot_reward_temp)

    # Goal Achievement Bonus
    goal_reached = rot_dist < success_tolerance
    goal_bonus = reach_goal_bonus * goal_reached.float()

    # Action Penalty with increased scale
    # Increasing the action penalty scale
    increased_aps = 2.0 * action_penalty_scale  
    action_penalty = torch.sum(actions**2, dim=-1) * increased_aps

    # Intermediate Reward for making progress towards rotating to goal
    interm_steps_temp = 0.5
    intermediate_steps_reward = torch.exp(-rot_dist/interm_steps_temp)
    
    # Penalty for large deviations from goal orientation
    deviation_scale = 0.2
    deviation_penalty = rot_dist * deviation_scale

    # Calculate total reward
    total_reward = rot_reward + goal_bonus + 
        intermediate_steps_reward - action_penalty - 
        deviation_penalty

    # Create a dictionary of individual rewards for monitoring
    reward_dict = {
        "rot_reward": rot_reward,
        "goal_bonus": goal_bonus,
        "intermediate_steps_reward": intermediate_steps_reward,
        "action_penalty": action_penalty,
        "deviation_penalty": deviation_penalty
    }

    return total_reward, reward_dict
\end{verbatim}
\end{tcolorbox}

\clearpage

\section{Implementation Details} \label{app:implementation}

\paragraph{Rejection Sampling} While the LLM produces seemingly good code, this does not guarantee that the sampled code is bug-free and runnable. In \orso, we employ a simple rejection sampling technique to construct sets of only valid reward functions with high probability, such that reward functions that cannot be compiled or produce $\pm \infty$ or \texttt{NaN} values are discarded.

Given criteria $\phi$ to be satisfied, our rejection sampling scheme repeats the steps in \cref{alg:rejection_sampling} until we have sampled the desired number, $K$, of valid reward functions.
\begin{algorithm}
\caption{Rejection Sampling in \orso}
\label{alg:rejection_sampling}
\begin{algorithmic}[1]
\State Sample a candidate reward function $f \sim G$
\If{$\phi(f)$ is satisfied}
    \State Add $f$ to the set of candidate reward functions
\Else
    \State Reject reward function $f$
\EndIf
\end{algorithmic}
\end{algorithm}

In our practical implementation, checking if criteria $\phi$ are satisfied consists of instantiating an environment with the generated reward function, running a random policy on it, and checking the values produced by the reward function. If the environment cannot be instantiated or if the values returned by the reward function are $\pm \infty$ or \texttt{NaN}, the reward function is rejected. It is worth making two important observations. First, this is much computationally cheaper than instantiating the environment for training because one does not need to initialize large neural networks and can use fewer parallel environments than the number necessary for training. Moreover, we note that the rejection sampling mechanism only guarantees a higher probability of a valid reward function code as the policy used to evaluate the function is random and the optimization process used during the training of an RL algorithm could still induce undesirable values.

\paragraph{Iterative Improvement of the Reward Function Set}
In the initial phase of \orso, the algorithm generates a set of candidate reward functions $\cR^K$ for the online reward selection and policy optimization step. While this approach is effective if $\cR^K$ contains an effective reward function, any selection process will fail to achieve a high task reward if the set does not contain a good reward function. To address this limitation, we introduce a mechanism for improving the reward function set through iterative resampling and in-context evolution. This is similar to \citet{ma2023eureka}, however, we introduce some important changes to prevent the in-context evolution from overfitting to initially suboptimal reward functions.

Resampling is triggered when at least one reward function has been used to train a policy for the number of iterations specified in the environment configuration or if all the reward functions in the set incurred too large a regret compared to the previous best policy if the algorithm has undergone at least one resampling step.

There are several strategies for resampling reward functions, each with its trade-offs. The simplest approach is to sample new reward functions from scratch, using the same generator $G$ that was used in the initial phase. However, this method may not provide significant improvement, as it essentially restarts the search process without leveraging the information gained from the previous iterations of training.

A more sophisticated approach is to greedily in-context evolve the reward function from the best-performing candidate so far as is done in \citet{ma2023eureka}. This involves making incremental adjustments to the reward function that has shown the most promise, potentially moving it closer to an optimal reward function. However, while this greedy strategy can lead to improvements, it also has the risk of overfitting to an initially suboptimal reward function if, for example, the initial set does not contain effective reward functions.

To mitigate the risk of overfitting, we introduce a simple strategy that allows the algorithm to be more exploratory. Specifically, we combine greedy evolution with random sampling: half of the reward functions are evolved in context from the best-performing candidate, while the other half is sampled from scratch. This approach allows the algorithm to explore new regions of the reward function space while still exploiting the knowledge gained from previous iterations. We provide the full pseudo-code for \orso with rejection sampling and iterative improvement in \cref{alg:orso_full}.

\begin{algorithm}
\caption{\orso with Rejection Sampling and Iterative Improvement}
\label{alg:orso_full}
\begin{algorithmic}[1]
\Require{MDP $\cM = (\cS, \cA, P, r, \gamma, \rho_0)$, algorithm $\mathfrak{A}$, generator $G$, budget $T$, threshold $\texttt{n\_iters}$}
\State Sample $K$ valid reward functions $\cR^K = \set{f^1, \ldots, f^K} \sim G$ using \cref{alg:rejection_sampling}
\State Initialize $K$ policies $\set{\pi^1, \ldots, \pi^K}$
\State Initialize selection counts $N^K = \set{0, \ldots, 0}$
\State Set $t \gets 1$
\While{$t \leq T$}
    \State Select a model $i_t \in [K]$ according to a selection strategy
    \State Update $\pi^{i_t} \gets \mathfrak{A}_{f^{i_t}} (\cM, N, \pi^{i_t})$
    \State Evaluate $\cJ(\pi^{i_t}) \gets \texttt{Eval}(\pi^{i_t})$
    \State Update selection counts: $N^{i_t} \gets N^{i_t} + 1$
    \State Update variables (e.g., reward estimates and confidence intervals)
    
    \If{$N^{i_t} \geq \texttt{n\_iters}$ or regret \wrt previous best is too high}
        \State Resample $\cR^K \sim G$ (half in-context evolution, half from scratch)
        \State Sample a new set of reward functions $\cR^K = \set{f^1, \ldots, f^K}$ using rejection sampling
        \State Reset policies $\set{\pi^1, \ldots, \pi^K}$
        \State Reset selection counts $N^K = \set{0, \ldots, 0}$
    \EndIf
    
    \State $t \gets t + 1$
\EndWhile
\State \Return $\pi_T^\star, f_T^\star = \argmax_{i \in [K]} \cJ(\pi^i)$
\end{algorithmic}
\end{algorithm}

\clearpage

\section{Selection Algorithms and Hyperparameters} \label{app:mab_algos}

In this section, we present the pseudocode for all reward selection algorithms used in our experiments with their associated hyperparameters in \cref{tab:hyperparameters}.

\begin{algorithm}
\caption{$\epsilon$-Greedy}
\label{alg:eg}
\begin{algorithmic}[1]
\Require{Number of arms $K$, total time $T$, exploration probability $\epsilon$}
\State Initialize counts $n_0^i = 0$ and total values $\widehat u_0^i = 0$ for all $i \in [K]$
\For{$t = 1, \ldots, T$}
    \State Select arm
    \begin{align*}
        i_t = 
        \begin{cases}
            \argmax_i (\widehat{u}_t^i / n_t^i), & \text{with probability } 1 - \epsilon \\
            i \sim \mathrm{Uniform}([K]), & \text{with probability } \epsilon
        \end{cases}
    \end{align*}
    \State Play arm $i_t$ and observe reward $r_t$
    \State Set $n_t^i = n_{t-1}^i$, and $\widehat u_t^i = \widehat u_{t-1}^i$ for all $i \in [K] \setminus \set{i_t}$
    \State Update statistics for current learner $n_t^{i_t} = n_{t-1}^{i_t} + 1$ and $\widehat u_t^{i_t} = \widehat u_{t-1}^{i_t} + r_t$
\EndFor
\end{algorithmic}
\end{algorithm}

\begin{algorithm}
\caption{Explore-then-Commit}
\label{alg:etc}
\begin{algorithmic}[1]
\Require{Number of arms $K$, total time $T$, exploration phase length $T_0$}
\State Initialize counts $n_0^i = 0$ and total values $\widehat u_0^i = 0$ for all $i \in [K]$
\State \textcolor{blue}{\texttt{// Explore}}
\For{$t = 1, \ldots, T_0$}
    \State Select arm $i_t = (t \mod K) + 1$
    \State Play arm $i_t$ and observe reward $r_t$
    \State Set $n_t^i = n_{t-1}^i$, and $\widehat u_t^i = \widehat u_{t-1}^i$ for all $i \in [K] \setminus \set{i_t}$
    \State Update statistics for current learner $n_t^{i_t} = n_{t-1}^{i_t} + 1$ and $\widehat u_t^{i_t} = \widehat u_{t-1}^{i_t} + r_t$
\EndFor
\State \textcolor{blue}{\texttt{// Commit}}
\State $i_\star = \argmax_i (u_t^i / n_t^i)$
\For{$t = T_0 + 1$ to $T$}
    \State Play arm $i_\star$ and observe reward $r_t$
\EndFor
\end{algorithmic}
\end{algorithm}

\begin{algorithm}
\caption{UCB (Upper Confidence Bound)}
\label{alg:ucb}
\begin{algorithmic}[1]
\Require{Number of arms $K$, total time $T$, confidence multiplier $c$}
\State Initialize counts $n_0^i = 0$ and total values $\widehat u_0^i = 0$ for all $i \in [K]$
\For{$t = 1, \ldots, K$}
    \State Select arm $i_t = t$
    \State Play arm $i_t$ and observe reward $r_t$
    \State Update statistics for current learner $n_t^{i_t} = n_{t-1}^{i_t} + 1$ and $\widehat u_t^{i_t} = \widehat u_{t-1}^{i_t} + r_t$
\EndFor
\For{$t = K + 1, \ldots, T$}
    \State Select arm
    \begin{align*}
        i_t = \argmax_i \left(\frac{\widehat u_t^i}{n_t^i} + c \sqrt{2 \frac{\ln t}{n_t^i}}\right)
    \end{align*}
    \State Play arm $i_t$ and observe reward $r_t$
    \State Set $n_t^i = n_{t-1}^i$, and $\widehat u_t^i = \widehat u_{t-1}^i$ for all $i \in [K] \setminus \set{i_t}$
    \State Update statistics for current learner $n_t^{i_t} = n_{t-1}^{i_t} + 1$ and $\widehat u_t^{i_t} = \widehat u_{t-1}^{i_t} + r_t$
\EndFor
\end{algorithmic}
\end{algorithm}

\begin{algorithm}
\caption{Exp3 (Exponential-weight algorithm for Exploration and Exploitation)}
\label{alg:exp3}
\begin{algorithmic}[1]
\Require{Number of arms $K$, total time $T$, learning rate $\eta$}
\State Initialize weights $w_0^i = 1$ and probabilities $p_0^i = 1/K$ for all $i \in [K]$
\For{$t = 1, \ldots, T$}
    \State Select arm $i_t$ according to distribution $P_t = [p_t^1, \ldots, p_t^K]$
    \State Play arm $i_t$ and observe reward $r_t$
    \State Estimate reward $\widehat{r}_t = r_t / p_t^{i_t}$
    \State Update weight $w_t^{i_t} = w_{t-1}^{i_t} \exp(\eta \widehat{r}_t / K)$
    \State Update probabilities
    \begin{align*}
        p_t^i = (1 - \eta) \frac{w_t^i}{\sum_{j=1}^K w_t^j} + \frac{\eta}{K} \qquad \text{for all } i \in [K]
    \end{align*}
\EndFor
\end{algorithmic}
\end{algorithm}

\begin{table}
    \caption{Hyperparameters for MAB Algorithms}
    \label{tab:hyperparameters}
    \begin{center}
    \begin{small}
    \begin{tabular}{llc}
        \toprule
        \textsc{Algorithm} & \textsc{Parameter} & \textsc{Value} \\
        \midrule
        \textsc{Epsilon-Greedy} & $\epsilon$ & 0.1 \\
        \textsc{Explore-then-Commit} & $T_0$ & $5 \cdot K$ \\
        \textsc{UCB} & $c$ & 1.0 \\
        \textsc{Exp3} & $\eta$ & 0.1 \\
        \bottomrule
    \end{tabular}
    \end{small}
    \end{center}
\end{table}

\clearpage

\section{Additional Experimental Results} \label{app:exp}

In this section, we report additional experimental evaluations. In particular, we show how different configurations of budget constraints $B$ and sizes $K$ of the reward function set perform with different reward selection algorithms in different environments in \cref{fig:compute_budget,fig:compute_tasks,fig:bar_3x3,fig:algos_3x3}.

\begin{figure}[h!]
    \centering
    \includegraphics[width=\linewidth]{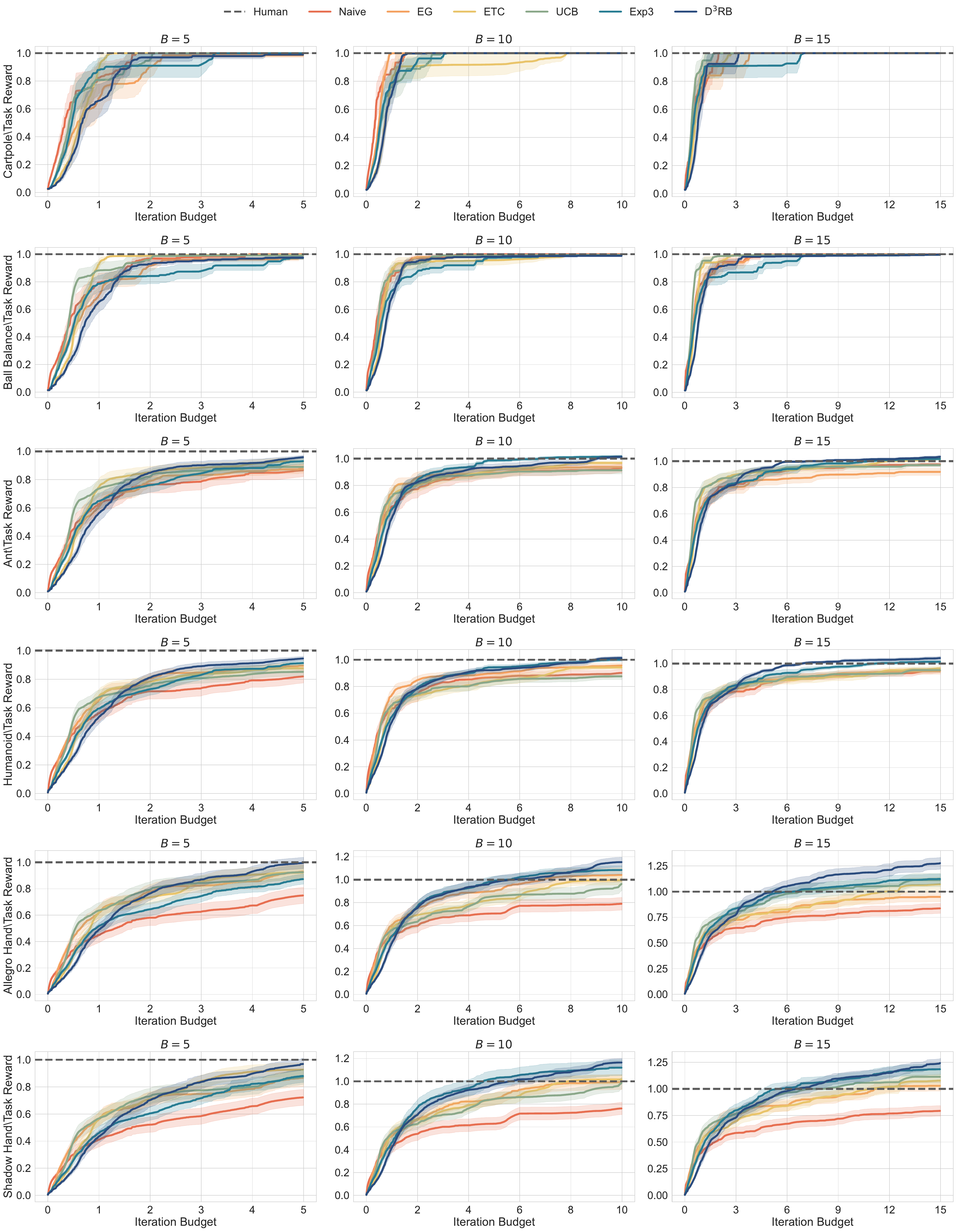}
    \caption{Number of iterations necessary to reach human-engineered reward function performance with different computation budgets and tasks. The shaded areas represent standard errors.}
    \label{fig:compute_tasks}
\end{figure}

\begin{figure}[h!]
    \centering
    \includegraphics[width=\linewidth]{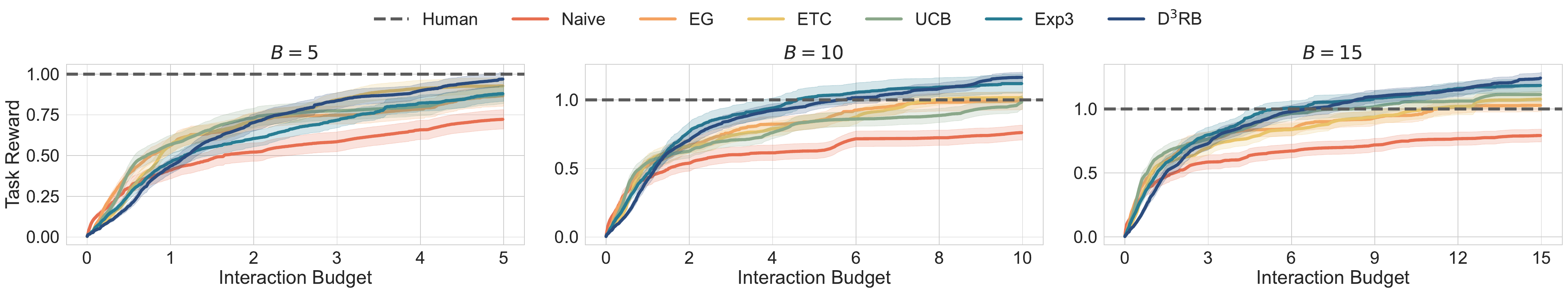}
    \caption{Number of iterations necessary to reach human-engineered reward function performance with different computation budgets. The shaded areas represent standard errors.}
    \label{fig:compute_budget}
\end{figure}

\begin{figure}[h!]
    \centering
    \includegraphics[width=\linewidth]{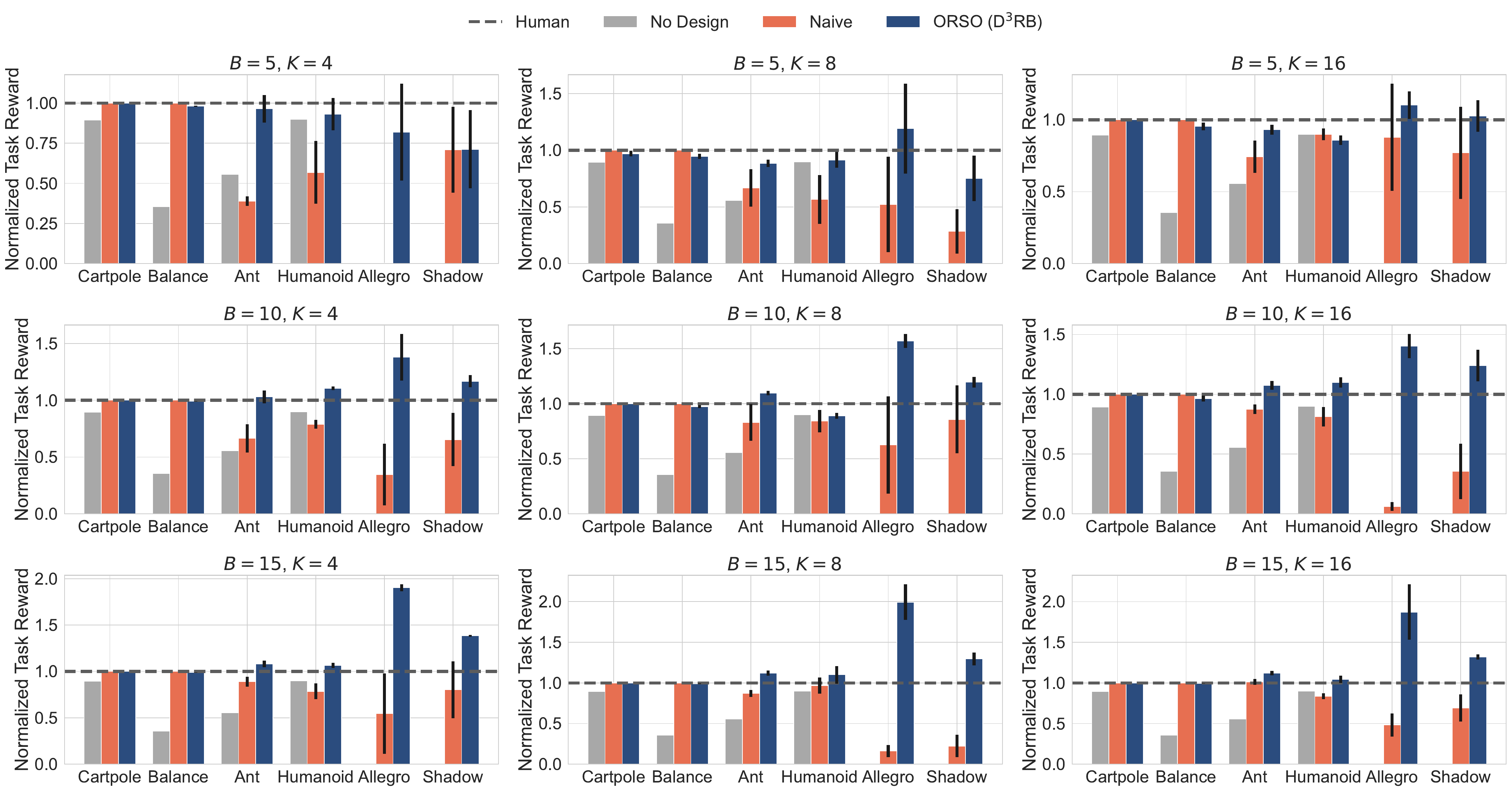}
    \caption{Average performance with standard errors for \orso with different interaction budget constraints and reward function set size. The dashed horizontal line represents the policies trained with the human-engineered reward function.}
    \label{fig:bar_3x3}
\end{figure}

\begin{figure}[h!]
    \centering
    \includegraphics[width=\linewidth]{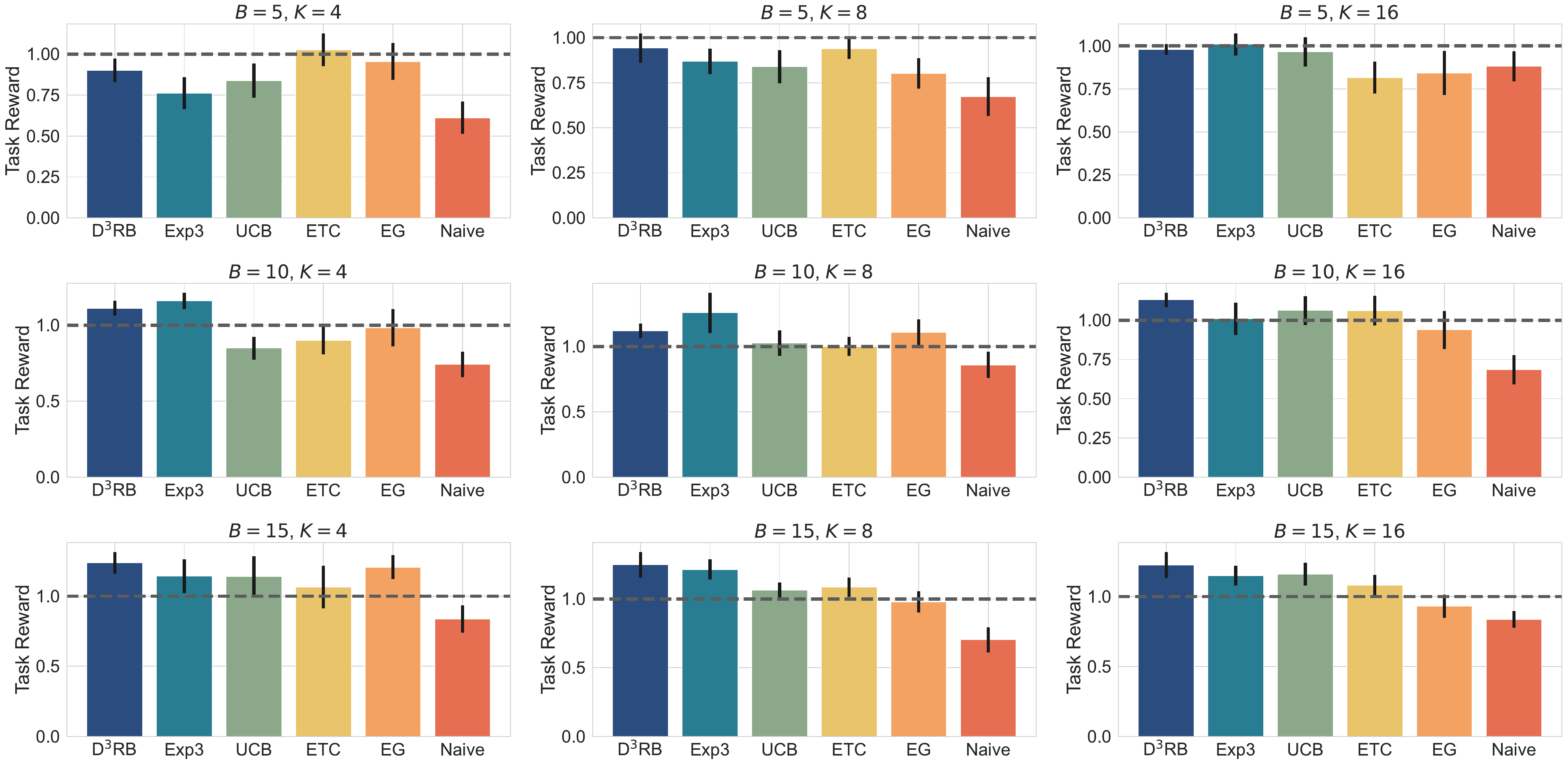}
    \caption{Comparison of different reward selection algorithms for \orso with different budget constraints and reward function set size.}
    \label{fig:algos_3x3}
\end{figure}

We also plot in \cref{fig:gpu_all} the time necessary to achieve the same performance as policies trained with human-designed reward functions as a function of the number of parallel GPUs available for all budget constraints and all tasks considered.

\begin{figure}[h]
    \centering
    \includegraphics[width=\linewidth]{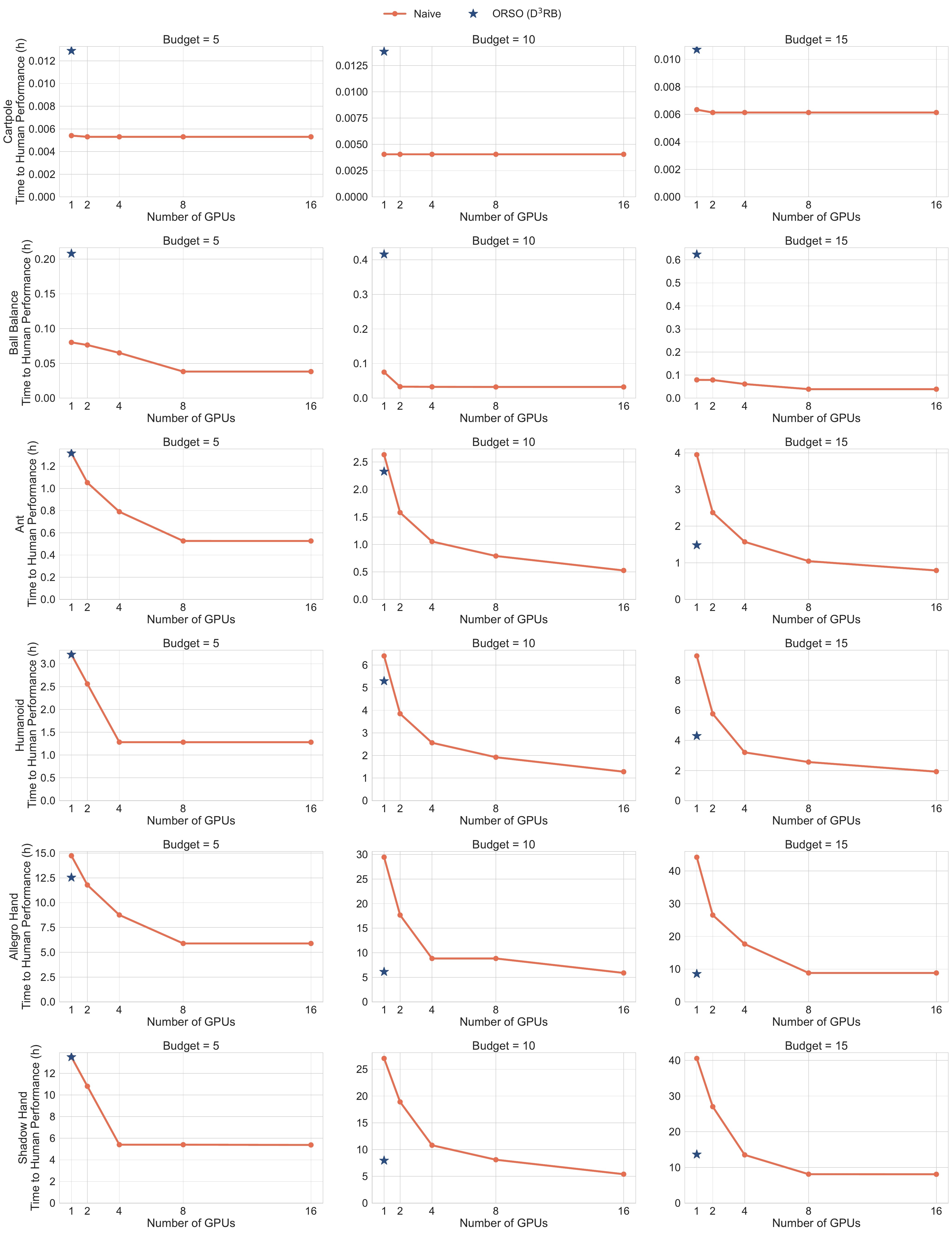}
    \caption{Time necessary to achieve the same performance as policies trained with human-designed reward functions as a function of the number of parallel GPUs.}
    \label{fig:gpu_all}
\end{figure}

\subsection{Chosen Reward Functions for Large Reward Set}
In order to validate that \orso with D$^3$RB indeed chooses the optimal reward function, we train a policy for each of the $K=96$ reward functions for the \ant task in \cref{fig:many_rewards}. In \cref{tab:ant_rewards_96}, we report the mean task reward with 95\% confidence intervals over five seeds. Rewards are ordered from best to worst, with those within one confidence interval of the best reward underlined. Bolded values indicate the reward functions selected by \orso across the seeds we ran.

\begin{table}[htbp]
    \centering
    \caption{Mean task reward for each Reward ID with 95\% confidence intervals (CI).}
    \label{tab:ant_rewards_96}
    \begin{minipage}[t]{0.45\textwidth}
        \begin{center}
        \begin{small}
        \begin{sc}
        \begin{tabular}{lc}
            \toprule
            Reward ID & Mean ($\pm$ CI) \\
            \midrule
            34 & \underline{\textbf{10.24 $\pm$ 0.36}} \\
            18 & \underline{10.01 $\pm$ 0.63} \\
            71 & \underline{9.98 $\pm$ 0.37} \\
            79 & \underline{9.88 $\pm$ 0.73} \\
            21 & \underline{\textbf{9.77 $\pm$ 0.22}} \\
            94 & \underline{9.70 $\pm$ 0.38} \\
            66 & \underline{\textbf{9.67 $\pm$ 0.27}} \\
            81 & \underline{\textbf{9.55 $\pm$ 0.80}} \\
            70 & \underline{\textbf{9.51 $\pm$ 0.63}} \\
            37 & \underline{9.46 $\pm$ 0.55} \\
            33 & \underline{9.34 $\pm$ 0.83} \\
            95 & \underline{9.27 $\pm$ 0.33} \\
            47 & \underline{9.24 $\pm$ 0.68} \\
            63 & \underline{9.21 $\pm$ 0.76} \\
            54 & \underline{9.20 $\pm$ 0.80} \\
            80 & 9.16 $\pm$ 0.25 \\
            62 & 8.88 $\pm$ 0.37 \\
            38 & 8.81 $\pm$ 0.45 \\
            49 & 8.81 $\pm$ 0.77 \\
            35 & 8.69 $\pm$ 1.07 \\
            5 & 8.61 $\pm$ 0.56 \\
            52 & 8.35 $\pm$ 1.43 \\
            67 & 8.32 $\pm$ 0.85 \\
            46 & 8.30 $\pm$ 0.89 \\
            68 & 8.20 $\pm$ 1.22 \\
            75 & 8.09 $\pm$ 0.40 \\
            84 & 8.05 $\pm$ 1.25 \\
            85 & 7.77 $\pm$ 0.97 \\
            72 & 7.64 $\pm$ 1.27 \\
            55 & 7.43 $\pm$ 1.46 \\
            20 & 7.26 $\pm$ 0.18 \\
            23 & 7.26 $\pm$ 1.12 \\
            86 & 7.15 $\pm$ 0.42 \\
            36 & 7.06 $\pm$ 0.68 \\
            91 & 6.93 $\pm$ 1.45 \\
            1 & 6.50 $\pm$ 1.17 \\
            31 & 6.36 $\pm$ 0.80 \\
            61 & 6.06 $\pm$ 0.93 \\
            19 & 5.78 $\pm$ 1.43 \\
            25 & 5.67 $\pm$ 1.41 \\
            48 & 5.65 $\pm$ 1.54 \\
            59 & 5.59 $\pm$ 1.02 \\
            26 & 5.50 $\pm$ 0.89 \\
            60 & 5.47 $\pm$ 1.17 \\
            44 & 5.47 $\pm$ 1.36 \\
            40 & 5.34 $\pm$ 1.73 \\
            73 & 5.33 $\pm$ 1.77 \\
            0 & 5.30 $\pm$ 1.38 \\
            \bottomrule
        \end{tabular}
        \end{sc}
        \end{small}
        \end{center}
    \end{minipage}%
    \hspace{0.05\textwidth} 
    \begin{minipage}[t]{0.45\textwidth}
        \begin{center}
        \begin{small}
        \begin{sc}
        \begin{tabular}{lc}
            \toprule
            Reward ID & Mean ($\pm$ CI) \\
            \midrule
            56 & 5.05 $\pm$ 0.62 \\
            39 & 4.91 $\pm$ 0.64 \\
            74 & 4.88 $\pm$ 0.49 \\
            30 & 4.83 $\pm$ 0.78 \\
            78 & 4.83 $\pm$ 0.35 \\
            6 & 4.76 $\pm$ 0.91 \\
            2 & 4.69 $\pm$ 0.92 \\
            28 & 4.66 $\pm$ 1.21 \\
            8 & 4.65 $\pm$ 0.44 \\
            16 & 4.57 $\pm$ 1.08 \\
            29 & 4.53 $\pm$ 0.81 \\
            65 & 4.44 $\pm$ 0.62 \\
            50 & 4.23 $\pm$ 1.94 \\
            58 & 3.89 $\pm$ 0.43 \\
            53 & 3.86 $\pm$ 0.44 \\
            32 & 3.79 $\pm$ 0.73 \\
            22 & 3.74 $\pm$ 0.54 \\
            3 & 3.48 $\pm$ 1.66 \\
            69 & 3.22 $\pm$ 0.42 \\
            4 & 3.18 $\pm$ 0.42 \\
            88 & 3.18 $\pm$ 0.43 \\
            64 & 3.12 $\pm$ 0.16 \\
            9 & 3.11 $\pm$ 0.39 \\
            17 & 3.10 $\pm$ 0.15 \\
            93 & 3.02 $\pm$ 0.21 \\
            14 & 2.99 $\pm$ 0.52 \\
            45 & 2.89 $\pm$ 0.29 \\
            83 & 2.72 $\pm$ 0.82 \\
            27 & 2.50 $\pm$ 0.72 \\
            10 & 2.15 $\pm$ 0.43 \\
            57 & 1.69 $\pm$ 0.80 \\
            7 & 1.67 $\pm$ 1.01 \\
            82 & 1.03 $\pm$ 0.35 \\
            42 & 0.63 $\pm$ 0.80 \\
            41 & 0.37 $\pm$ 0.30 \\
            43 & 0.33 $\pm$ 0.16 \\
            76 & 0.22 $\pm$ 0.08 \\
            89 & 0.22 $\pm$ 0.07 \\
            24 & 0.21 $\pm$ 0.03 \\
            11 & 0.21 $\pm$ 0.08 \\
            12 & 0.19 $\pm$ 0.14 \\
            15 & 0.19 $\pm$ 0.14 \\
            92 & 0.14 $\pm$ 0.07 \\
            77 & 0.13 $\pm$ 0.04 \\
            13 & 0.05 $\pm$ 0.00 \\
            51 & 0.05 $\pm$ 0.00 \\
            87 & 0.05 $\pm$ 0.02 \\
            90 & 0.00 $\pm$ 0.00 \\
            \bottomrule
        \end{tabular}
        \end{sc}
        \end{small}
        \end{center}
    \end{minipage}
\end{table}

\clearpage

\subsection{Visualizing \orso}

To better visualize how \orso selects the best reward function, discards suboptimal ones efficiently, and thanks to this, explores more reward functions, we provide further visualizations in this section.

\cref{fig:together_orso,fig:together_eureka} show a full training of \orso (D$^3$RB) and the naive selection stratecy (\eureka) with a budget $B=15$ and $K=16$ on the \allegro task, respectively. In both figures, the top plot shows the task reward during training. The colors indicate the reward functions currently in use. The middle plot more clearly shows the reward function being currently used. The vertical axis contains the reward function indices. In both plots, the dashed vertical lines indicate that a resampling has been triggered. Lastly, the bottom plot shows the unnormalized cumulative regret during training.

Comparing the two figures, we can see that \orso initially explores all reward functions near-uniformly, but quickly finds a policy that surpasses the policy from the human-engineered reward function, leading to a decrease in regret. On the other hand, \eureka uniformly trains on each reward function leading the algorithm to explore fewer reward functions. Moreover, we see that the lack of rejection sampling can result in initial reward function sets that contain many invalid reward functions -- indicated by a $\times$ in the figures.

\begin{figure*}[h!]
    \centering
    \includegraphics[width=\linewidth]{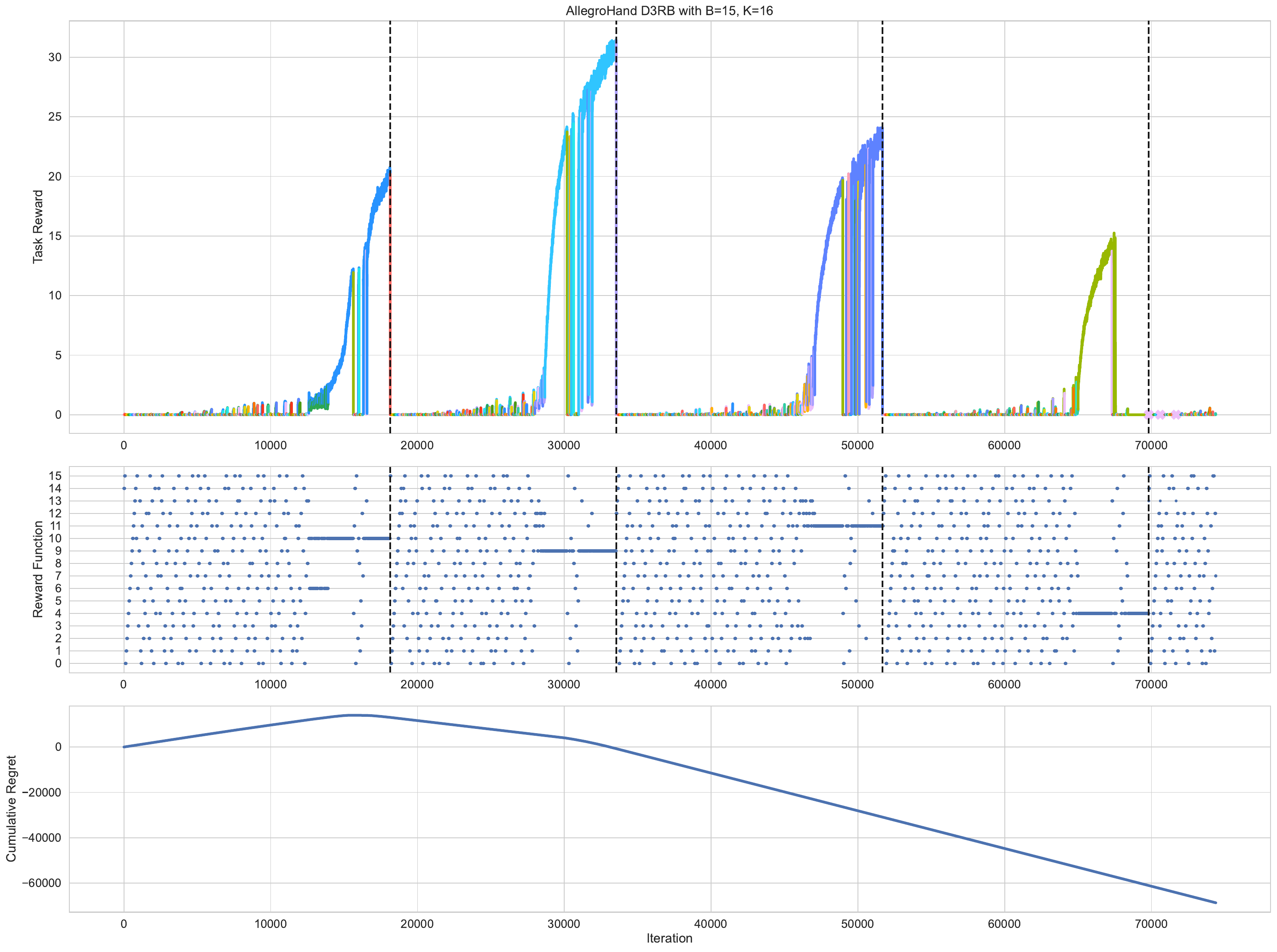}
    \caption{\orso (D$^3$RB) on \allegro with $B=15$ and $K=16$.}
    \label{fig:together_orso}
\end{figure*}

\begin{figure*}
    \centering
    \includegraphics[width=\linewidth]{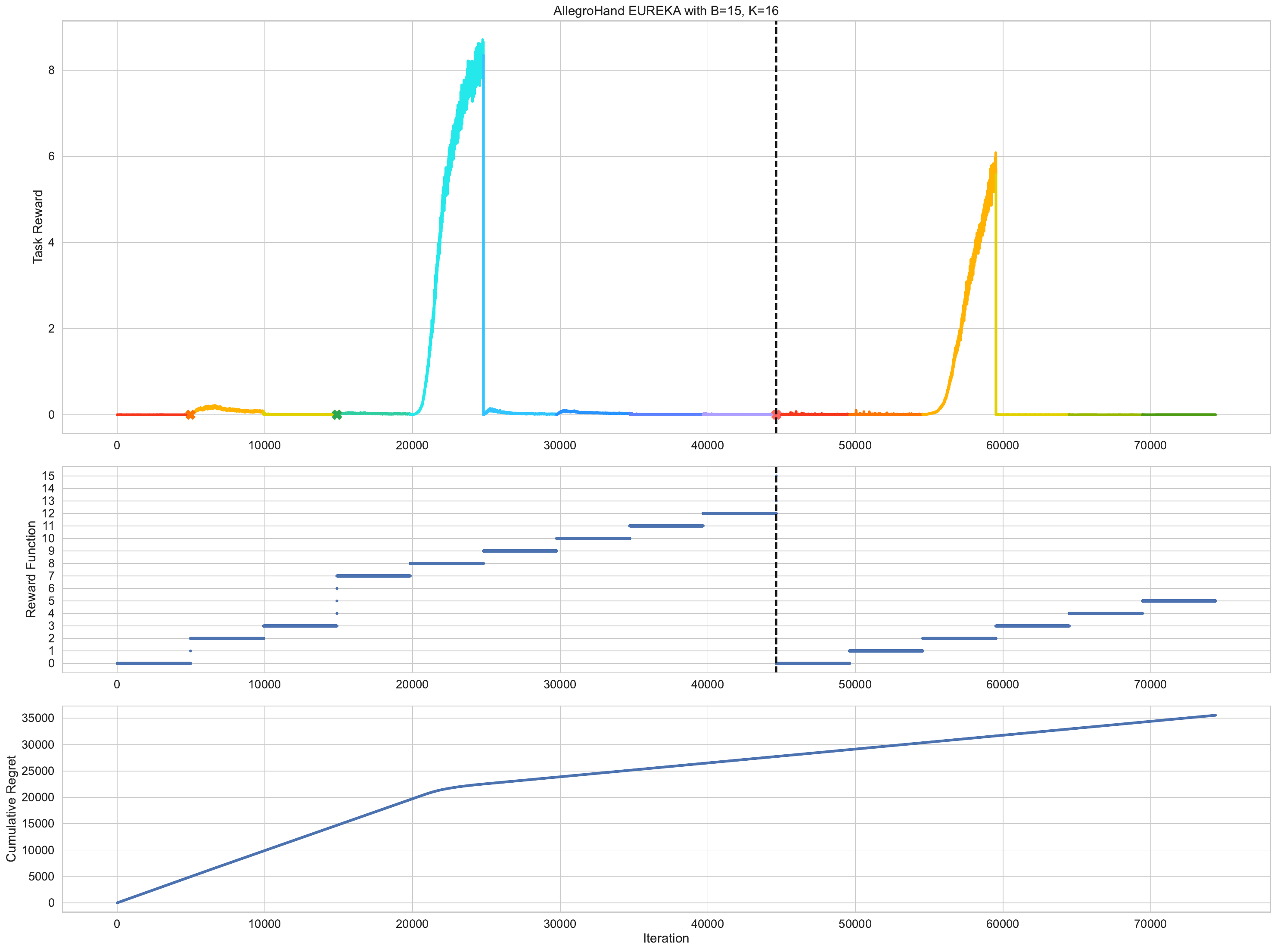}
    \caption{\eureka on \allegro with $B=15$ and $K=16$.}
    \label{fig:together_eureka}
\end{figure*}

\begin{figure*}
    \centering
    \includegraphics[width=\linewidth]{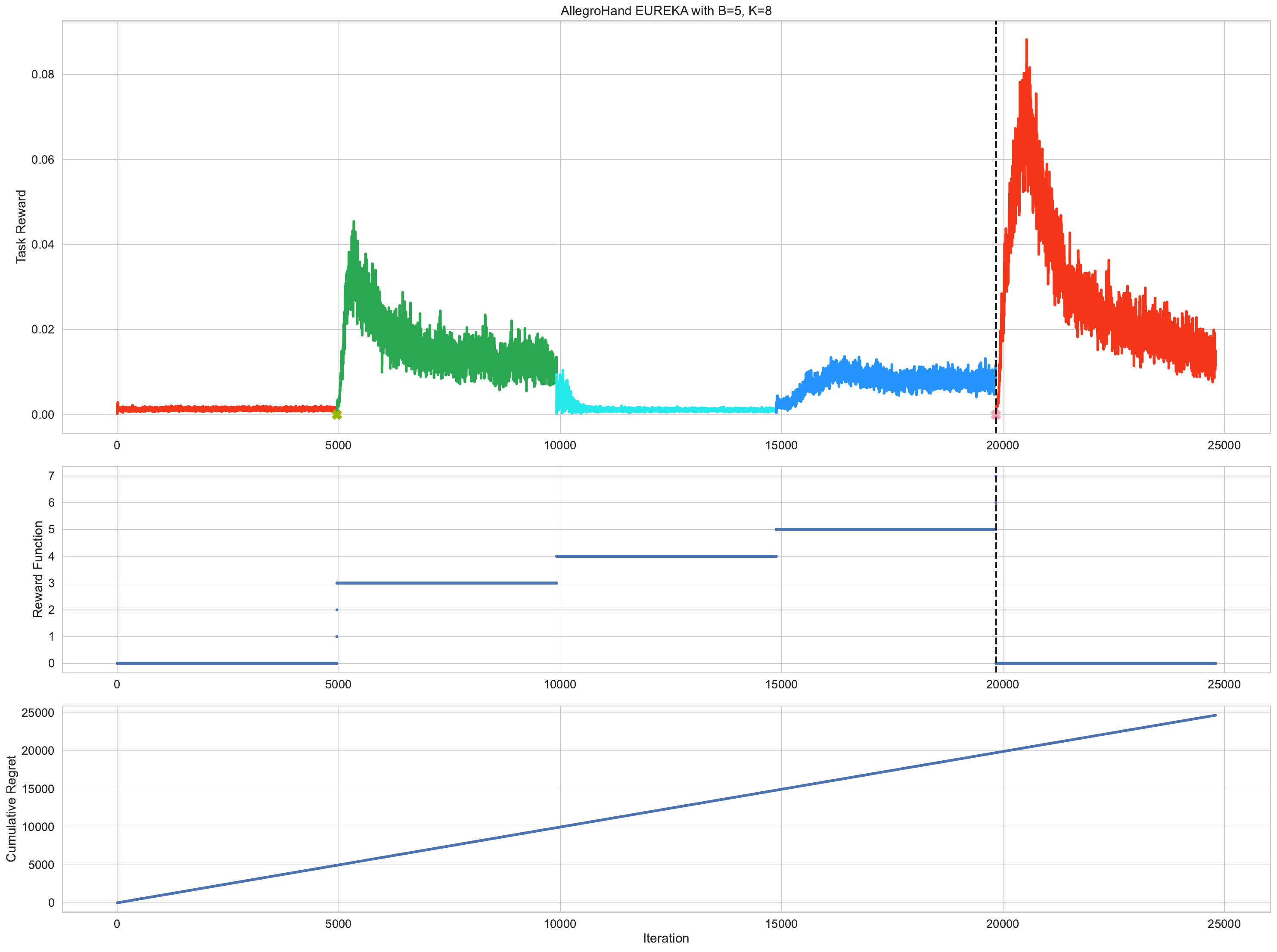}
    \caption{\eureka on \allegro with $B=5$ and $K=8$.}
\end{figure*}

\begin{figure*}
    \centering
    \includegraphics[width=\linewidth]{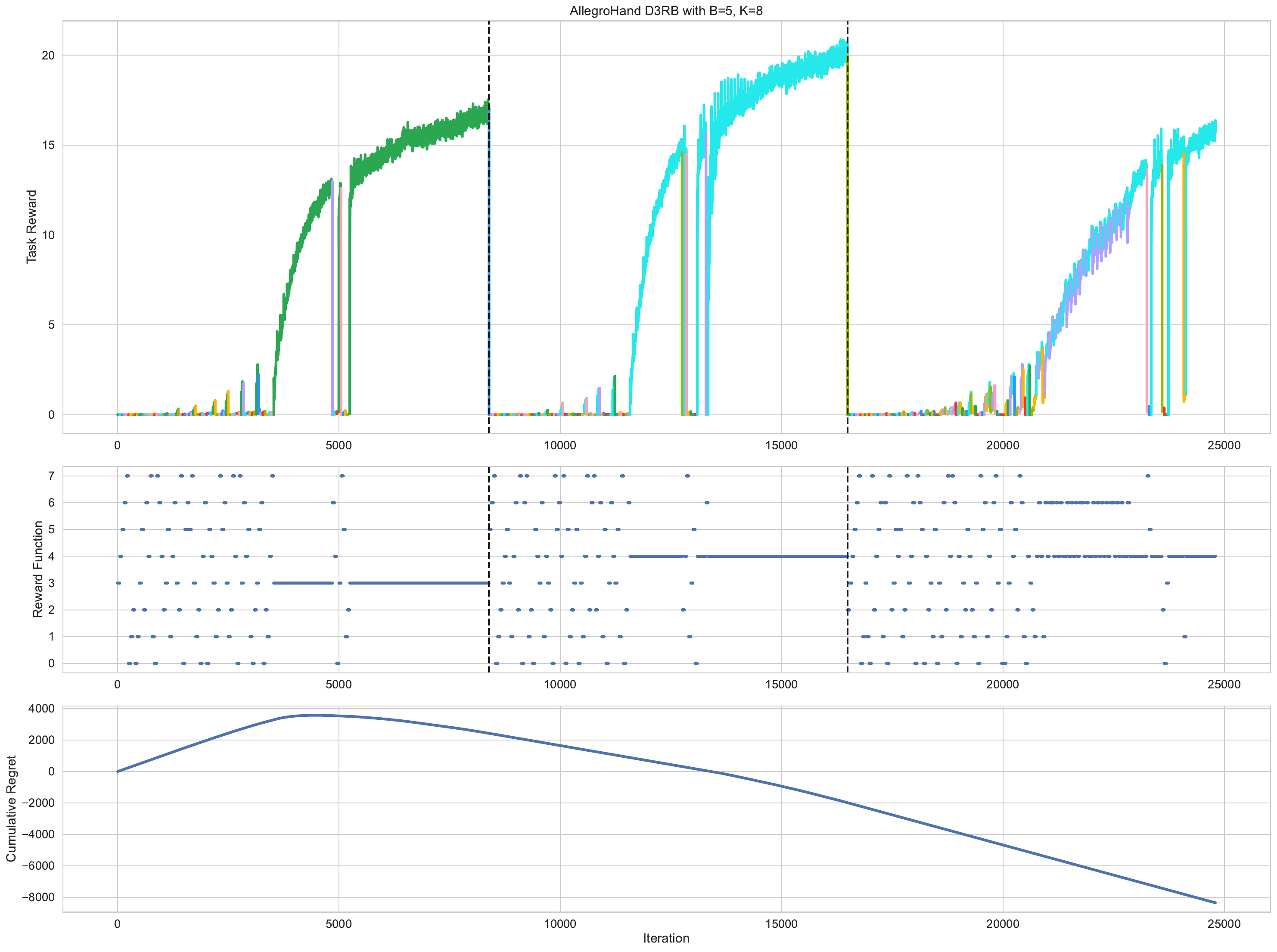}
    \caption{\orso (D$^3$RB) on \allegro with $B=5$ and $K=8$.}
\end{figure*}

\begin{figure*}
    \centering
    \includegraphics[width=\linewidth]{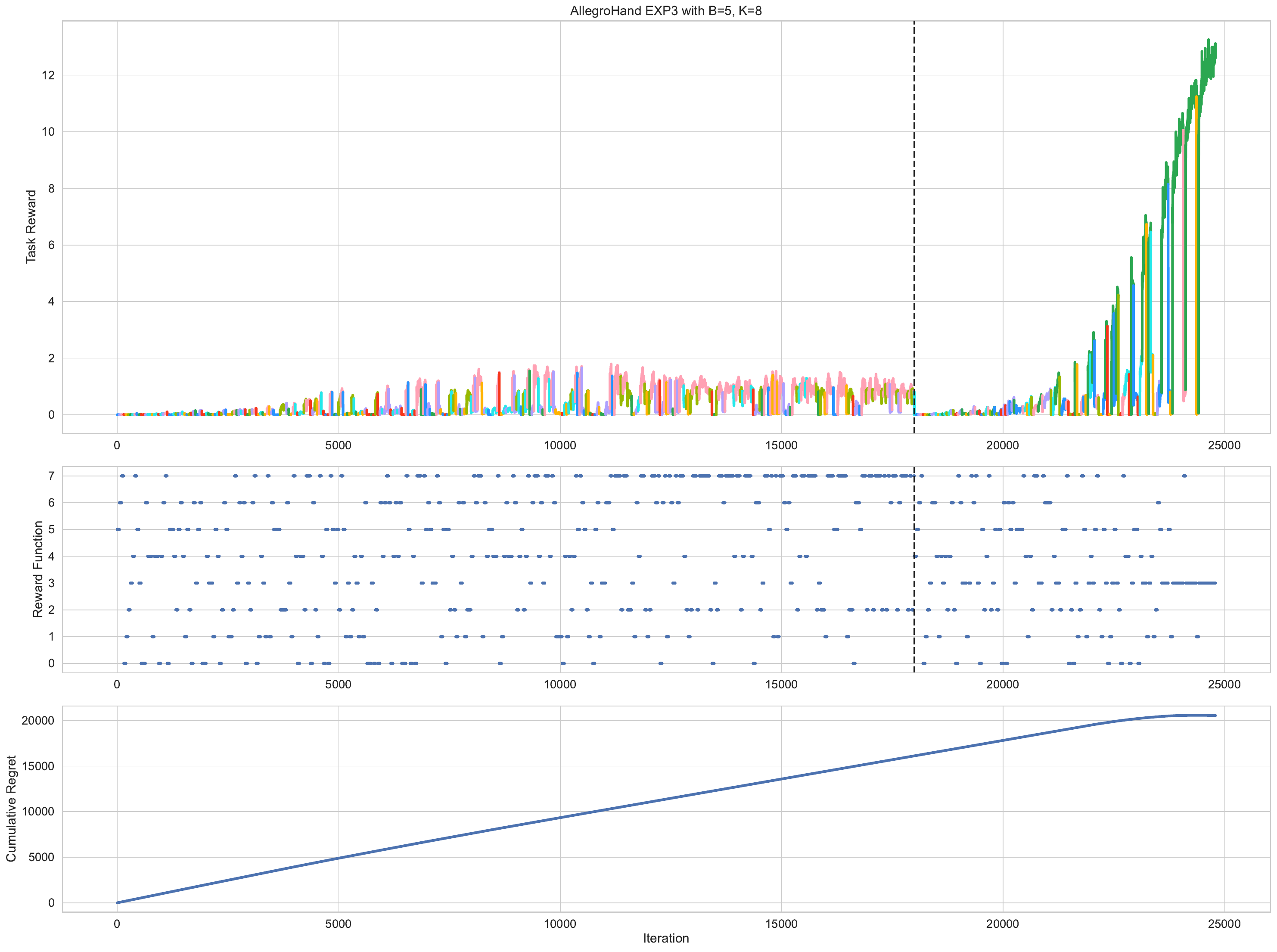}
    \caption{\orso (Exp3) on \allegro with $B=5$ and $K=8$.}
\end{figure*}

\begin{figure*}
    \centering
    \includegraphics[width=\linewidth]{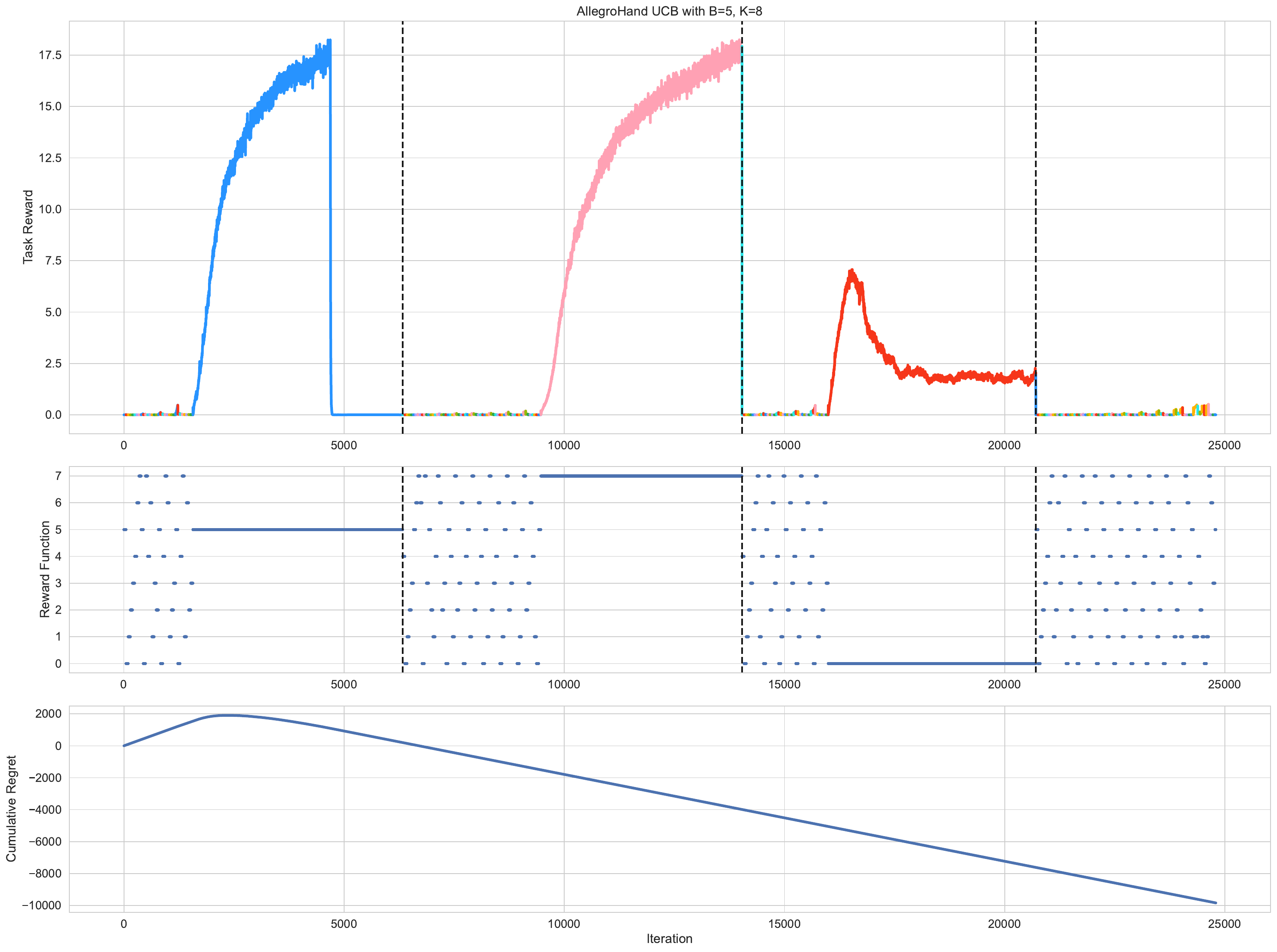}
    \caption{\orso (UCB) on \allegro with $B=5$ and $K=8$.}
\end{figure*}

\begin{figure*}
    \centering
    \includegraphics[width=\linewidth]{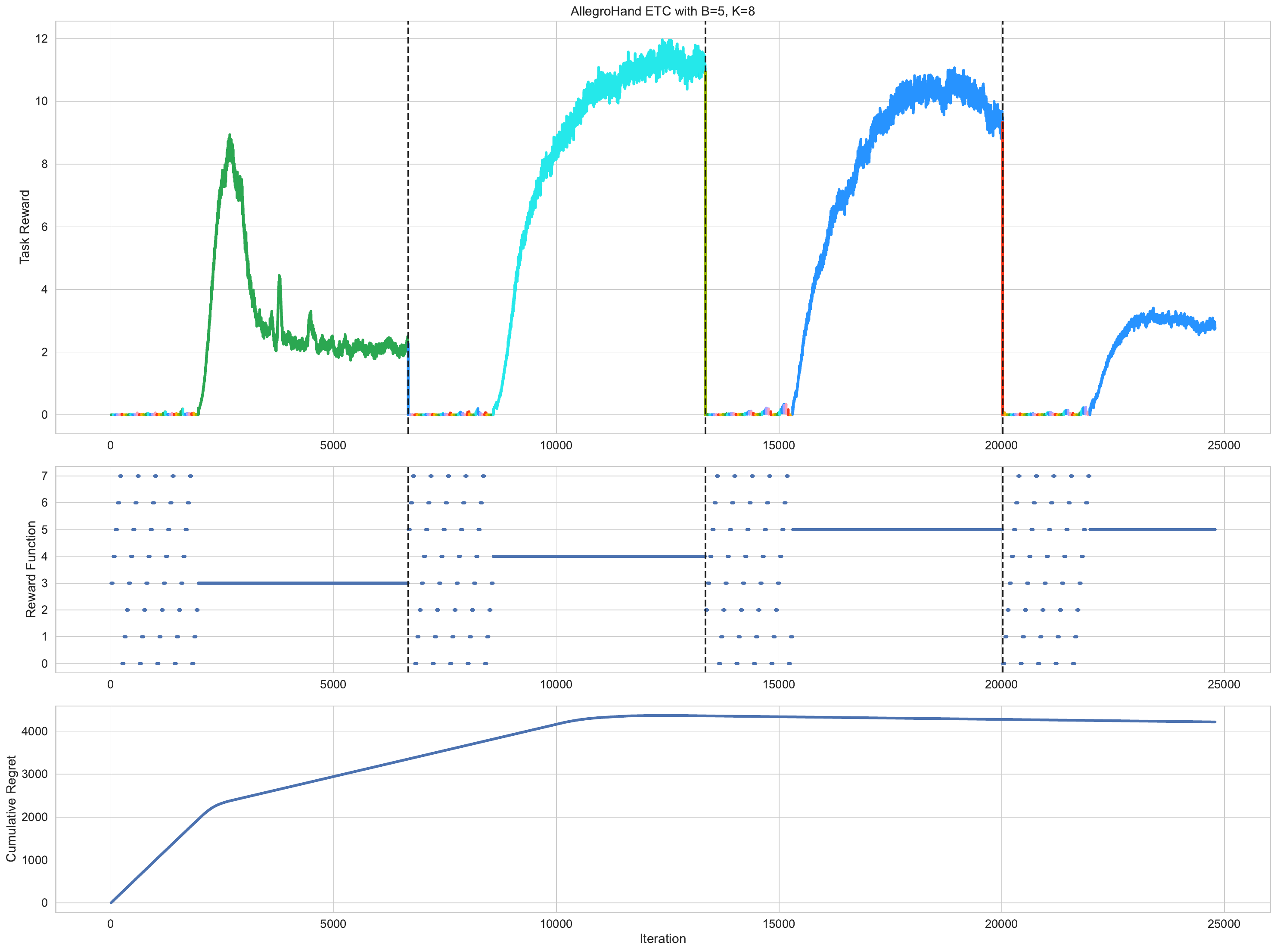}
    \caption{\orso (ETC) on \allegro with $B=5$ and $K=8$.}
\end{figure*}

\begin{figure*}
    \centering
    \includegraphics[width=\linewidth]{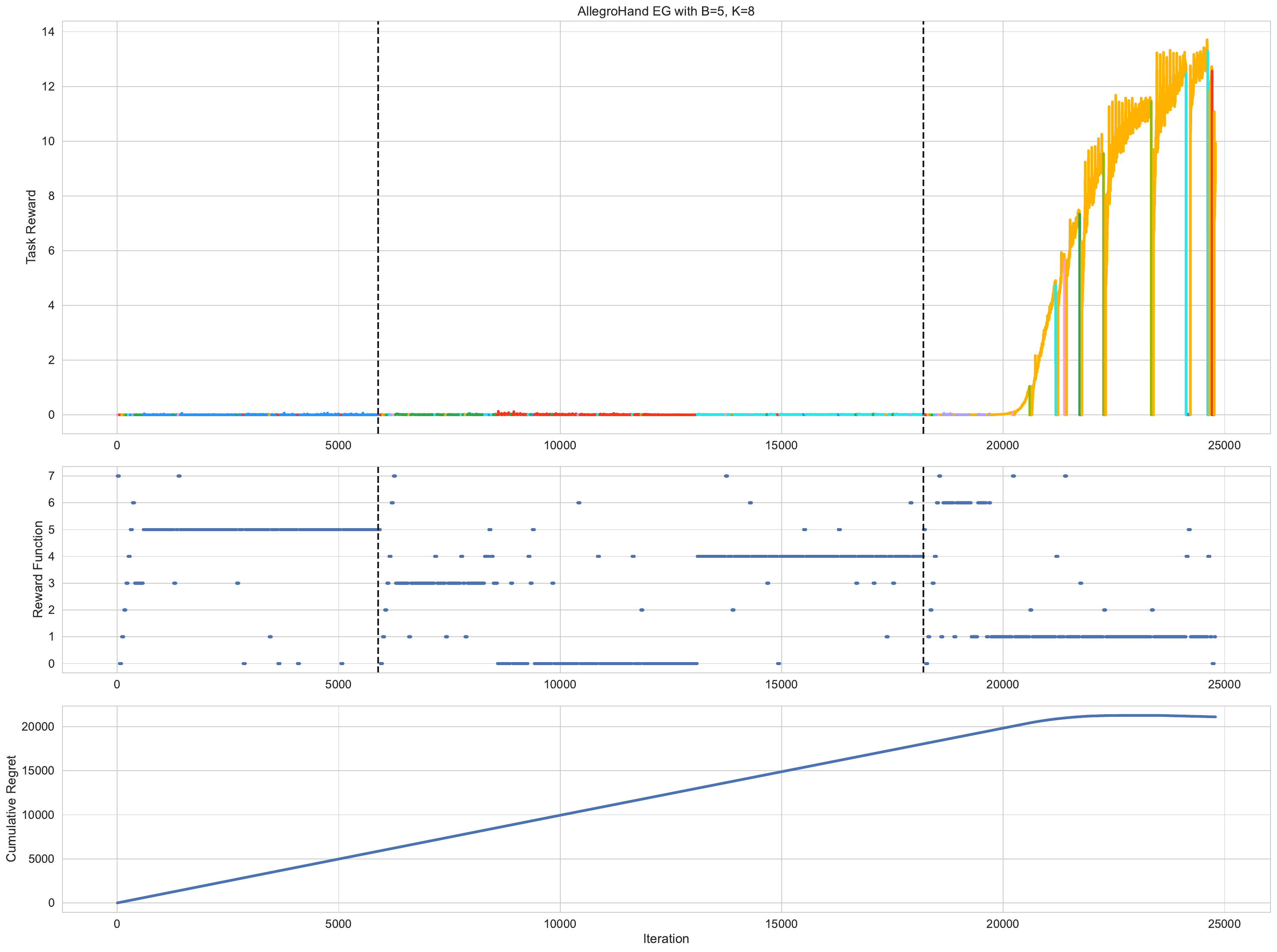}
    \caption{\orso (EG) on \allegro with $B=5$ and $K=8$.}
\end{figure*}


\begin{figure*}
    \centering
    \includegraphics[width=\linewidth]{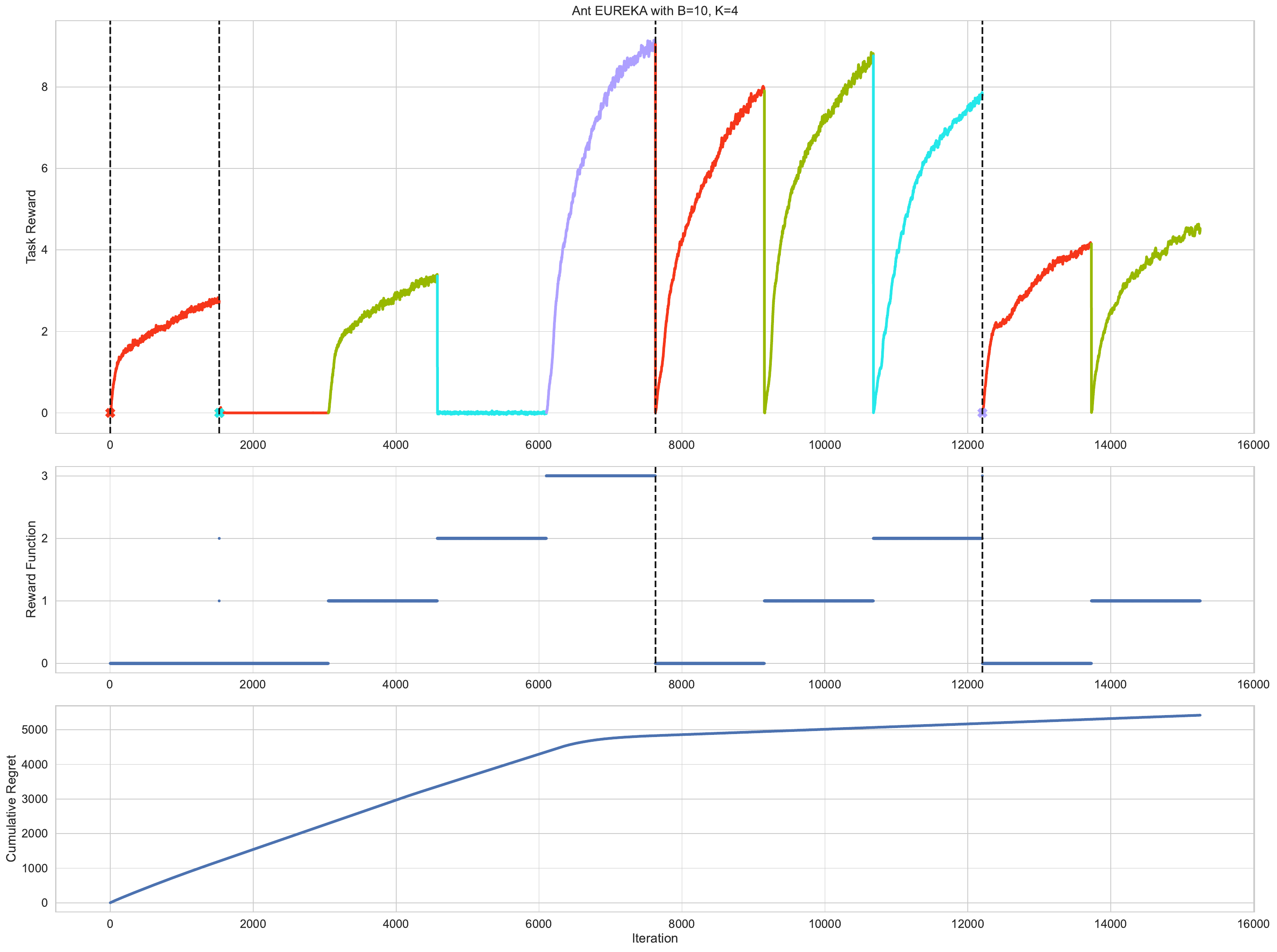}
    \caption{\eureka on \ant with $B=10$ and $K=4$.}
\end{figure*}

\begin{figure*}
    \centering
    \includegraphics[width=\linewidth]{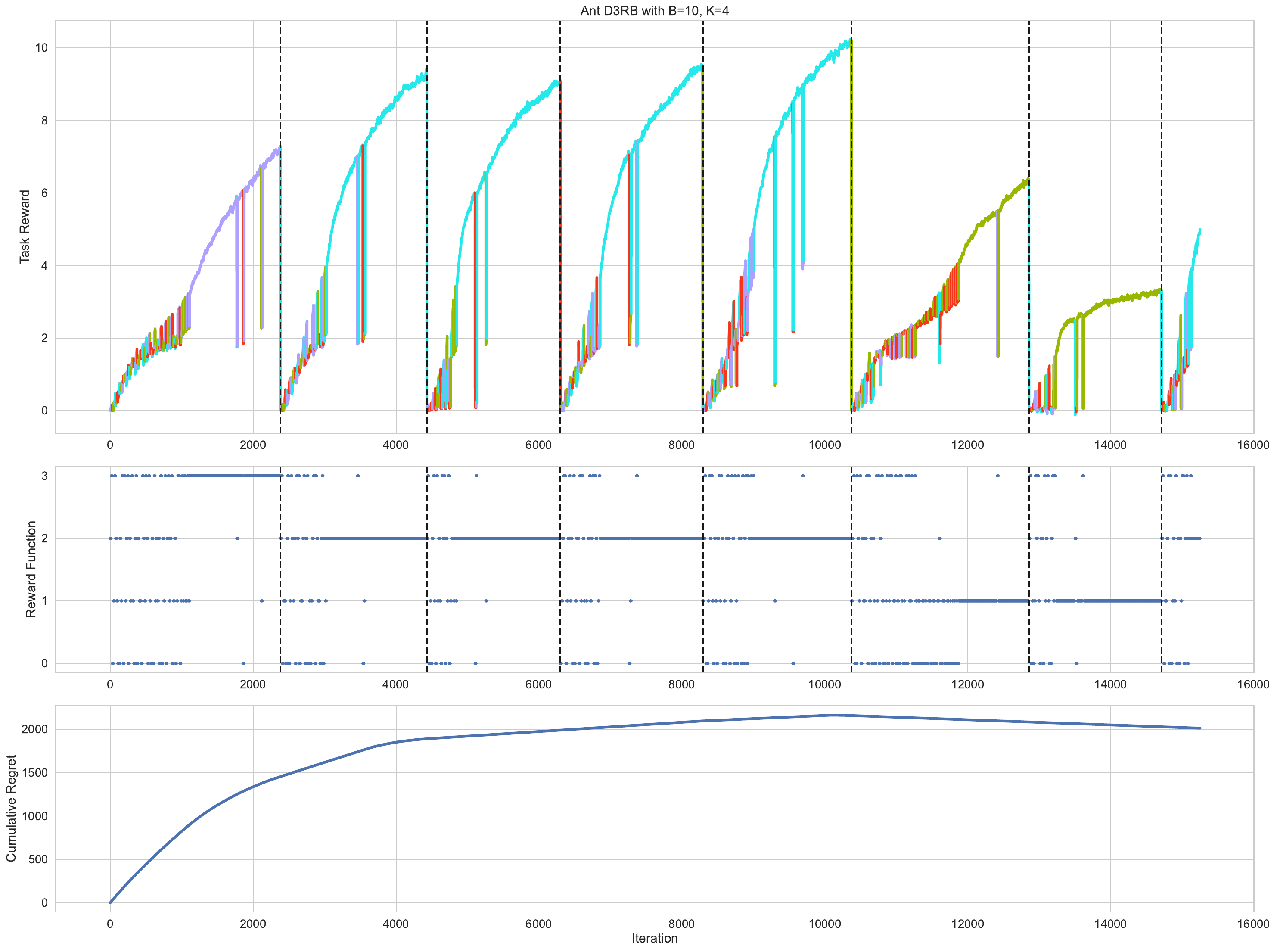}
    \caption{\orso (D$^3$RB) on \ant with $B=10$ and $K=4$.}
\end{figure*}

\begin{figure*}
    \centering
    \includegraphics[width=\linewidth]{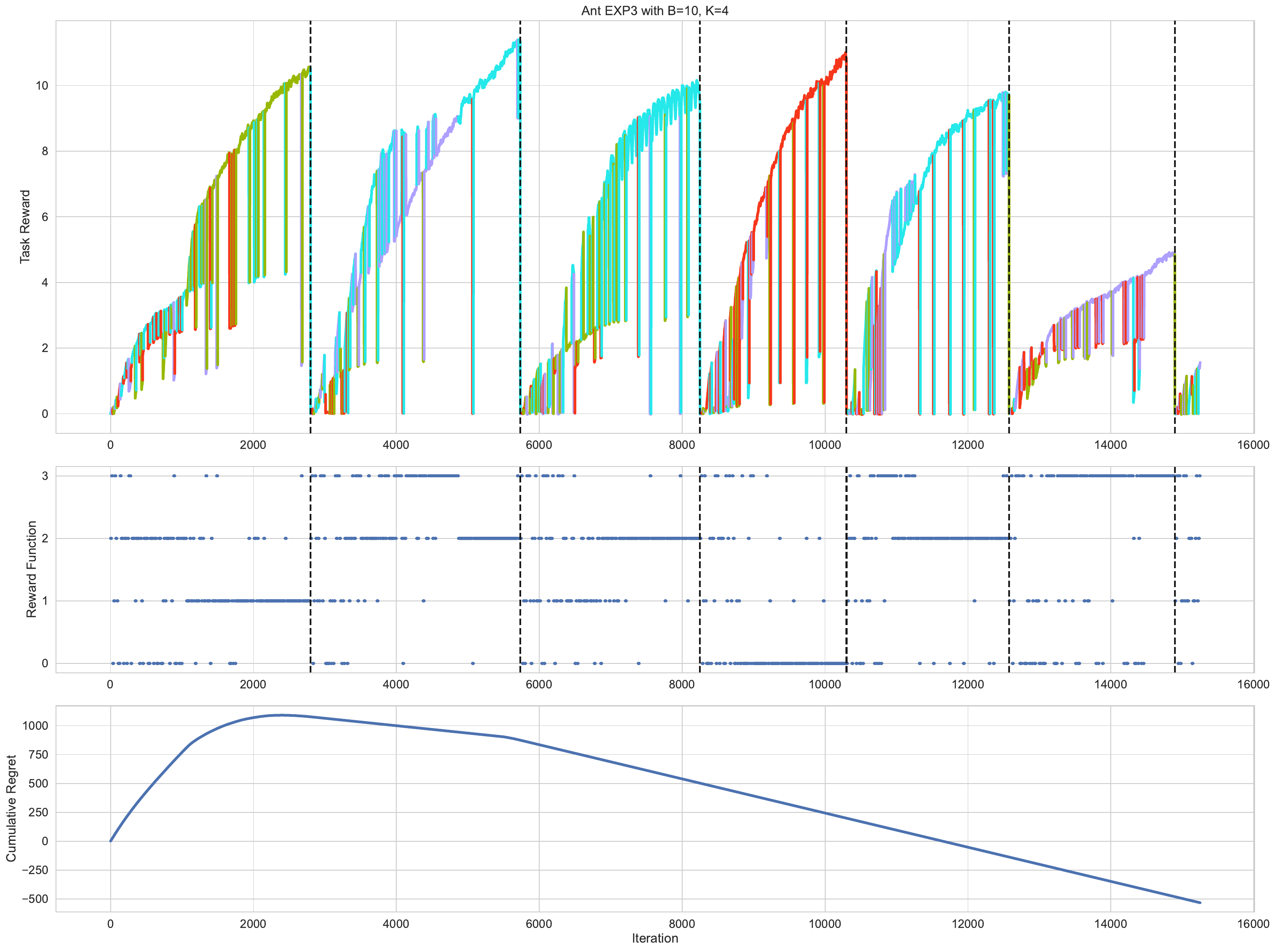}
    \caption{\orso (Exp3) on \ant with $B=10$ and $K=4$.}
\end{figure*}

\begin{figure*}
    \centering
    \includegraphics[width=\linewidth]{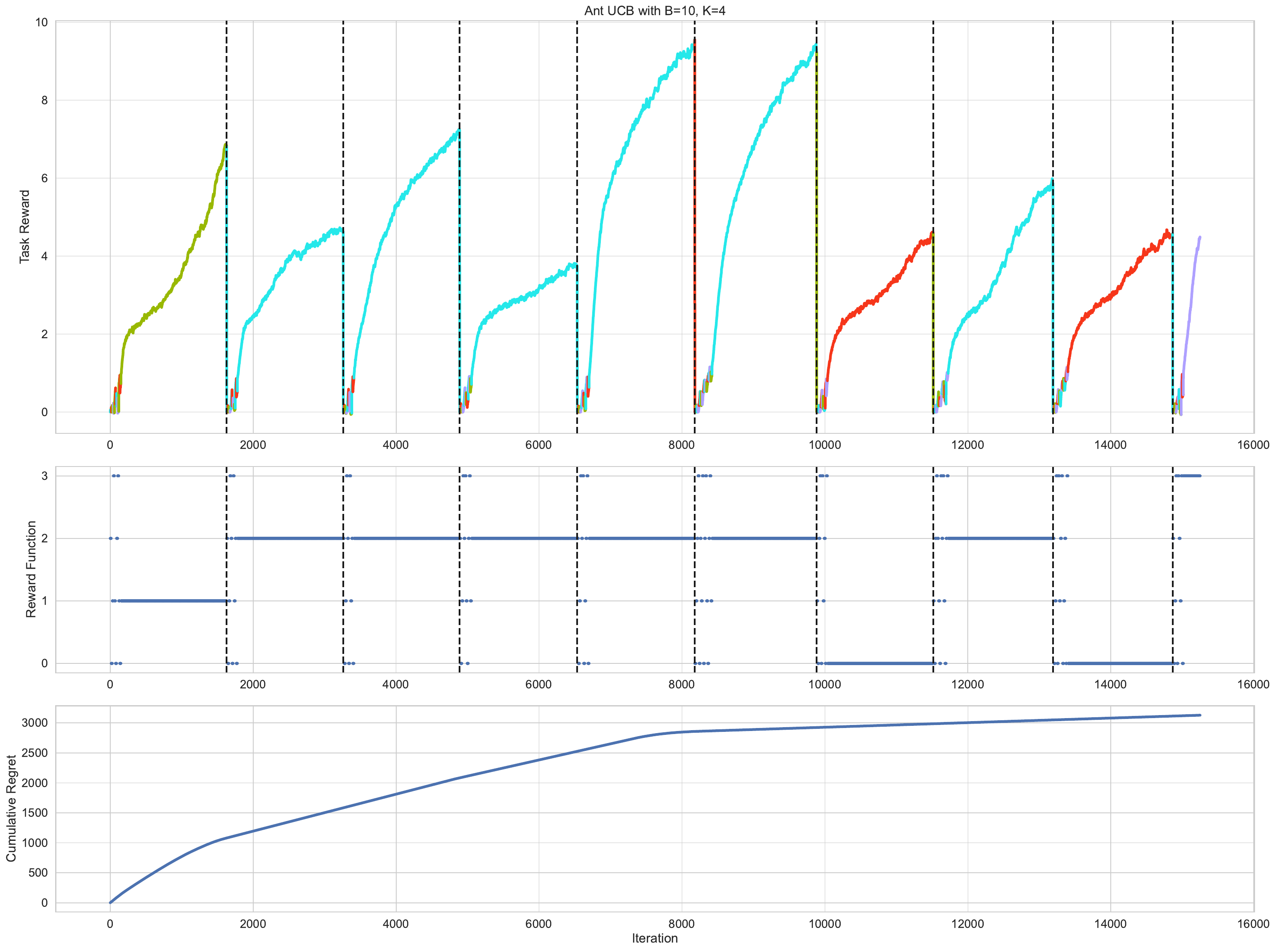}
    \caption{\orso (UCB) on \ant with $B=10$ and $K=4$.}
\end{figure*}

\begin{figure*}
    \centering
    \includegraphics[width=\linewidth]{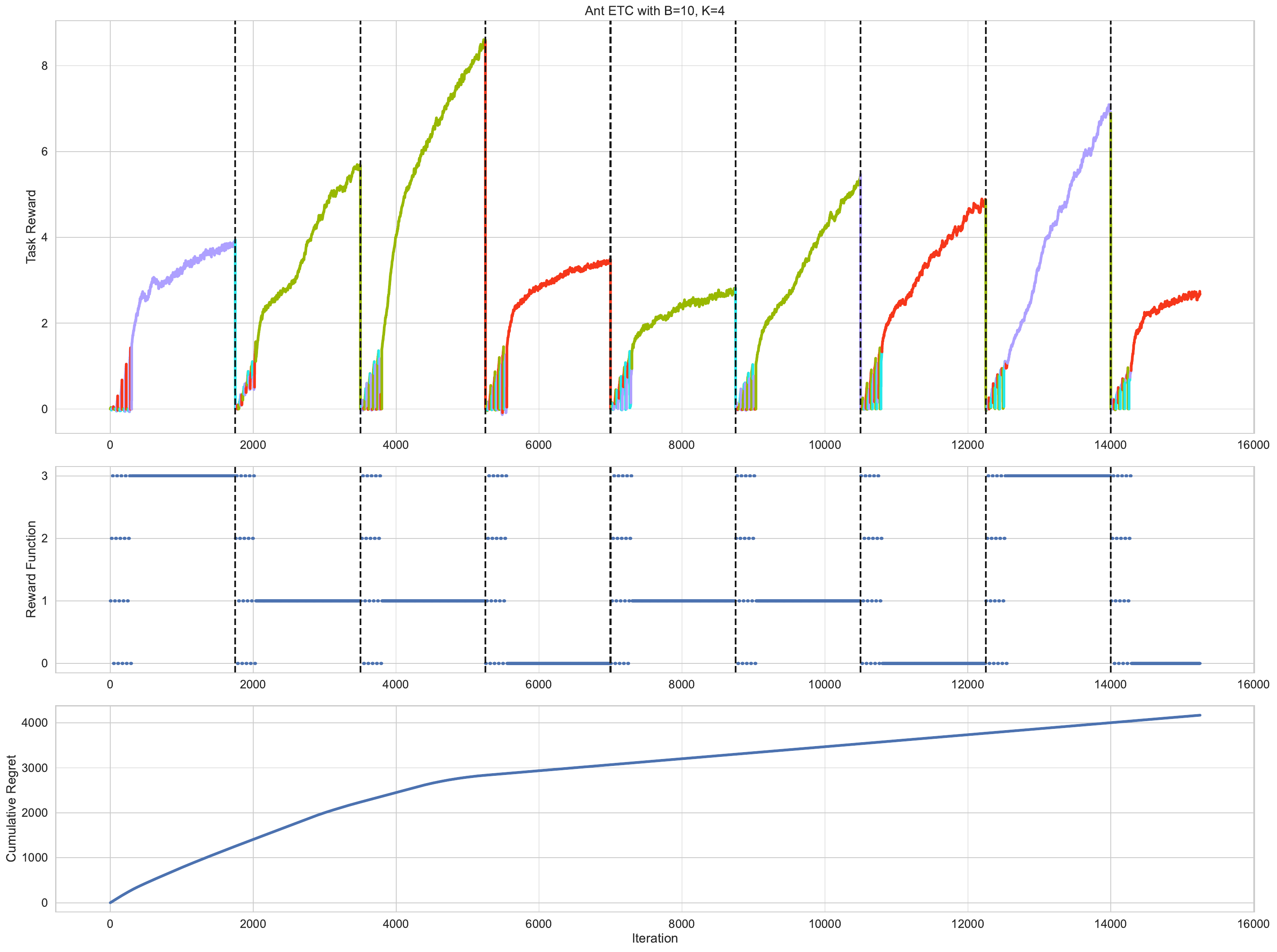}
    \caption{\orso (ETC) on \ant with $B=10$ and $K=4$.}
\end{figure*}

\begin{figure*}
    \centering
    \includegraphics[width=\linewidth]{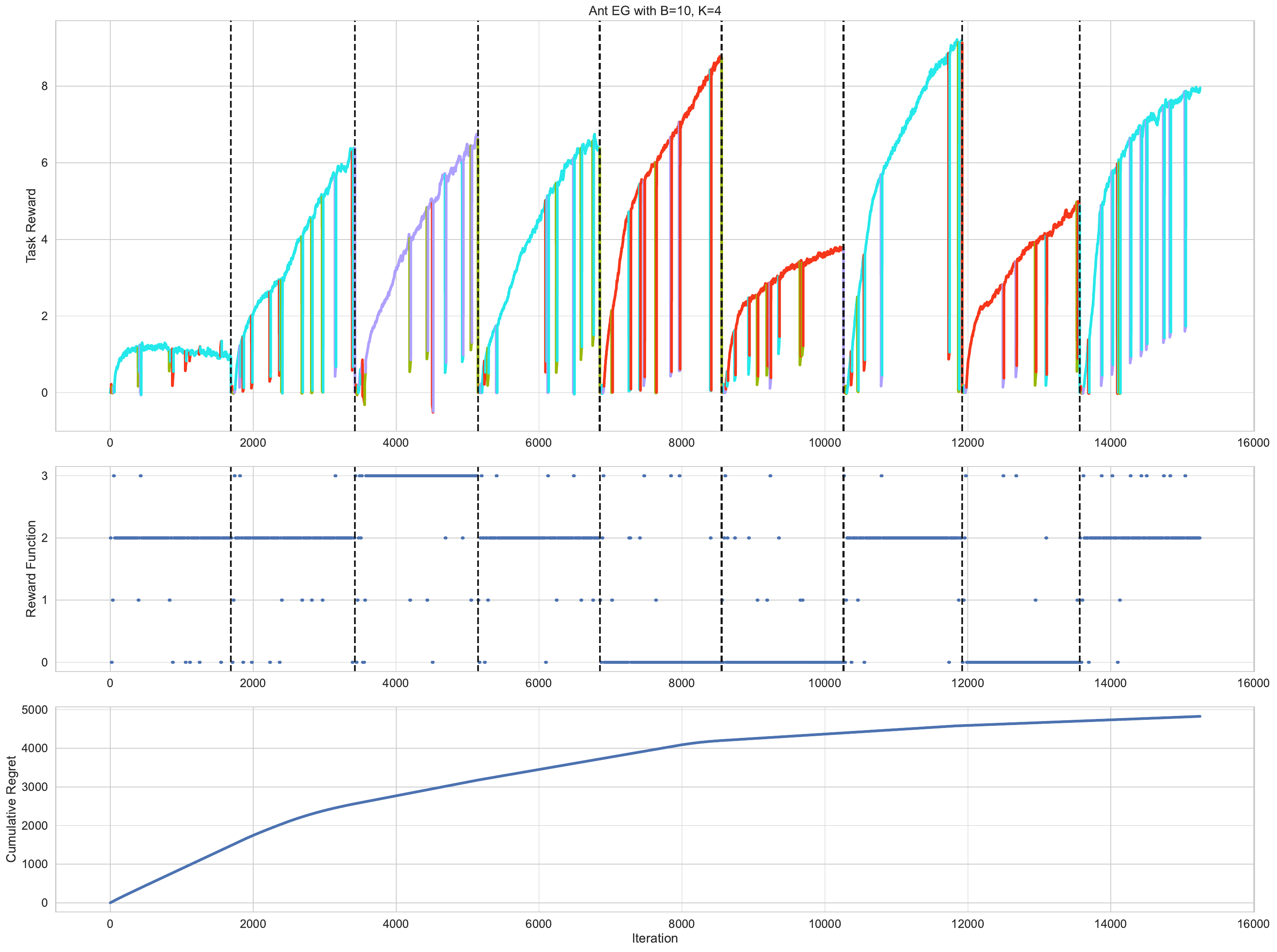}
    \caption{\orso (EG) on \ant with $B=10$ and $K=4$.}
\end{figure*}

\end{document}